\documentclass[preprint,12pt,authoryear]{elsarticle}

\usepackage{amssymb}
\usepackage{amsmath}
\usepackage{amsthm}
\usepackage{enumerate}
\usepackage{graphicx}
\usepackage{subfigure}
\usepackage{setspace}
\usepackage{mathrsfs}
\usepackage{color}
\usepackage{multirow}
\usepackage{array}
\usepackage{pdfpages}
\usepackage{parskip}
\usepackage{tabularx}
\usepackage{arydshln}
\usepackage{booktabs}
\usepackage{threeparttable}
\usepackage{placeins}
\usepackage{makecell}
\usepackage{longtable}
\usepackage[cmyk]{xcolor}

\theoremstyle{thmstyleone}%
\newtheorem{theorem}{Theorem}
\newtheorem{proposition}{Proposition}

\theoremstyle{thmstyletwo}%

\theoremstyle{thmstylethree}%
\newtheorem{definition}{Definition}%

\newcolumntype{C}[1]{>{\centering\arraybackslash}m{#1}}

\journal{Neural Networks}

\begin{document}

\begin{frontmatter}



\title{Minimum Block Width for Universal Approximation by Residual Neural Networks with Inner Width One} 


\author[aff1,aff2]{Qi Zhou}
\ead{qizhou1037@hust.edu.cn}

\author[aff1,aff2]{Xuan Zhou}
\ead{xuanzhou1037@hust.edu.cn}

\author[aff1,aff2]{Xiao-Song Yang\corref{cor1}}
\ead{yangxs@hust.edu.cn}

\cortext[cor1]{Corresponding author}
 
\affiliation[aff1]{organization={School of Mathematics and Statistics},
            addressline={Huazhong University of Science and Technology}, 
            city={Wuhan},
            postcode={430074}, 
            state={Hubei},
            country={P.R. China}}

\affiliation[aff2]{organization={Hubei Key Laboratory of Engineering Modeling and Scientific Computing},
	addressline={Huazhong University of Science and Technology}, 
	city={Wuhan},
	postcode={430074}, 
	state={Hubei},
	country={P.R. China}}
\begin{abstract}
In this paper, we study the universal approximation property of residual neural networks. For input and output dimensions $d_x$ and $d_y$, and LeakyReLU, ReLU, ReLU-like activation functions, the upper and lower bounds of the minimum block width are established. To achieve $L^p$ approximation $(1\leq p <+\infty)$ on any compact set, we show that the exact minimum block width is $\max\{d_x,d_y\}$ when each residual branch has inner width 1. Furthermore, we show that residual neural networks with block width $\min\{d_x+d_y, \max\{2d_x+1,d_y\}\}$ can achieve uniform approximation on any compact set under the constraint that each residual branch has inner width 1. Besides, for any activation function family, we prove that there exist functions that cannot be approximated by residual neural networks with block width less than $\max\{d_x, d_y\}$, both in the $L^p$ sense and the uniform sense, regardless of inner width. Consequently, for LeakyReLU, ReLU, ReLU-like activation functions and $d_y\geq 2d_x+1$, the exact minimum block width for uniform approximation is $d_y$ when each residual branch has inner width 1.
\end{abstract}



\begin{keyword}
Minimum Block Width \sep Residual Neural Network \sep Compact Uniform Approximation \sep Universal Approximation Property

\MSC[2020] 41A46 \sep 68T07 \sep 41A63 \sep 41A65
\end{keyword}

\end{frontmatter}

\section{Introduction}
\subsection{Motivation}
Residual Neural Networks (ResNets) play an important role in machine learning tasks (e.g., image recognition \citep{ResNet_HeKaiming}, object detection \citep{Object_detection}). Thus, the approximation theory of ResNets has become an important topic in the mathematical theory of deep learning. 

Mathematically, a ResNet can be viewed as a composition of residual blocks. Each residual block consists of an identity map and a residual branch, where the residual branch is itself a small neural network. The outputs of these two components are combined by vector addition. Thus, each residual block acts on $\mathbb{R}^w$ according to $\mathcal B(x)=x+\Phi(x)$, where the identity map provides the shortcut connection and $\Phi$ denotes the residual branch. In particular, ResNets have been interpreted as time discretizations of ordinary differential equation (ODE) flows on $\mathbb{R}^{d_x}$ \citep{EWeinan}, and it has been proved that ResNets can approximate every function which can be connected to an identity function through an analytic and monotone homotopy \citep{ResNet_geo_control}. However, recent studies have indicated that flow-based residual transformations may be limited by topological obstructions of the input set \citep{AugmentODE}. As a result, ODE-induced flows may fail to approximate arbitrary continuous maps from $\mathbb{R}^{d_x}$ to itself because of topological obstructions. This motivates the augmented ResNet architecture, which first lifts the input from $\mathbb{R}^{d_x}$ into a higher dimensional ambient space $\mathbb{R}^w$ by an initial linear layer, then performs residual transformations in the enlarged space, and finally projects the transformed input set from $\mathbb{R}^w $ onto the desired output space $\mathbb{R}^{d_y}$ through the final linear layer. Due to the benefits of augmented ResNets \citep{Augment_i_ResNet}, we adopt this augmented ResNet architecture throughout this paper.

In contrast to the width notion for Multi-Layer Perceptrons (MLPs), augmented ResNets have two width parameters. On the one hand, each residual block acts on the ambient space $\mathbb{R}^w$, then we call this dimension the block width $w$. On the other hand, the residual branch inside each block is itself a small neural network, and the width of the residual branch is called the inner width $\bar{w}$ (see Definition \ref{def_ResNetWidth}). These two quantities measure different sources of expressive power. The block width determines the dimension of the ambient space in which the residual blocks act, whereas the inner width measures the complexity of each local nonlinear update.

From the viewpoint of approximation theory, it is natural to ask how large the block width and inner width need to be in order to approximate a prescribed function class. This paper focuses on the extreme case that every residual branch has inner width one, and it leads to the following questions:
\begin{itemize}
	\item Can ResNets with an extremely small inner \textbf{}width $\bar{w}$, even inner width one, still retain universal approximation property if the block width $w$ is chosen appropriately?
	
	\item When the inner width $\bar{w}$ is fixed as 1, how large the block width $w$ must be for $L^p$ approximation of functions in $L^p(K, \mathbb{R}^{d_y})$, and for uniform approximation of functions in $C(K, \mathbb{R}^{d_y})$, respectively, on any compact set $K\subset \mathbb{R}^{d_x}$?
	
	\item Is there a lower bound, such that ResNets with block width below this lower bound fail to achieve $L^p$ approximation of functions in $L^p(K, \mathbb{R}^{d_y})$ and uniform approximation of functions in $C(K, \mathbb{R}^{d_y})$, on any compact set $K\subset\mathbb{R}^{d_x}$?
	
\end{itemize}
Then the purpose of this paper is to answer these questions.

\subsection{Related Work}
Universal Approximation Property (UAP) and minimum width theory for MLPs have been studied in recent years \citep{Hanin, Johnson, Kidger, Park, Cai, Hwang_C0, Kim}. However, the study of ResNets differs from that for MLPs, because a residual block contains both the block width and the inner width of the nonlinear residual branch.

In this line of research, the minimum block width and inner width of ResNets have been studied in several recent works. \citet{ResNet_one-neuron} proved that for maps in $L^1(\mathbb{R}^{d_x}, \mathbb{R})$, ReLU ResNets with a single neuron per hidden layer, i.e., when the inner width is 1, suffice to achieve the UAP. \citet{DL_via_DS} proved that ODENets, which are regarded as the continuous-time counterparts of ResNets, can achieve the $L^p$ approximation of functions in $L^p(K, \mathbb{R}^{d_y})\enspace (p\in [1, +\infty))$ for any compact set $K\subset\mathbb{R}^{d_x}$, $d_x\geq d_y$, and activation functions such as ReLU, Sigmoid, Tanh. \citet{ODEnet_DS} proved that ResNets and ODENets with block width $d_x+d_y$ can uniformly approximate all functions in $C(K, \mathbb{R}^{d_y})$ for any compact set $K\subset\mathbb{R}^{d_x}$, when $d_x\geq d_y$ and the activation function is non-polynomial. Besides, \citet{ResNet_geo_control} established a universal approximation result that ResNets with block width $2\max\{d_x, d_y\}+1$ can uniformly approximate all functions in $C(K, \mathbb{R}^{d_y})$ for any compact set $K\subset \mathbb{R}^{d_x}$, under the assumption that the activation function is monotonic, and one of its derivatives satisfies a quadratic differential equation.

Despite these developments, the minimum block width of ResNets under inner width constraints remains less understood. In particular, it is natural to ask whether the residual branches can be reduced to inner width one while preserving the universal approximation property, and to determine that for a given target map, which block width is sufficient or necessary for $L^p$ approximation and uniform approximation on any compact set.

\subsection{Contributions of this Paper}
Our main contributions can be summarized as follows. For the reader's convenience, we summarize the existing results of ResNets and ODENets, and our contributions in Table 1. 
\begin{itemize}
	\item We focus on the case that inner width is 1 because it represents the most restrictive architecture for the residual branches. In this paper, we show that the inner width of residual branches can be reduced to one without losing the UAP, as long as the block width is chosen appropriately. Our result (Theorem \ref{thm_minwidth_Lp}) extends the output space $\mathbb{R}^{d_y}$ in \citep{ResNet_one-neuron} for $d_y>1$. Besides, it extends the range of activation functions from $\mathrm{ReLU}$ to $\mathrm{LeakyReLU}$ and ReLU-like activation functions (including ELU, CELU, SELU, Softplus, GELU, SiLU, Mish, ReLU6, Softshrink, HardSigmoid, HardTanh and HardSwish).
	
	\item We show that for ResNets with LeakyReLU, ReLU or ReLU-like activation functions, to achieve the $L^p$ approximation on any compact set, the minimum block width is $w_{\min}=\max\{d_x, d_y\}$ (Theorem \ref{thm_minwidth_Lp}). Since the class with inner width one is contained in the class with larger inner width, the same upper bounds also hold for every $\bar w\ge 1$. This result removes the dimensional restriction $d_x \geq d_y$ in \citep{DL_via_DS}.
	
	\item We show that for ResNets with LeakyReLU, ReLU or ReLU-like activation functions, to achieve the uniform approximation of continuous functions on any compact set, the minimum block width satisfies $\max\{d_x, d_y\}\leq w_{\min}\leq\min\{d_x+d_y, \max\{2d_x+1,d_y\}\}$ (Theorem \ref{thm_minwidth_C}). The same block-width result holds for every inner width $\bar w\ge1$. Our upper bound is strictly smaller than the result in \citep{ResNet_geo_control} for all $d_x , d_y\in \mathbb{N}_+$, and our activation functions are not required to be monotonic. Furthermore, compared with the result in \citep{ODEnet_DS}, our result recovers the upper bound $d_x+d_y$ in their regime $d_x\geq d_y$ and extends this result to arbitrary input and output dimensions. Moreover, when $d_y\geq d_x+2$, our upper bound is strictly smaller than $d_x+d_y$.
	
	\item We show that ResNets with block width less than $\max\{d_x, d_y\}$ cannot approximate all target functions on any compact set, both in the $L^p$ sense and the uniform sense, regardless of the activation family, and the same lower bound holds for every inner width $\bar{w}\geq 1$. Consequently, for activation functions LeakyReLU, ReLU or ReLU-like activation functions, when $d_y\geq 2d_x+1$, the exact minimum block width for uniform approximation is $d_y$.
\end{itemize}

\subsection{Organization}
The remainder of this paper is organized as follows. Section 2 introduces the notations and definitions throughout this paper. Section 3 proves the fundamental approximation properties of compositions of residual blocks with inner width one, including the approximation of affine transformations and coordinate-wise nonlinear maps. Section 4 establishes the minimum block width bounds for $L^p$ approximation and uniform approximation. Finally, Section 5 discusses several possible directions for future work.

\begin{table}[!htbp]
	\caption{Comparison of width requirements for ResNets and ODENets. 
		$K$ denotes a compact set in $\mathbb{R}^{d_x}$, and the output space is $\mathbb{R}^{d_y}$.}
	\label{table_summary_results}
	\scriptsize
	\centering
	\setlength{\tabcolsep}{3pt}
	\renewcommand{\arraystretch}{1.45}
	\begin{tabular}{
			@{} 
			C{0.13\textwidth}
			C{0.14\textwidth}
			C{0.18\textwidth}
			C{0.30\textwidth}
			C{0.18\textwidth}
			@{}}
		\toprule
		\multicolumn{5}{c}{\textbf{Existing Results of ResNets and ODENets}}\\
		\midrule
		\textbf{References} 
		& \textbf{Domain} 
		& \textbf{Activation} 
		& \textbf{Block Width} 
		& \textbf{Inner Width} \\
		\midrule
		
		\cite{ResNet_one-neuron}
		&
		$L^1(K,\mathbb{R})$
		&
		ReLU
		&
		$w_{\mathrm{min}}\leq d_x$\textsuperscript{(2)}
		&
		$\bar{w}=1$\\
		\midrule
		
		\cite{DL_via_DS}
		&
		$L^p(K,\mathbb{R}^{d_y})$\newline
		($d_x\geq d_y$)
		&
		ReLU,\newline
		Sigmoid,\newline
		Tanh, etc.
		&
		$w_{\mathrm{min}}\leq d_x$\textsuperscript{(2)}
		&
		not discussed
		\\
		\midrule
		
		\cite{ODEnet_DS}
		&
		$C(K,\mathbb{R}^{d_y})$\newline
		$(d_x\geq d_y)$
		&
		Non-polynomial
		&
		$w_{\mathrm{min}}\leq d_x+d_y$\textsuperscript{(2)}
		&
		not discussed
		\\
		\midrule

		\cite{ResNet_geo_control}
		&
		$C(K,\mathbb{R}^{d_y})$
		&
		Lipschitz\newline
		continuous\textsuperscript{(1)}
		&
		$w_{\mathrm{min}}\leq 2\max\{d_x,d_y\}+1$\textsuperscript{(2)}
		&
		not discussed
		\\
		\bottomrule
		\addlinespace[1.5em]
		\toprule
		\multicolumn{5}{c}{\textbf{Ours}}\\
		\midrule
		\textbf{References} 
		& \textbf{Domain} 
		& \textbf{Activation} 
		& \textbf{Block Width} 
		& \textbf{Inner width} \\
		\midrule
		
		Ours\newline
		(Theorem~\ref{thm_minwidth_Lp})
		&
		$L^p(K,\mathbb{R}^{d_y})$\newline
		&
		LeakyReLU,\newline
		ReLU,\newline
		ReLU-like\textsuperscript{(3)}
		&
		$w_{\mathrm{min}}=\max\{d_x,d_y\}$
		&
		$\bar{w}=1$
		\\
		\midrule
		
		Ours\newline
		(Theorem~\ref{thm_minwidth_C})
		&
		$C(K,\mathbb{R}^{d_y})$
		&
		LeakyReLU,\newline
		ReLU,\newline
		ReLU-like\textsuperscript{(3)}
		&
		$\max\{d_x,d_y\}\leq w_{\mathrm{min}}\leq$\newline
		$\min\{d_x+d_y,\max\{$ \newline $2d_x+1,d_y\}\}$
		&
		$\bar{w}=1$
		\\
	    \midrule
		
		Ours\newline
		(Theorem~\ref{thm_minwidth_C})
		&
		$C(K,\mathbb{R}^{d_y})$\newline
		($d_y\geq 2d_x+1$)
		&
		LeakyReLU,\newline
		ReLU,\newline
		ReLU-like\textsuperscript{(3)}
		&
		$w_{\mathrm{min}}=d_y$
		&
		$\bar{w}=1$
		\\
		\bottomrule
	\end{tabular}
	
	\vspace{0.4em}
	
	\begin{minipage}{0.98\textwidth}
		\scriptsize
		\textsuperscript{(1)} Activation function is Lipschitz, monotonic, and one of its derivatives satisfies a quadratic differential equation.\newline
		\textsuperscript{(2)}  They only provide the sufficient upper bound, without discussing the lower bound.\newline
		\textsuperscript{(3)} ReLU-like functions include ELU, CELU, SELU, Softplus, GELU, SiLU, Mish, ReLU6, Softshrink, HardSigmoid, HardTanh and HardSwish.
	\end{minipage}
\end{table}

\FloatBarrier
\section{Preliminaries}
In this section, we introduce the mathematical notation throughout this paper and give the formal mathematical definition about the ResNet.

\subsection{Notations and Definitions}
We first introduce the notations and definitions throughout this paper.
\begin{itemize}
	\item \textbf{Column Vector}: $x=(x_1, x_2, \cdots, x_n)$ denotes an $n$ dimensional column vector unless otherwise stated, where $x_i$ is the $i$-th component of $x$. For convenience, we identify $R^{k}$ as the $k$ dimensional column vector space $R^{k\times 1}$.
	
	\item \textbf{$\mathbf{sup}$ norm and $L^p$ norm of $f$}: Let $\mu$ be the Lebesgue measure on $\mathbb{R}^n$, for a measurable mapping $f:\mathbb{R}^n \to \mathbb{R}^m$, for any compact measurable set $A\subset\mathbb{R}^n$, we can define the $\mathrm{sup}$ norm of $f$ on $A$ by:
	\begin{align*}
		\vert\vert f \vert\vert_{\mathrm{sup},A} = \sup_{x\in A}\vert\vert f(x)\vert\vert
	\end{align*}
	where $\vert\vert * \vert\vert$ is the Euclidean norm of $\mathbb{R}^m$. For $1\leq p <+\infty$, we can also define the $L^p$ norm of $f$ on $A$ by
	\begin{align*}
		\vert\vert f \vert\vert_{p,A} = \left( \int_{A} \vert\vert f(x) \vert\vert^p d\mu\right)^{\frac{1}{p}}.
	\end{align*}
	
	\item \textbf{Local $L^p$ Space}: For measurable sets $X\subset\mathbb{R}^{d_x}, Y\subset \mathbb{R}^{d_y}$. We denote the local $L^p$ space by
	\begin{align*}
		L^p_{\mathrm{loc}}(X, Y):=\{f\vert \text{$f:X\to Y$ is measurable}, \forall\text{compact} K, \vert\vert f\vert\vert_{p, K}<\infty\}.
	\end{align*}
	It is easy to verify that $C(X, Y)\subset L^p_{\mathrm{loc}}(X, Y)$. Besides, when $X$ is a compact set, $L^p_{\mathrm{loc}}(X, Y)=L^p(X, Y)$.
	
	\item \textbf{Compactly Approximate}: For measurable sets $X\subset\mathbb{R}^n$, $Y\subset\mathbb{R}^m$, and function classes $\mathcal{F}\subset C(X, Y), \mathcal{G}\subset C(X, Y)$, we write $\mathcal{G} \underset{\mathrm{sup}}{\prec} \mathcal{F}$ if for any $g \in \mathcal{G}$, $\varepsilon >0$ and any compact set $K\subset X$, there exists $f\in \mathcal{F}$ such that $\vert\vert f -g \vert\vert_{\mathrm{sup},K} <\varepsilon$. For $\mathcal{F}\subset L^p_{\mathrm{loc}}(X, Y), \mathcal{G}\subset L^p_{\mathrm{loc}}(X, Y)$, we define $\mathcal{G}\underset{p}{\prec} \mathcal{F}$ if for any $g\in \mathcal{G}$,$\varepsilon>0$ and any compact set $K\subset X$, there exists $f\in \mathcal{F}$ such that $\vert\vert f-g\vert\vert_{p, K}< \varepsilon$. For notational convenience, we use the notations $g \underset{\mathrm{sup}}{\prec} \mathcal{F}$, $g \underset{p}{\prec} \mathcal{F}$ to denote $\{g\}\underset{\mathrm{sup}}{\prec} \mathcal{F}$ and $\{g\}\underset{p}{\prec} \mathcal{F}$, respectively.
	
	By the triangle inequality for the sup norm and $L^p$ norm, it is easy to verify the transition property:
	\begin{itemize}
	 	\item[(a)] For $\mathcal{F}, \mathcal{G}, \mathcal{H}\subset C(X, Y)$, if $\mathcal{G}\underset{\mathrm{sup}}{\prec}\mathcal{F}$ and $\mathcal{H}\underset{\mathrm{sup}}{\prec}\mathcal{G}$, then $\mathcal{H}\underset{\mathrm{sup}}{\prec}\mathcal{F}$.
	 	
	 	\item[(b)] For $\mathcal{F}, \mathcal{G}, \mathcal{H}\subset L^p_{\mathrm{loc}}(X, Y)$, if $\mathcal{G}\underset{p}{\prec}\mathcal{F}$ and $\mathcal{H}\underset{p}{\prec}\mathcal{G}$, then $\mathcal{H}\underset{p}{\prec}\mathcal{F}$.
	 \end{itemize}
		
	\item \textbf{Composition of Function Classes}: For function classes $\mathcal{F}_i \enspace (i\in \{1, 2, \cdots, N\})$, then we define the composition of function classes $\mathcal{F}_N\circ\cdots\circ \mathcal{F}_1$ by
	\begin{align*}
		\mathcal{F}_N\circ\cdots\circ \mathcal{F}_1:=\{f_N\circ \cdots\circ f_2\circ f_1\vert f_i\in \mathcal{F}_i, i\in \{1, 2, \cdots, N\}\}.
	\end{align*}
	
	\item \textbf{Canonical Coordinate Map}: For $n, m\in \mathbb{N}_+$, we define the canonical coordinate map $\pi_{n, m}:\mathbb{R}^n\to \mathbb{R}^m$ as follows:
	$$
	\pi_{n, m}(x_1, \cdots, x_n)=\left\{
	\begin{aligned}
		&(x_1, \cdots, x_m), & n\geq m,\\
		&(x_1, \cdots, x_n, 0, \cdots, 0), & n< m.
	\end{aligned}
	\right.
	$$
\end{itemize}
Throughout this paper, $I_n$ denotes the $n\times n$ identity matrix, and $E_{n\times m,i,j}$ denotes the elementary matrix whose $(i,j)$-entry is one. We denote $\mathbf{1}_{n\times m}$ and $\mathbf{0}_{n\times m}$ as the $n\times m$ matrix whose entries are all 1 and all 0, respectively. We also denote the $n\times m$ matrix whose diagonal $(i, i)$-entries are all 1 by $\mathrm{Diag}_{n, m}$. We denote $\mathrm{Id}_n:\mathbb{R}^n\to \mathbb{R}^n$ as the $n$ dimensional identity map $\mathrm{Id}(x_1, x_2, \cdots, x_n)= (x_1, x_2, \cdots, x_n)$.

For the set of affine transformations, we write
\begin{align*}
	T_{W, B}(x)&:=Wx+B,\enspace \mathrm{Aff}_{n,m}:=\{T_{W,B}\vert W\in\mathbb R^{m\times n},\ B\in\mathbb R^m\},\\
	\mathrm{IAff}_{n, m}&:=\{T_{W, B}\vert W\in\mathbb{R}^{m\times n}, \mathrm{rank}(W)=\min\{n, m\}, B\in \mathbb{R}^m\}.
\end{align*}

We formally introduce our definition of activation functions.
\begin{definition}[Activation Function]\label{def_activationfunc}
	Activation function $\sigma_0:\mathbb{R}\to \mathbb{R}$ is a piecewise $C^1$ function that possesses at least one point $\alpha$ such that $\sigma_0^\prime(\alpha) \neq 0$. For notational convenience, it can be applied to vector valued functions  as componentwise operators $\mathbb{R}^n \to \mathbb{R}^n$:
	\begin{align*}
		\sigma_0(x_1, \cdots, x_n):= (\sigma_0(x_1), \cdots, \sigma_0(x_n)).
	\end{align*}
\end{definition}
Several common activation functions are listed in Appendix \ref{appendix_activationfunc}. Some families of activation functions, denoted by $\sigma_\beta$, depend on an additional parameter $\beta\in \Lambda$, nevertheless, we regard them as belonging to the same activation class, such as LeakyReLU and ELU. We denote such a collection of activation functions by $\sigma:= \{\sigma_\beta\vert \beta\in \Lambda\}\enspace(\Lambda=\mathbb{N}, \mathbb{R}, \cdots)$. For example, we denote the ELU and LeakyReLU families by 
\begin{align*}
	\mathrm{ELU}:= \{ \mathrm{ELU}_\beta\vert \beta\in \mathbb{R}_{+}\},\enspace \mathrm{LeakyReLU}:=\{\mathrm{LeakyReLU}_\beta\vert \beta\in (0, 1)\cup (1, +\infty)\}.
\end{align*}

\subsection{Preliminaries on Diffeomorphisms}
Before our formal discussion on the ResNet, we first introduce several classes of diffeomorphisms. The definitions are mainly adapted from \citep{Coupling_Flow_NeuralPS, Hwang_C0}.
\begin{definition}[Set of $C^r$-Diffeomorphisms]
	Let $U\subset \mathbb{R}^m$ be an open set, we denote $\mathcal{D}^r(U)$ as the set of all $C^r$-diffeomorphisms $f: U\to f(U)$.  
\end{definition}

\begin{definition}[Invertible Neural Network]
	For $d\in \mathbb{N}_+$, let $\mathcal{G}$ be a set of invertible functions from $\mathbb{R}^d$ to itself. Then the set of invertible neural networks $\mathrm{INN}_\mathcal{G}$ is defined by
	\begin{align}
		\mathrm{INN}_\mathcal{G}:=\bigcup_{n\in \mathbb{N}_+} \{ T_n\circ g_{n-1}\circ \cdots \circ g_2\circ T_2 \circ g_1\circ T_1\vert  g_i\in \mathcal{G}, T_i\in \mathrm{IAff}_{d, d} \}.
	\end{align}
\end{definition}

\begin{definition}[Compactly Supported Diffeomorphism]
	A diffeomorphism $f:\mathbb{R}^d\to \mathbb{R}^d$ is called compactly supported, if there exists a compact subset $K\subset \mathbb{R}^d$ such that $f(x)=x$ for all $x\notin K$. We denote the set of all compactly supported $C^r$-diffeomorphisms of $\mathbb{R}^d$ by $\mathrm{Diff}_c^r(\mathbb{R}^d)$.
\end{definition}

\begin{definition}[Single Coordinate Transformation]
	We denote the set of all compactly supported $C^r$-diffeomorphisms that alter only the last coordinate:
	\begin{align}
		\mathcal{S}_c^r(\mathbb{R}^d):=\{f\in \mathrm{Diff}_c^r(\mathbb{R}^d)\vert f(x)=(x_1, \cdots, x_{d-1}, f_d(x)), \enspace f_d\in C^r(\mathbb{R}^d, \mathbb{R})\}.
	\end{align}
\end{definition}

\subsection{Preliminaries of Residual Neural Networks}
In this subsection, we introduce the definition of residual block and ResNet, and we define the block width, inner width, depth of a ResNet.
\begin{definition}[Residual Block]\label{def_residualblock}
	For a given activation function $\sigma_0$, a block width-$w$ residual block $\mathcal{B}:\mathbb{R}^w\to \mathbb{R}^w$ is defined by
	\begin{align}\label{eq_block}
		\mathcal{B}=\mathrm{Id}_w+\Phi,
	\end{align}
	where the residual branch $\Phi$ is given by
	\begin{align}\label{eq_residual_branch}
		\Phi=T_{W_b, B_b}\circ \sigma_0 \circ T_{W_a, B_a}, \enspace T_{W_a, B_a}\in \mathrm{Aff}_{w, \bar{w}}, \enspace T_{W_b, B_b}\in \mathrm{Aff}_{\bar{w}, w}.
	\end{align}
\end{definition}
For a family of activation functions $\sigma=\{\sigma_i\vert i\in \mathbb{N}_+\}$, the set of finite compositions of residual blocks is defined by
\begin{align}\label{eq_CRB}
	\mathrm{CRB}_{w, \bar{w}}^{\sigma}:=\bigcup_{N\in \mathbb{N}_+}\{\phi:\mathbb{R}^w\to \mathbb{R}^w\vert \phi=\mathcal{B}_{N}\circ \cdots \circ \mathcal{B}_{2}\circ \mathcal{B}_{1}\}.
\end{align}
where each residual block $\{\mathcal{B}_i\vert i\in \mathbb{N}_+\}$ has the form
\begin{align*}
	\mathcal{B}_i=\mathrm{Id}_w+T_{W_{ib}, B_{ib}}\circ \sigma_i \circ T_{W_{ia}, B_{ia}},\enspace T_{W_{ia}, B_{ia}}\in \mathrm{Aff}_{w, \bar{w}}, T_{W_{ib}, B_{ib}}\in \mathrm{Aff}_{\bar{w}, w}, \sigma_i\in \sigma.
\end{align*}
Using the above definition of residual blocks, we now formally define ResNet throughout this paper.
\begin{definition}[ResNet]\label{def_ResNet}
	For a family of activation functions $\sigma=\{\sigma_i\vert i\in \mathbb{N}_+\}$, a ResNet is a map $R:\mathbb{R}^n\to \mathbb{R}^m$ of the form
	\begin{align}\label{eq_ResNet}
		R=T_{W_\beta, B_\beta} \circ \phi \circ T_{W_\alpha, B_\alpha},
	\end{align}
	where $\phi\in \mathrm{CRB}_{w, \bar{w}}^{\sigma}$ and $T_{W_\alpha, B_\alpha}\in \mathrm{Aff}_{n, w}, T_{W_\beta, B_\beta}\in \mathrm{Aff}_{w, m}$. A schematic diagram of this architecture is shown in Figure \ref{fig_ResNet}.
\end{definition}

Subsequently, we define the block width, inner width, and depth of a ResNet.
\begin{definition}[Block Width, Inner Width and Depth of a ResNet]\label{def_ResNetWidth}
	In Equation \eqref{eq_ResNet}, the inner width in each residual block is fixed as $\bar{w}$. We refer to $w$ and $\bar{w}$ in Equation \eqref{eq_ResNet} as the block width and inner width, respectively. The depth of a ResNet is defined as the number of residual blocks it contains.
\end{definition}
We now define the set of ResNets with block width $w$ and inner width $\bar{w}$.
\begin{definition}[Set of ResNets]
	When all activation functions used in the residual blocks belong to the set $\sigma=\{\sigma_i\vert i\in \Lambda\}$, we denote the set of ResNets $R:\mathbb{R}^n\to \mathbb{R}^m$, whose residual blocks all have block width $w$ and inner width $\bar{w}$, by $\mathrm{Res}_{n, m, w, \bar{w}}^{\sigma}$. 
\end{definition}
\begin{figure}[htbp]
	\centering
	\includegraphics[height=0.54\linewidth, width=0.9\linewidth]{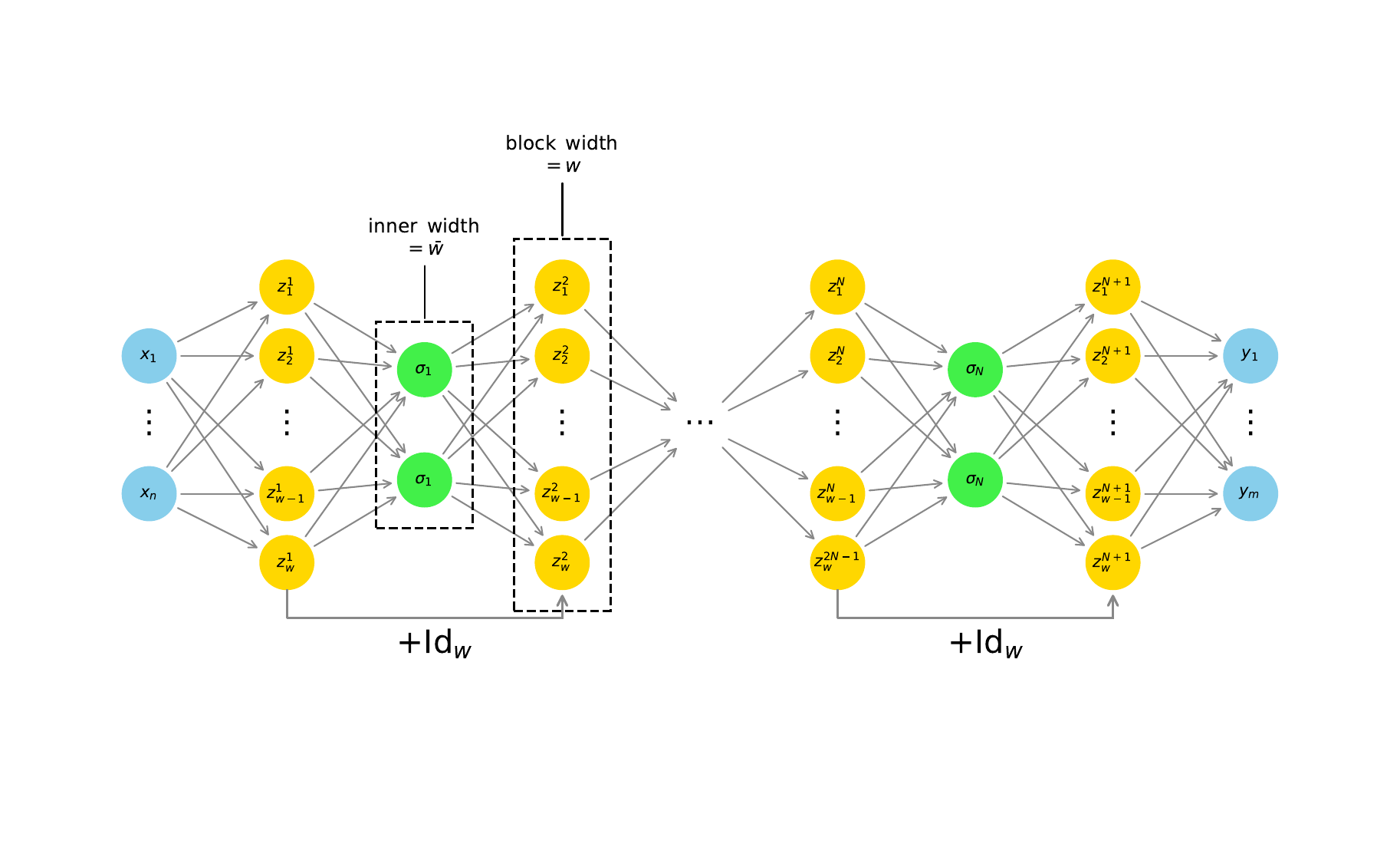}
	\caption{Diagram of a ResNet with block width $w$ and inner width $\bar{w}$. For each input $x=(x_1, x_2, \cdots, x_n)$, the output of this ResNet is $y=(y_1, y_2, \cdots, y_m)$. Each residual block transforms the hidden input $z^i=(z^i_1, \cdots, z^i_w)$ to the hidden output $z^{i+1}=(z^{i+1}_1, \cdots, z^{i+1}_w)$ by the map $\mathcal{B}_i = \mathrm{Id}_w+T_{W_{ib}, B_{ib}} \circ \sigma_i \circ T_{W_{ia}, B_{ia}}$.} 
	\label{fig_ResNet}
\end{figure}

\section{Approximation Properties of Compositions of Residual Blocks}
Since the relation $\mathrm{Res}_{n, m, w, \bar{w}_1}^{\sigma}\underset{\mathrm{sup}}{\prec} \mathrm{Res}_{n, m, w, \bar{w}_2}^{\sigma}$ holds for all $0<\bar{w}_1<\bar{w}_2$ and activation functions $\sigma=\{\sigma_i\vert i\in \Lambda\}$, we focus on the approximation properties of $\mathrm{Res}_{n, m, w, 1}^{\sigma}$, i.e., the class of ResNets whose inner width is fixed as 1 in this paper.

In this section, we discuss the approximation properties of $\mathrm{Res}_{n, m, w, 1}^{\sigma}$. A residual block with block width $w$ and inner width 1 is illustrated in Figure \ref{fig_block_inner_1}.
\begin{figure}[htbp]
	\centering
	\includegraphics[height=0.5\linewidth, width=0.5\linewidth]{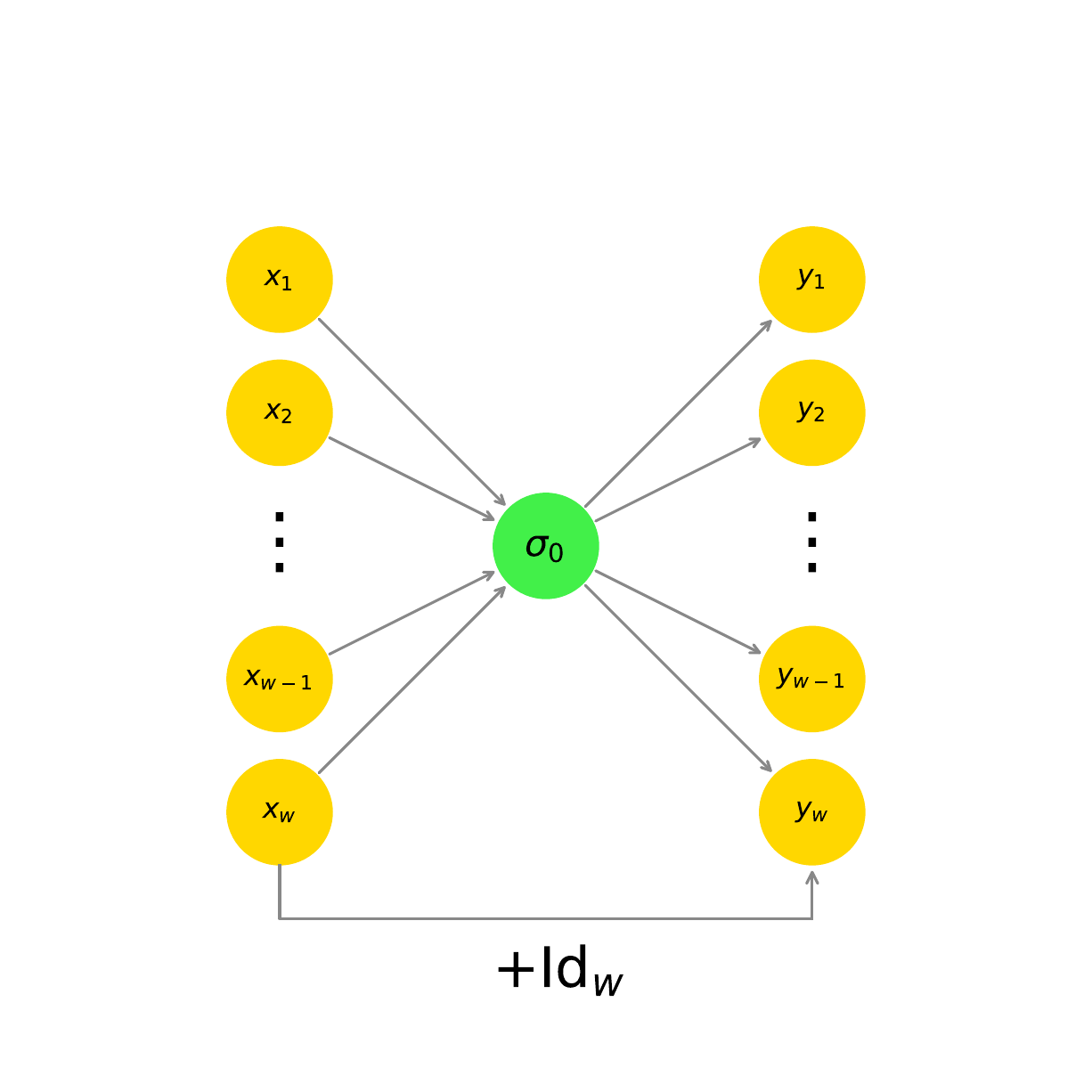}
	\caption{Diagram of a residual block with activation function $\sigma_0$, block width $w$ and inner width 1. This block transforms the input $x=(x_1, x_2, \cdots, x_w)$ to the output $y=(y_1, y_2, \cdots, y_w)$ by the map $\mathcal{B}=\mathrm{Id}_w+T_{W_b, B_b}\circ \sigma_0 \circ T_{W_a, B_a}$.}
	\label{fig_block_inner_1}
\end{figure}

\subsection{The Ability of Compactly Approximating Affine Transformations}
In this subsection, we prove that for any set of activation functions $\sigma=\{\sigma_i\vert i\in \Lambda\}$, compositions of residual blocks can compactly approximate every affine transformation from $\mathbb{R}^w$ to itself on any compact set, both in the $\mathrm{sup}$ norm sense and the $L^p$ norm sense, i.e., 
\begin{align*}
	T_{W, B}\underset{\mathrm{sup}}{\prec} \mathrm{CRB}^{\sigma}_{w, 1}, \enspace T_{W, B}\underset{p}{\prec} \mathrm{CRB}^{\sigma}_{w, 1},\enspace \forall T_{W, B}\in \mathrm{Aff}_{w, w}.
\end{align*}
We first prove the approximation property of composition of function classes.
\begin{proposition}[Approximation Property of Composition of Function Classes]\label{prop_composition_classes}
	For measurable compact sets $X_i\subset\mathbb{R}^{d_i}$, $\mathcal{F}_i, \mathcal{G}_i\subset C(X_i, \mathbb{R}^{d_{i+1}})$, if $\mathcal{G}_i\underset{\mathrm{sup}}{\prec}\mathcal{F}_i$ holds for any $i\in \{1, 2, \cdots, N\}$, and for any $f_{i}\in\mathcal{F}_i$, $g_{i}\in \mathcal{G}_i$, it holds that $f_{i}(X_i)\subset X_{i+1}$, $g_{i}(X_i)\subset X_{i+1}$. Then we have 
	\begin{align*}
		\mathcal{G}_N\circ \cdots \circ \mathcal{G}_1\underset{\mathrm{sup}}{\prec}\mathcal{F}_N\circ \cdots \circ \mathcal{F}_1
	\end{align*}
\end{proposition}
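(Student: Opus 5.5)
I will argue by induction on $N$, using uniform continuity on compact sets. The case $N=1$ is exactly the hypothesis $\mathcal{G}_1\underset{\mathrm{sup}}{\prec}\mathcal{F}_1$, since $X_1$ is itself compact. For the inductive step, fix $g=g_N\circ\cdots\circ g_1$ with $g_i\in\mathcal{G}_i$, a tolerance $\varepsilon>0$, and a compact set $K\subset X_1$. Write $h=g_{N-1}\circ\cdots\circ g_1$, so that $g=g_N\circ h$. I will look for $f=f_N\circ \tilde f$ with $f_N\in\mathcal{F}_N$ and $\tilde f\in\mathcal{F}_{N-1}\circ\cdots\circ\mathcal{F}_1$, and split the error by the triangle inequality:
\begin{align*}
\|f_N\circ\tilde f-g_N\circ h\|_{\mathrm{sup},K}\le \|f_N\circ\tilde f-g_N\circ\tilde f\|_{\mathrm{sup},K}+\|g_N\circ\tilde f-g_N\circ h\|_{\mathrm{sup},K}.
\end{align*}

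The second term is where the main care is needed. Since $g_N\in C(X_N,\mathbb{R}^{d_{N+1}})$ and $X_N$ is compact, $g_N$ is uniformly continuous on $X_N$. So there is $\delta>0$ such that $u,v\in X_N$ and $\|u-v\|<\delta$ imply $\|g_N(u)-g_N(v)\|<\varepsilon/2$. The invariance hypothesis guarantees that both $\tilde f(K)$ and $h(K)$ lie in $X_N$, because each factor maps $X_i$ into $X_{i+1}$; this is why the invariance hypothesis is needed. The induction hypothesis, applied with tolerance $\delta$ on the same compact $K$, gives $\tilde f$ with $\|\tilde f-h\|_{\mathrm{sup},K}<\delta$. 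Hence the second term is less than $\varepsilon/2$.

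For the first term, $\tilde f$ depends only on $\delta$, which is fixed before $f_N$ is chosen. I apply $\mathcal{G}_N\underset{\mathrm{sup}}{\prec}\mathcal{F}_N$ to $g_N$ on the compact set $X_N$ with tolerance $\varepsilon/2$, obtaining $f_N$ with $\|f_N-g_N\|_{\mathrm{sup},X_N}<\varepsilon/2$. Since $\tilde f(K)\subset X_N$, the first term is bounded by $\|f_N-g_N\|_{\mathrm{sup},X_N}<\varepsilon/2$. Adding the two bounds gives $\|f-g\|_{\mathrm{sup},K}<\varepsilon$.

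The order of choices is what makes the argument work: first $\delta$ from uniform continuity of $g_N$, then $\tilde f$ from the induction hypothesis, and $f_N$ independently on all of $X_N$. The only subtle point is confirming that the compact sets match. We need $\tilde f(K)\subset X_N$, which follows from the invariance condition applied iteratively, and we need $X_N$ compact, so that $\sup$-approximation on $X_N$ is allowed and $g_N$ is uniformly continuous there.
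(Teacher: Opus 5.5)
Your proof is correct and follows the paper's argument. It uses the same triangle-inequality split, and the same order of choices: $\delta$ from uniform continuity of the outer map, then the inner approximant, then the outer approximant. The only cosmetic difference is that the paper takes uniform continuity on the compact neighbourhood $\{y\in X_2 \mid \mathrm{dist}(y,g_1(K))\le 1\}$ and approximates $g_2$ only on $f_1(K)$. You instead use compactness of $X_N$ directly for both steps, which is legitimate under the stated hypotheses.
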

\begin{proof}
	We can prove this by induction on $N\in \mathbb{N}_+$, and it is clear that it suffices to establish the base case $N=2$. We first prove that $\mathcal{G}_2\circ \mathcal{G}_1\underset{\mathrm{sup}}{\prec} \mathcal{F}_2\circ \mathcal{F}_1$. Since for arbitrary $f_i\in \mathcal{F}_i, g_i\in \mathcal{G}_i$, it holds that
	\begin{align*}
		g_2\circ g_1 - f_2\circ f_1 = (g_2\circ g_1- g_2\circ f_1)+(g_2\circ f_1-f_2\circ f_1).
	\end{align*}
	Let $M=\{y\in X_2\vert\mathrm{dist}(y,g_1(K))\leq 1\}$, then $M$ is compact. Since $g_2$ is continuous, it is uniformly continuous on $M$, then there exists $0<\delta<\frac{1}{2}$ such that when $\vert\vert g_1-f_1 \vert\vert_{\mathrm{sup}, K}<\delta$, $\vert\vert g_2\circ g_1 -g_2\circ f_1\vert\vert_{\mathrm{sup}, K}<\frac{\varepsilon}{2}$. It follows immediately by the assumption $\mathcal{G}_1\underset{\mathrm{sup}}{\prec}\mathcal{F}_1$, there exists $f_1$ such that $\vert\vert g_2\circ g_1- g_2\circ f_1\vert\vert_{\mathrm{sup}, K} <\frac{\varepsilon}{2}$. Furthermore, by the assumption $\mathcal{G}_2\underset{\mathrm{sup}}{\prec}\mathcal{F}_2$, we can choose $f_2\in \mathcal{F}_2$ such that $\vert\vert g_2-f_2\vert\vert_{\mathrm{sup}, f_1(K)}<\frac{\varepsilon}{2}$. Therefore, we can conclude that $\mathcal{G}_2\circ \mathcal{G}_1\underset{\mathrm{sup}}{\prec} \mathcal{F}_2\circ \mathcal{F}_1$, which completes the proof.
\end{proof}

We show that three classes of affine transformations $T_{W, B}\in \mathrm{Aff}_{w, w}$ can be directly compactly approximated by compositions of residual blocks with inner width 1. These cases are stated in the following proposition.
\begin{proposition}[Approximate Affine Transformations]\label{prop_appro_affine}
	Let $\sigma=\{\sigma_\lambda\vert \lambda\in\Lambda\}$ be an activation function family. For any affine transformation $T_{W, B}\in \mathrm{Aff}_{w, w}$, when $W$ satisfies one of the following conditions:
	\begin{itemize}
		\item[(a)] $W$ is a diagonal matrix,
		\item[(b)] $W$ is a lower or upper triangular matrix whose diagonal entries are nonzero,
		\item[(c)] $W$ is a permutation matrix,
	\end{itemize}
	then $T_{W, B}$ can be compactly approximated on compact sets by compositions of residual blocks with inner width 1 both in $\mathrm{sup}$ norm and $L^p$ norm, i.e.,
	\begin{align*}
		T_{W, B}\underset{\mathrm{sup}}{\prec} \mathrm{CRB}_{w, 1}^{\sigma}, \enspace T_{W, B}\underset{p}{\prec} \mathrm{CRB}_{w, 1}^{\sigma}.
	\end{align*}
\end{proposition}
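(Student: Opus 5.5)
The plan is to realize each of the three classes of affine maps by composing elementary maps, each of which is close to a single residual block of inner width $1$. Proposition \ref{prop_composition_classes} then glues the pieces together. The key primitive is the following.

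\emph{Key primitive.} Let $\alpha$ be a point with $\sigma_0'(\alpha)\neq 0$, where $\sigma_0\in\sigma$ is $C^1$ near $\alpha$. For $a\in\mathbb{R}^w$, $c\in\mathbb{R}^w$ and small $h>0$, consider the block
\begin{align*}
\mathcal{B}_h(x)=x+\frac{1}{h\,\sigma_0'(\alpha)}\,c\,\bigl(\sigma_0(\alpha+h\,a^\top x)-\sigma_0(\alpha)\bigr).
\end{align*}
The constant term $-\sigma_0(\alpha)c/(h\sigma_0'(\alpha))$ is absorbed into $B_b$. By the mean value theorem and uniform continuity of $\sigma_0'$ near $\alpha$, $\mathcal{B}_h\to x+c\,a^\top x=(I+ca^\top)x$ uniformly on any compact $K$ as $h\to0$. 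Hence every rank-one perturbation of the identity, $I+ca^\top$, is compactly approximated by one block. A translation $x\mapsto x+B$ is realized exactly by taking $W_a=0$ and $B_b=B-\sigma_0(B_a)W_b$, or equivalently by putting it into $B_b$. Uniform approximation on compact sets implies $L^p$ approximation on compact sets, since $\|\cdot\|_{p,K}\le\mu(K)^{1/p}\|\cdot\|_{\sup,K}$. It therefore suffices to treat the sup case.

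\emph{Gluing.} Proposition \ref{prop_composition_classes} is stated for classes mapping fixed compacts $X_i$ into $X_{i+1}$. I will apply it with $X_{i+1}$ taken to be a closed $1$-neighbourhood of the image of $X_i$ under the target linear factor. Approximants within sup-distance $<1$ then map into it, after restricting all approximants to the compact set. This yields the following: if $T=L_N\circ\cdots\circ L_1$ with each $L_i$ compactly approximable by $\mathrm{CRB}_{w,1}^\sigma$, then so is $T$, because $\mathrm{CRB}_{w,1}^\sigma$ is closed under composition.

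\emph{Case (b).} A lower triangular $W$ with nonzero diagonal factors as a product of matrices of the form $I+ca^\top$, as follows.
\begin{itemize}
\item The diagonal factors $D_i=I+(d_i-1)e_ie_i^\top$ have this form.
\item Each elementary matrix $I+\ell_{ij}e_ie_j^\top$ with $i\neq j$ has this form.
\item $W=D\,L$ with $L$ unit lower triangular, and $L$ is a product of elementary factors $I+\ell_{ij}e_ie_j^\top$.
\end{itemize}
The upper triangular case is symmetric.

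\emph{Case (a).} A diagonal matrix is a product of the $D_i$. If some $d_i=0$, then $D_i=I-e_ie_i^\top$ is still of the form $I+ca^\top$, so no invertibility is needed.

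\emph{Case (c).} A permutation is a product of transpositions. A transposition swapping coordinates $i,j$ is the composition
\begin{align*}
(I+e_ie_j^\top)(I-e_je_i^\top)(I+e_ie_j^\top)
\end{align*}
(which sends $(x_i,x_j)\mapsto(x_j,-x_i)$ up to order), followed by the sign flip $D_j$ with $d_j=-1$. Each factor is again of the form $I+ca^\top$.

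In all three cases the translation $B$ is appended as a final block.

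\emph{Main obstacle.} The only delicate step is the primitive itself. Definition \ref{def_activationfunc} only gives a piecewise $C^1$ function with $\sigma_0'(\alpha)\neq0$. I must ensure $\alpha$ can be chosen in the interior of a $C^1$ piece. If $\alpha$ is a breakpoint, a nonzero one-sided derivative persists on an adjacent open piece, so such a point exists nearby. The difference quotient then converges uniformly on the bounded set $\{a^\top x: x\in K\}$. Taking $h$ small keeps $\alpha+h\,a^\top x$ inside that piece, which gives the required uniform estimate.
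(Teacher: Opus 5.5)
Your proposal is correct and follows essentially the same route as the paper. Both arguments linearize $\sigma_0$ near a point with $\sigma_0'(\alpha)\neq 0$ by squeezing the branch input into a small interval, so that a single inner-width-one block approximates an elementary affine map; both then factor diagonal, triangular and permutation matrices into such factors and glue them via Proposition~\ref{prop_composition_classes}. The differences are cosmetic: your $h\to 0$ scaling gives an arbitrary rank-one update $I+ca^\top$ per block where the paper uses only $I+dE_{ij}$ (which is all either argument needs), and your transposition factorization (three shears followed by a sign flip) is an equally valid alternative to the paper's $(I+E_{ji})(I-E_{ij})(I-2E_{jj})(I-E_{ji})$.
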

\begin{proof}
	The proof is provided in Appendix \ref{appendix_proof_appro_affine}.
\end{proof}

Proposition \ref{prop_appro_affine} implies the following approximation property for affine transformations.
\begin{theorem}[Approximate Affine Transformations by Compositions of Residual Blocks]\label{thm_appro_affine}
	For any affine transformation $T_{W, B}\in \mathrm{Aff}_{w, w}$, we have
	\begin{align*}
		T_{W, B}\underset{\mathrm{sup}}{\prec} \mathrm{CRB}^{\sigma}_{w, 1}, \enspace T_{W, B}\underset{p}{\prec} \mathrm{CRB}^{\sigma}_{w, 1}.
	\end{align*}
\end{theorem}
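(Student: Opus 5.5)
The plan is to factor an arbitrary $W\in\mathbb{R}^{w\times w}$ into a finite product of matrices of the three types covered by Proposition \ref{prop_appro_affine}. Then Proposition \ref{prop_composition_classes} turns approximation of each factor into approximation of the composition.

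The factorization is the LU decomposition with pivoting, $W=PLU$. Here $P$ is a permutation matrix, $L$ is lower triangular with unit diagonal, and $U$ is upper triangular, possibly with zeros on its diagonal. If $W$ is singular, $U$ may have zero diagonal entries, so it is not yet of type (b). To repair this, write $U=U'D$ with $U'$ unit upper triangular and $D$ diagonal. This splitting is not always possible when some $u_{ii}=0$ while that row still has nonzero off-diagonal entries. So instead I use the rank factorization $W=P_1\begin{pmatrix} I_r & 0\\ 0&0\end{pmatrix}$-type reductions.

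Concretely, by Gaussian elimination with row and column pivoting, $W=P_1 L D U P_2$. Here $P_1,P_2$ are permutations, $L$ is unit lower triangular, $U$ is unit upper triangular, and $D$ is diagonal (entries $0$ or pivots). Every factor is then of type (a), (b) or (c). The translation $B$ is put into the outermost factor, $T_{W,B}=T_{P_1,B}\circ T_{L,0}\circ T_{D,0}\circ T_{U,0}\circ T_{P_2,0}$.

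Next I check the hypotheses of Proposition \ref{prop_composition_classes}. Fix the compact $K$. I take $X_1=K$ and let $X_{i+1}$ be a closed unit neighbourhood of the image of $X_i$ under the $i$-th affine factor. Those images are compact because the factors are continuous. The proposition as stated requires every approximant $f_i$ to map $X_i$ into $X_{i+1}$. This holds as soon as $\|f_i-g_i\|_{\sup,X_i}<1$, so I restrict $\mathcal{F}_i$ to such approximants of $g_i$ from $\mathrm{CRB}^\sigma_{w,1}$, which is nonempty by Proposition \ref{prop_appro_affine}. The composition of elements of $\mathrm{CRB}^\sigma_{w,1}$ is again in $\mathrm{CRB}^\sigma_{w,1}$, since it is a longer composition of blocks. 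Hence $T_{W,B}\underset{\mathrm{sup}}{\prec}\mathrm{CRB}^\sigma_{w,1}$.

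The $L^p$ statement follows from the sup statement, because on a compact set $K$ we have $\|f-g\|_{p,K}\le \mu(K)^{1/p}\|f-g\|_{\sup,K}$.

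The main obstacle is the careful justification of the decomposition $W=P_1LDUP_2$ for singular $W$. With full (row and column) pivoting, after $r$ elimination steps the remaining Schur complement is zero. The remaining block is then handled by $D$ having zero entries, and $L$, $U$ get identity in the trailing block. The bookkeeping of the domains $X_i$ in Proposition \ref{prop_composition_classes} is routine by comparison.
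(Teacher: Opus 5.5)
Your proof is correct, but it factors $W$ differently from the paper.

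\textbf{The paper's route.} It first takes an SVD $W=U^T\Sigma V$. It then applies $PLU$ only to the orthogonal, hence invertible, factors $U^T$ and $V$. The triangular factors therefore automatically have nonzero diagonals, and the entire rank deficiency of $W$ is isolated in the diagonal matrix $\Sigma$. This gives eight factors of types (a)--(c), using only textbook results about invertible matrices.

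\textbf{Your route.} You run Gaussian elimination with complete pivoting on $W$ itself and obtain $W=P_1LDUP_2$. Here $L,U$ are unit triangular and $D$ is diagonal with zeros exactly in its last $w-\mathrm{rank}(W)$ entries. This is purely algebraic, needs no orthogonal decomposition, and uses only five factors. The price is the extra justification you supply: elimination terminates after $\mathrm{rank}(W)$ steps with a vanishing Schur complement.

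Your worry about row pivoting alone is well founded. Take $W=\begin{pmatrix}0&1\\0&1\end{pmatrix}$, and any diagonal $D$, lower triangular $L$ and upper triangular $U$ with nonzero diagonals. Both row permutations leave $W$ unchanged. The first row of $LDU$ is $l_{11}d_1(u_{11},u_{12})$, so $W_{11}=0$ forces $d_1=0$, which contradicts $W_{12}=1$. So your column permutation $P_2$ is doing real work, and this is exactly the issue the paper's SVD step sidesteps. You can also delete the unfinished sentence about ``rank factorization-type reductions''.

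Your handling of Proposition \ref{prop_composition_classes} is more careful than the paper's. That proposition requires every admissible approximant to map $X_i$ into $X_{i+1}$, which the paper never checks. You take $X_{i+1}$ to be the closed unit neighbourhood of $g_i(X_i)$ and restrict $\mathcal{F}_i$ to approximants within sup-distance $1$ on $X_i$. This verifies the hypothesis while preserving $\{g_i\}\underset{\mathrm{sup}}{\prec}\mathcal{F}_i$.
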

\begin{proof}
	Since compact uniform approximation implies $L^p$ approximation on compact sets, it suffices to prove the sup norm case. By the Singular Value Decomposition (SVD) \citep{LinearAlgebra_2019}, every matrix $W$ can be written as $W=U^T\Sigma V$, where $\Sigma$ is a diagonal matrix with non-negative entries, and $U, V$ are orthogonal matrices. The PLU-factorization \citep{LinearAlgebra_2019} states that, for every non-singular $A\in \mathbb{R}^{w\times w}$, there exists a permutation matrix $P$, a lower triangular matrix $L$ and an upper triangular matrix $U$ such that $A = PLU$. Therefore, there exist the corresponding matrices $P_1, P_2, L_1, L_2, U_1, U_2$ such that $U^T = P_1L_1U_1$, $V = P_2L_2U_2$.
	
	By employing Proposition \ref{prop_composition_classes} and Proposition \ref{prop_appro_affine}, we can show that for any $W=U^T\Sigma V$ and $B\in \mathbb{R}^w$, 
	\begin{align*}
		T_{W, B} = T_{I_w, B} \circ T_{P_1, \mathbf{0}_w} \circ T_{L_1, \mathbf{0}_w} \circ T_{U_1, \mathbf{0}_w} \circ T_{\Sigma, \mathbf{0}_w}\circ T_{P_2, \mathbf{0}_w}\circ T_{L_2, \mathbf{0}_w}\circ T_{U_2, \mathbf{0}_w}
	\end{align*}
	can be approximated by a sequence of functions $\{\mathcal{B}_i\}_{i\in \mathbb{N}_+}$, where each $\mathcal{B}_i$ is a composition of residual blocks.
\end{proof}

\subsection{The Ability of Compactly Approximating Nonlinear Maps}
In this subsection, we show our approximation results for certain nonlinear maps, including the maps induced by $\mathrm{LeakyReLU}$, $\mathrm{ABS}$ activation functions.

\begin{definition}[Definition of ReLU-like]\label{def_ReLU-like}
	A set of activation functions $\sigma=\{\sigma_i\vert i\in \Lambda\}$ is called ReLU-like, if for any $\sigma_i\in \sigma$, any $\varepsilon>0$, and any compact interval $K=[k_1, k_2]$, there exist affine transformations $T_{W_1, B_1}, T_{W_2, B_2}\in \mathrm{Aff}_{1, 1}$ such that
	\begin{align*}
		\vert\vert T_{W_2, B_2}\circ \sigma_i \circ T_{W_1, B_1} - \mathrm{ReLU}\vert\vert_{\mathrm{sup}, K}<\varepsilon.
	\end{align*}
\end{definition}
The following proposition shows that a lot of commonly used activation functions are ReLU-like, including ELU, CELU, SELU, Softplus, GELU, SiLU, Mish, ReLU6, Softshrink, HardSigmoid, HardTanh, HardSwish.
\begin{proposition}[ReLU-like Activation Functions]\label{prop_ReLU-like}
	Each of the following activation families is ReLU-like:
	\begin{align*}
		\sigma=\mathrm{ELU}, \mathrm{CELU}, \mathrm{SELU}, \mathrm{Softplus}, \mathrm{GELU}, \mathrm{SiLU}, \mathrm{Mish},\\
		\mathrm{ReLU6}, \mathrm{Softshrink}, \mathrm{HardSigmoid}, \mathrm{HardTanh}, \mathrm{HardSwish}.
	\end{align*} 
\end{proposition}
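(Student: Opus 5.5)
The plan is to verify Definition~\ref{def_ReLU-like} family by family. For each $\sigma_\beta$ we exhibit affine maps $T_{W_1,B_1}, T_{W_2,B_2}\in\mathrm{Aff}_{1,1}$, depending on a scaling parameter, such that $T_{W_2,B_2}\circ\sigma_\beta\circ T_{W_1,B_1}\to\mathrm{ReLU}$ uniformly on a given $K=[k_1,k_2]$. The activation families fall into three groups, each with its own mechanism.

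\textbf{Group 1: smooth and asymptotically piecewise-linear functions.} This group is ELU, CELU, SELU, Softplus, GELU, SiLU, Mish. Each such $\sigma_\beta$ satisfies $\sigma_\beta(t)=c\,t+o(|t|)$ as $t\to+\infty$ with $c>0$, and $\sigma_\beta(t)=O(1)$ as $t\to-\infty$. For example, Softplus, GELU, SiLU and Mish have $c=1$ and tend to $0$ as $t\to-\infty$; ELU and CELU tend to $-\beta$, and SELU tends to $-\lambda\alpha$. We use the rescaling
\begin{align*}
x\mapsto \frac{1}{cn}\,\sigma_\beta(nx), \qquad n\to\infty.
\end{align*}
We check uniform convergence to $\mathrm{ReLU}$ on $K$ in two steps. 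First, we write $\sigma_\beta(t)=c\,\mathrm{ReLU}(t)+r(t)$ and show that $r$ is bounded on $\mathbb{R}$. For ELU and CELU this is immediate, since $r=0$ for $t\ge 0$ and $|r|\le\beta$ for $t<0$. For Softplus, $r=\log(1+e^{-|t|})$. For SiLU and GELU, $r(t)=t\,s(t)-\mathrm{ReLU}(t)$, where $s$ is a sigmoid-type CDF, and this is bounded because $|t|\min\{s(t),1-s(t)\}$ is bounded. For Mish, $t\tanh(\mathrm{softplus}(t))-\mathrm{ReLU}(t)$ is bounded by a similar exponential-tail estimate. Second, we conclude:
\begin{align*}
\sup_{x\in K}\left|\frac{1}{cn}\sigma_\beta(nx)-\mathrm{ReLU}(x)\right|\le\frac{\sup|r|}{cn}\to 0,
\end{align*}
using that $\mathrm{ReLU}(nx)=n\,\mathrm{ReLU}(x)$.

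\textbf{Group 2: bounded piecewise-linear functions.} This group is ReLU6, HardSigmoid, HardTanh. Each such function equals an affine function $a t+b$ with $a>0$ on some interval $[t_0,t_1]$, and is constant for $t\le t_0$. We choose $T_{W_1,B_1}(x)=\lambda x+t_0$ with $\lambda>0$ small enough that $\lambda k_2+t_0\le t_1$. Then on $K$,
\begin{align*}
\sigma(\lambda x+t_0)=\sigma(t_0)+a\lambda\,\mathrm{ReLU}(x),
\end{align*}
so $T_{W_2,B_2}(y)=(y-\sigma(t_0))/(a\lambda)$ gives $\mathrm{ReLU}$ exactly on $K$. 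Softshrink is handled similarly: $\mathrm{Softshrink}_\beta(t)=0$ on $[-\beta,\beta]$ and equals $t-\beta$ for $t>\beta$. Shifting by $t_0=\beta$ and scaling so that $\lambda k_1+\beta\ge-\beta$, which is possible for small $\lambda$, gives exact equality on $K$.

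\textbf{Group 3: HardSwish.} For HardSwish, $h(t)=t\,\mathrm{ReLU6}(t+3)/6$, which equals $0$ for $t\le-3$, equals $t$ for $t\ge 3$, and is quadratic in between. Here we again use the Group 1 scaling $\frac1n h(nx)$. The remainder $h(t)-\mathrm{ReLU}(t)$ vanishes outside $[-3,3]$ and is bounded there, so the same estimate applies.

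\textbf{Main obstacle.} The main difficulty is keeping the bookkeeping uniform over the whole parameter family $\sigma_\beta$. We must make sure the chosen affine maps work for every admissible $\beta$, including SELU's fixed constants and the $\beta$ in Softplus's $\frac1\beta\log(1+e^{\beta t})$, which is handled by absorbing $\beta$ into the scaling. The definition only requires the affine maps to exist for each fixed $\sigma_i$, so this reduces to checking the boundedness of $r$ for each explicit formula listed in Appendix~\ref{appendix_activationfunc}. Mish and GELU need the most care. For GELU we use $|t|\Phi(-|t|)\le \phi(t)\le 1$. For Mish we use $1-\tanh(u)\le 2e^{-2u}$ together with $\mathrm{softplus}(t)\ge t$ for $t>0$, and $|\tanh(\mathrm{softplus}(t))|\le \mathrm{softplus}(t)\le e^{t}$ for $t<0$.
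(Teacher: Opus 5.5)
Your proposal is correct and follows essentially the same approach as the paper. For the smooth families and HardSwish you use the rescaling $x\mapsto \frac{1}{cn}\sigma(nx)$ with a bounded remainder, and for ReLU6, Softshrink, HardSigmoid and HardTanh you use exact affine maps onto a linear piece. The only real difference is that you verify Softplus, GELU, SiLU and Mish directly, via the Mills-ratio and $1-\tanh u\le 2e^{-2u}$ bounds, where the paper simply cites Lemma 23 of Kim et al.; you also make SELU's slope $\lambda$ explicit through the factor $c$, which the paper only says is ``handled similarly''.
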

\begin{proof}
	The proof is provided in the Appendix \ref{appendix_proof_ReLU-like}.
\end{proof}

Before we prove the approximation property of the composition of residual blocks, we first introduce a proposition of $\mathrm{LeakyReLU}$ proved in \citep{Self_ELUNN}.
\begin{proposition}[Approximate LeakyReLU]\label{prop_LRk_to_LR}
	For any fixed $k\in (0, 1)\cup(1, +\infty)$, we have
	\begin{align*}
		\mathrm{LeakyReLU}=\{\mathrm{LeakyReLU}_a\vert a\in (0, 1)\cup(1, +\infty)\}\underset{\mathrm{sup}}{\prec}\mathrm{CRB}^{\mathrm{LeakyReLU}_k}_{w, 1}.
	\end{align*}
\end{proposition}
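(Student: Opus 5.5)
The plan is to write the target map exactly as a composition of residual blocks with activation $\mathrm{LeakyReLU}_k$ and diagonal affine maps. The diagonal maps can then be approximated by Proposition \ref{prop_appro_affine}(a) (or Theorem \ref{thm_appro_affine}), and Proposition \ref{prop_composition_classes} glues the pieces together. The target $\mathrm{LeakyReLU}_a:\mathbb{R}^w\to\mathbb{R}^w$ acts componentwise, so it equals the composition of the $w$ maps $\rho_i$, where $\rho_i$ applies $\mathrm{LeakyReLU}_a$ to the $i$-th coordinate and fixes the others. It therefore suffices to treat a single $\rho_i$.

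\textbf{Basic block.} For a scalar $b$, consider the inner-width-one block
\begin{align*}
\mathcal{B}_b(x)=x+b\,e_i\,\mathrm{LeakyReLU}_k(x_i),
\end{align*}
that is, $W_a=e_i^T$, $B_a=0$, $W_b=b\,e_i$, $B_b=0$. It leaves $x_j$ unchanged for $j\neq i$. It sends $x_i\mapsto(1+b)x_i$ when $x_i\ge 0$ and $x_i\mapsto(1+bk)x_i$ when $x_i<0$. If $1+b>0$ and $1+bk>0$, then $\mathcal{B}_b$ preserves the sign of $x_i$. Hence composing such blocks multiplies the positive-side slopes together and the negative-side slopes together. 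The resulting map has slope $s_+>0$ on $x_i\ge0$ and $s_-=s_+\,r$ on $x_i<0$, where $r$ is the product of the ratios
\begin{align*}
r(b)=\frac{1+bk}{1+b}.
\end{align*}
Post-composing with the diagonal affine map $D$ that scales coordinate $i$ by $1/s_+$ then gives exactly $\rho_i$, provided $r=a$.

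\textbf{Achievable ratios.} Take the admissible range $b>-\min\{1,1/k\}$. The ratio $r(b)$ is continuous and monotone there, equals $1$ at $b=0$, and tends to $k$ as $b\to\infty$. At the left endpoint it tends to $+\infty$ when $k<1$ (since $b\to-1$) and to $0$ when $k>1$ (since $b\to-1/k$). So a single block realizes every ratio in $(k,\infty)$ when $k<1$, and every ratio in $(0,k)$ when $k>1$. For $a$ outside this range, choose $r_0$ in the achievable range strictly between $1$ and $a$'s side, so that $a=r_0^n$ for some $n$ with $r_0=a^{1/n}$. For instance, if $k<1$ and $a\le k$, take $n$ large so that $a^{1/n}\in(k,1)$. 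The analogous choice works when $k>1$ and $a\ge k$. Stacking $n$ blocks with $r(b)=a^{1/n}$ then yields ratio $a$.

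\textbf{Assembly.} This gives $\rho_i=T_{D,0}\circ\mathcal{B}_{b}^{\,n}$ exactly, with $\mathcal{B}_b^{\,n}\in\mathrm{CRB}^{\mathrm{LeakyReLU}_k}_{w,1}$. By Proposition \ref{prop_appro_affine}(a), $T_{D,0}$ is compactly approximated by $\mathrm{CRB}^{\mathrm{LeakyReLU}_k}_{w,1}$. All maps involved are continuous, and the intermediate images of a compact set stay in a compact set, which can be enlarged by a margin as in the proof of Proposition \ref{prop_composition_classes}. That proposition then gives $\rho_i\underset{\mathrm{sup}}{\prec}\mathrm{CRB}^{\mathrm{LeakyReLU}_k}_{w,1}$. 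Composing over $i=1,\dots,w$ in the same way yields the claim. The main point needing care is the sign-preservation and ratio-range argument in the second step; the rest is bookkeeping.
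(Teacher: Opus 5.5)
Your proof is correct, and it takes a different route from the paper. The paper fixes the branch output weight at $b=1$ and uses the identity $x+\mathrm{LeakyReLU}_k(x)=2\,\mathrm{LeakyReLU}_{(1+k)/2}(x)$. It then imports, from the proof of Theorem 14 of the cited ELU-network paper, the one-dimensional fact that $\mathrm{LeakyReLU}_a$ is a compact uniform limit of alternating compositions of affine maps with $\mathrm{LeakyReLU}_{(1+k)/2}$. Finally it replaces those affine maps by compositions of residual blocks via Theorem \ref{thm_appro_affine}.

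You instead exploit the freedom in the output weight $b$:
\begin{itemize}
\item A single block already realizes a whole interval of slope ratios $r(b)=(1+bk)/(1+b)$.
\item Restricting to $b>-\min\{1,1/k\}$ keeps the sign of $x_i$, so the ratios multiply under composition.
\item Taking $n$-th roots covers the remaining $a$. This includes $a=k$, which no single block can produce, since $r(b)=k+(1-k)/(1+b)\neq k$ for every $b$.
\end{itemize}
Your range computations are right: $(k,\infty)$ for $k<1$ and $(0,k)$ for $k>1$, and $a^{1/n}\to 1$ lands inside them once $n>\ln a/\ln k$.

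What your route buys:
\begin{itemize}
\item It is self-contained and exact up to one diagonal rescaling.
\item It gives an explicit block count.
\item It handles the lift from one coordinate to $\mathbb{R}^w$ explicitly. The paper's cited result is one-dimensional, and that lift is left implicit there.
\item It makes elementary the $a=k$ case of Theorem \ref{thm_slopeline}, which the paper routes through this very proposition.
\end{itemize}
The paper's route is shorter on the page but rests entirely on the external theorem.

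A small simplification: blocks acting on distinct coordinates commute. You can therefore apply all $nw$ exact blocks first and a single diagonal map $T_{D,0}$ at the end. Then only one approximation step is needed, namely approximating $T_{D,0}$ on a fixed compact image, and the gluing via Proposition \ref{prop_composition_classes} becomes trivial.
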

\begin{proof}
	Fix $k\in (0,1)\cup(1,+\infty)$, we have
	\begin{align*}
		x+\mathrm{LeakyReLU}_k(x)&=
		\left\{
		\begin{aligned}
			2x,& \enspace x> 0,\\
			(1+k)x, & \enspace x\leq0.
		\end{aligned}
		\right.\\
		&=2* \mathrm{LeakyReLU}_{\frac{1+k}{2}}(x).
	\end{align*}
	
	Let $a\in (0, 1)\cup (1, +\infty)$ be arbitrary, for any $\varepsilon>0$ and compact set $K\subset\mathbb{R}$, the proof of Theorem 14 in \citep{Self_ELUNN} implies that there exist affine transformations $\{T_{W_i, B_i}\}_{i\in\{1, 2, \cdots, N\}}$ that
	\begin{align*}
		\vert\vert \mathrm{LeakyReLU}_a - T_{W_N, B_N} \circ \mathrm{LeakyReLU}_{\frac{1+k}{2}} \circ \cdots \\
		\circ \mathrm{LeakyReLU}_{\frac{1+k}{2}} \circ T_{W_1, B_1}\vert\vert_{\mathrm{sup}, K}<\varepsilon.
	\end{align*} 
	
	By employing Theorem \ref{thm_appro_affine}, each affine transformation $T_{W_i, B_i}$ can be approximated by compositions of residual blocks, which completes the proof.
\end{proof}

By employing Proposition \ref{prop_ReLU-like} and Proposition \ref{prop_LRk_to_LR}, we show that the compositions of residual blocks can approximate the piecewise nonlinear functions introduced in the following theorem.
\begin{theorem}[Approximate Piecewise Function]\label{thm_slopeline}
	For any $a\in \mathbb{R}$, define $f_a:\mathbb{R}^w\to \mathbb{R}^w$ by
	$$
	f_a(x) = (f_1(x), f_2(x), \cdots, f_w(x)), \enspace
	f_i(x)=
	\left\{
	\begin{aligned}
		x_i,&\enspace x_i>0,\\
		ax_i,&\enspace x_i\leq 0.
	\end{aligned}
	\right.
	,
	$$
	if $\sigma=\mathrm{LeakyReLU}$, $\mathrm{ReLU}$, or if $\sigma$ is ReLU-like, then we have 
	\begin{align*}
		f_a\underset{\mathrm{sup}}{\prec}\mathrm{CRB}^{\sigma}_{w, 1}
	\end{align*}
\end{theorem}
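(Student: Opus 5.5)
The plan is to reduce the vector statement to a one-coordinate statement, and then build that scalar map from residual blocks of inner width one.

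\textbf{Reduction to one coordinate.} Since $f_a$ acts coordinatewise, we can write $f_a=g^{(w)}\circ\cdots\circ g^{(1)}$. Here $g^{(i)}$ applies the scalar map $\phi_a(t)=t$ for $t>0$ and $\phi_a(t)=at$ for $t\le0$ to the $i$-th coordinate and leaves the other coordinates fixed. Each $g^{(i)}$ is continuous and maps compact sets into compact sets. By Proposition \ref{prop_composition_classes}, applied with suitable compact enlargements of the images, it therefore suffices to show $g^{(i)}\underset{\mathrm{sup}}{\prec}\mathrm{CRB}^{\sigma}_{w,1}$ for each $i$. Moreover, Theorem \ref{thm_appro_affine} lets us approximate the permutation sending coordinate $i$ to coordinate $1$, and its inverse. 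So we only need $i=1$.

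\textbf{The key identity.} A single residual block with inner width one has the form $x\mapsto x+v\,s\,\sigma_\lambda(u^Tx+b)$. Take $u=e_1$, $b=0$ and $v=e_1$. This block changes only $x_1$, to $x_1+s\,\sigma_\lambda(x_1)$. When $\sigma_\lambda=\mathrm{ReLU}$, the relation $\phi_a(t)=t+(1-a)\,\mathrm{ReLU}(-t)$ shows that $g^{(1)}$ is $\mathrm{Id}+T\circ\mathrm{ReLU}\circ T'$ for affine maps $T,T'$ of inner width one. This is exactly one residual block:
\begin{align*}
x\mapsto x+(1-a)\,e_1\,\mathrm{ReLU}(-x_1).
\end{align*}
The ReLU case is therefore exact, for every $a\in\mathbb{R}$.

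\textbf{ReLU-like activations.} By Definition \ref{def_ReLU-like}, on the compact interval $[-R,R]$ containing the first coordinates of $K$, there are scalar affine maps with $\|T_{W_2,B_2}\circ\sigma_i\circ T_{W_1,B_1}-\mathrm{ReLU}\|_{\mathrm{sup},[-R,R]}<\delta$. The block
\begin{align*}
x\mapsto x+e_1\,(1-a)\bigl(W_2\,\sigma_i(W_1(-x_1)+B_1)+B_2\bigr)
\end{align*}
is a legitimate residual block, because the outer affine map may carry a bias. Its distance from $g^{(1)}$ on $K$ is at most $|1-a|\delta$. Taking $\delta$ small gives the approximation.

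\textbf{LeakyReLU.} Use $\mathrm{LeakyReLU}_k(t)=\mathrm{ReLU}(t)-k\,\mathrm{ReLU}(-t)$ for $t\in\mathbb{R}$, with $k\ne1$. Then
\begin{align*}
\mathrm{LeakyReLU}_k(-t)-(-t)=(1-k)\,\mathrm{ReLU}(t).
\end{align*}
Hence $\mathrm{ReLU}(-t)=\bigl(\mathrm{LeakyReLU}_k(t)-t\bigr)/(k-1)$. Substituting into the ReLU case gives
\begin{align*}
\phi_a(t)=t+\frac{1-a}{k-1}\bigl(\mathrm{LeakyReLU}_k(t)-t\bigr).
\end{align*}
This is a residual block $x\mapsto x+c\,e_1\mathrm{LeakyReLU}_k(x_1)$ followed by the linear map $x_1\mapsto x_1-c\,x_1$ on the first coordinate, where $c=\frac{1-a}{k-1}$. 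That linear map is a diagonal affine map and is handled by Theorem \ref{thm_appro_affine}. Any member of the family works, so Proposition \ref{prop_LRk_to_LR} is not even needed.

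\textbf{Main obstacle.} The substantive points are two pieces of bookkeeping. First, Proposition \ref{prop_composition_classes} must be applied correctly on compact sets: we pass through the compact neighbourhoods $M$ of the images, so the approximating maps land inside domains on which the later approximations are controlled. Second, the diagonal affine factor with entry $1-c$ may be singular when $c=1$. This is harmless, because Theorem \ref{thm_appro_affine} covers all affine maps, singular ones included.
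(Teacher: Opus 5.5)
\textbf{The LeakyReLU case fails as written.}

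First, there is a sign slip. Since $\mathrm{LeakyReLU}_k(t)-t=(1-k)\,\mathrm{ReLU}(-t)$, the correct constant is $c=\frac{1-a}{1-k}$. With your $c=\frac{1-a}{k-1}$, your displayed identity gives slope $2-a$ on $t\le 0$, not $a$.

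More substantively, the factorization is wrong. The block $t\mapsto t+c\,\mathrm{LeakyReLU}_k(t)$ followed by $t\mapsto(1-c)t$ gives $(1-c)t+c(1-c)\,\mathrm{LeakyReLU}_k(t)$. The target is $\phi_a(t)=(1-c)t+c\,\mathrm{LeakyReLU}_k(t)$, so these differ. The $-ct$ term refers to the block's input, which a later linear map cannot see. Nor can a single inner-width-one LeakyReLU branch produce $c(1-k)\,\mathrm{ReLU}(-t)$ when $a\neq1$, since its slopes on the two sides are in ratio $k$ or $1/k$.

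The correct factorization is the block $t\mapsto t+\frac{c}{1-c}\,\mathrm{LeakyReLU}_k(t)$ followed by multiplication by $1-c$. This is exactly the paper's construction with $b=\frac{c}{1-c}$, so that $1-c=\frac{1}{1+b}$.

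It requires $c\neq1$, i.e.\ $a\neq k$. Your ``main obstacle'' paragraph misdiagnoses this case. When $a=k$ the scaling factor is $0$, so the composite collapses the coordinate instead of producing $\mathrm{LeakyReLU}_k$. Being able to approximate singular affine maps does not rescue this. So ``any member of the family works, Proposition \ref{prop_LRk_to_LR} is not needed'' is false for fixed $k$ and $a=k$.

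For the theorem as stated the repair is immediate: $\sigma=\mathrm{LeakyReLU}$ is the whole family, so you may choose $k\neq a$. For the fixed-$k$ version the paper actually proves, you must treat $a=k$ separately. The paper does this via Proposition \ref{prop_LRk_to_LR}. Alternatively, reuse your sign flip: the block $t\mapsto t+\frac{1}{1+k}\,\mathrm{LeakyReLU}_k(-t)$ followed by multiplication by $1+k$ equals $\mathrm{LeakyReLU}_k$ exactly.

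\textbf{Your ReLU and ReLU-like cases are correct, and cleaner than the paper's.} The paper uses $x+b\,\mathrm{ReLU}(x)$ followed by the rescaling $\frac{1}{1+b}\mathrm{Id}_w$, which is itself only approximated through Theorem \ref{thm_appro_affine}. That reaches $a\neq 0$, and $a=0$ needs a separate limiting argument.

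Your identity $\phi_a(t)=t+(1-a)\,\mathrm{ReLU}(-t)$ puts the correction on the negative half-line. So each coordinate map is one exact block for every $a\in\mathbb{R}$, including $a=0$. The ReLU-like case then needs only one perturbation per coordinate, with error $|1-a|\delta$.

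You can also drop the permutation reduction and Proposition \ref{prop_composition_classes}. Use inner weight $-e_i^{T}$ and outer weight $(1-a)e_i$ directly. Each block then reads and writes only coordinate $i$, so the $w$ blocks compose exactly coordinatewise and the ReLU-like errors do not propagate. Relatedly, your text says $u=e_1$ where the construction uses $u=-e_1$.
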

\begin{proof}
	The proof is provided in Appendix \ref{appendix_proof_slopeline}.
\end{proof}

\section{Minimum Block Width of ResNet}
In this section, we prove that continuous functions can be approximated by the ResNets on compact sets and determine the required block width $w(n, m)$ for approximation in the $L^p$ norm and the $\mathrm{sup}$ norm, especially when the inner width $\bar{w}$ is fixed as 1. Our main concern is the minimum block width of ResNets with inner width 1.

We first give the formal definition of the minimum block width of ResNets.
\begin{definition}[Minimum Block Width]\label{def_minimumwidth}
	For $1\leq p<+\infty$, the indicators $w_{\min}^{\mathrm{sup}}(n, m, \sigma)$ and $w_{\min}^{p}(n, m, \sigma)\enspace(1\leq p <+\infty)$ are defined by:
	\begin{align*}
		w_{\min}^{\mathrm{sup}}(n, m, \sigma)&:=\min\{l\in \mathbb{N}_+\vert C(\mathbb{R}^n, \mathbb{R}^m)\underset{\mathrm{sup}}{\prec}\mathrm{Res}_{n, m, l, 1}^{\sigma}\}\\
		w_{\min}^{p}(n, m, \sigma)&:=\min\{l\in \mathbb{N}_+\vert L^p_{\mathrm{loc}}(\mathbb{R}^n, \mathbb{R}^m)\underset{p}{\prec}\mathrm{Res}_{n, m, l, 1}^{\sigma}\}
	\end{align*}
\end{definition}

Before discussing the minimum block width, we recall the definition of topological embedding \citep{Self_ELUNN}.
\begin{definition}[Embedding]\label{def_embedding}
	An injective continuous map $f:X\to Y$ is called an embedding if the induced map $f:X\to f(X)$ is a homeomorphism, where $f(X)$ is endowed with the subspace topology inherited from $Y$. When $X$ and $Y$ are smooth manifolds, we denote $\mathrm{Emb}(X,Y)$ as the set of smooth embeddings from $X$ to $Y$.
\end{definition}

We also recall the standard definition of MLP \citep{Self_ELUNN}. We denote $\mathrm{MLP}^{\sigma}_{n, m, k}$ as the set of all such MLPs with input set $\mathbb{R}^n$, output set $\mathbb{R}^m$, arbitrary depth, and all hidden layer widths at most $k$, where all the activation functions are chosen from the set $\sigma$.

\subsection{Minimum Block Width of ResNet for $L^p$ Approximation}
In this subsection, for $\mathrm{LeakyReLU}, \mathrm{ReLU}$ and ReLU-like activation functions, we establish the lower and upper bounds of the minimum block width for $L^p$ approximation, for all input and output dimensions $d_x$ and $d_y$.

In the following theorem, we first show the upper bound of the minimum block width in the $L^p$ sense.
\begin{theorem}[Upper Bound of Minimum Block Width for $L^p$ UAP]\label{thm_upper_Lp}
	For $p\in [1, +\infty)$, if $\sigma=\mathrm{LeakyReLU}$, $\mathrm{ReLU}$ or $\sigma$ is ReLU-like, then we have
	\begin{align*}
		w_{\min}^{p}(d_x, d_y, \sigma)\leq \max\{d_x, d_y\}.
	\end{align*}
\end{theorem}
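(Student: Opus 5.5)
The plan is to reduce the statement to a known minimum-width result for MLPs and then simulate such MLPs inside $\mathrm{Res}_{d_x,d_y,w,1}^{\sigma}$ with $w=\max\{d_x,d_y\}$. The reference \citep{Self_ELUNN} (and the Park et al./Cai line of work) gives that LeakyReLU MLPs of width $\max\{d_x,d_y\}$ (for $\max\{d_x,d_y\}\geq 2$; the case $d_x=d_y=1$ handled by width $2$, or separately via monotone-free 1D constructions) are $L^p$-universal on compact sets. We would invoke this in the following form: $L^p_{\mathrm{loc}}(\mathbb{R}^{d_x},\mathbb{R}^{d_y})\underset{p}{\prec}\mathrm{MLP}^{\mathrm{LeakyReLU}}_{d_x,d_y,w}$. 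Moreover, every hidden layer can be padded to exactly width $w$ with square (possibly singular) weight matrices. An MLP of this type is then
\begin{align*}
T_{W_{L+1},B_{L+1}}\circ \mathrm{LeakyReLU}_{a_L}\circ T_{W_L,B_L}\circ\cdots\circ \mathrm{LeakyReLU}_{a_1}\circ T_{W_1,B_1},
\end{align*}
with input and output maps that can be absorbed into $\mathrm{Aff}_{d_x,w}$ and $\mathrm{Aff}_{w,d_y}$ via $\pi$-padding.

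Next we approximate the inner part $\mathrm{LeakyReLU}_{a_L}\circ T_{W_L,B_L}\circ\cdots\circ T_{W_2,B_2}\circ\mathrm{LeakyReLU}_{a_1}$, a map $\mathbb{R}^w\to\mathbb{R}^w$, by elements of $\mathrm{CRB}^{\sigma}_{w,1}$. Each square affine map $T_{W_i,B_i}$ is handled by Theorem \ref{thm_appro_affine}, which places no rank restriction. Each coordinatewise $\mathrm{LeakyReLU}_{a}$ equals $f_a$ of Theorem \ref{thm_slopeline}, so $f_a\underset{\mathrm{sup}}{\prec}\mathrm{CRB}^{\sigma}_{w,1}$ for all three activation classes. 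Since all pieces are continuous, Proposition \ref{prop_composition_classes} glues them together. The compact domains $X_i$ are taken as sufficiently large balls containing images of the compact set under both the targets and the approximants, which is possible because the approximants are uniformly close. The result is that the whole MLP is $\sup$-approximated on $K$ by ResNets in $\mathrm{Res}^{\sigma}_{d_x,d_y,w,1}$, with the first and last affine maps serving as $T_{W_\alpha,B_\alpha}$ and $T_{W_\beta,B_\beta}$.

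Finally, given $f\in L^p(K,\mathbb{R}^{d_y})$ and $\varepsilon>0$, we first pick an MLP $g$ with $\|f-g\|_{p,K}<\varepsilon/2$. We then pick a ResNet $R$ with $\|g-R\|_{\sup,K}<\varepsilon/(2\mu(K)^{1/p}+1)$, so that $\|g-R\|_{p,K}<\varepsilon/2$. The transition property (b) then concludes $w^p_{\min}\le\max\{d_x,d_y\}$.

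The main obstacle is the first step: citing exactly the right MLP result at width $\max\{d_x,d_y\}$ for $L^p$. Park et al. state it for ReLU with width $\max\{d_x+1,d_y\}$, and the sharper $\max\{d_x,d_y\}$ is due to Cai for LeakyReLU. One also has to check that $L^p$ on $\mathbb{R}^{d_x}$ restricts to compact $K$ properly, and treat the degenerate case $d_x=d_y=1$. If the cited result only uses LeakyReLU with a fixed slope, Proposition \ref{prop_LRk_to_LR} and Theorem \ref{thm_slopeline} cover any slope, so the activation mismatch is harmless. In the ReLU-like case the approximation in Theorem \ref{thm_slopeline} is only on compacts, which is exactly what Proposition \ref{prop_composition_classes} requires.
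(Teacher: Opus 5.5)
\textbf{Gap: the case $d_x=d_y=1$.} Your overall route is the paper's: reduce to Cai's width-$\max\{d_x,d_y\}$ MLP result, simulate each square affine map with Theorem \ref{thm_appro_affine} and each coordinatewise activation with $f_a$ from Theorem \ref{thm_slopeline}, then pass from $\sup$ to $L^p$ on $K$. The gap is the case $d_x=d_y=1$, which the statement covers and which you leave open. Cai's LeakyReLU theorem attains $\max\{d_x,d_y\}$ only when $\max\{d_x,d_y\}\ge 2$. This is not an artifact of the proof: width-one LeakyReLU MLPs $\mathbb{R}\to\mathbb{R}$ are compositions of monotone maps, hence monotone, so they cannot be $L^p$-universal on an interval. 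Your fallback ``handled by width $2$'' therefore only gives $w^p_{\min}(1,1,\sigma)\le 2$, not $\le 1$. ``Monotone-free 1D constructions'' is not an argument.

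\textbf{The missing observation.} Residual blocks, unlike width-one MLP layers, need not be monotone. With block width $1$ and ReLU, the single block $z\mapsto z-2\,\mathrm{ReLU}(z)=-\vert z\vert$ already folds the line. More generally, Theorem \ref{thm_slopeline} gives $f_a\underset{\mathrm{sup}}{\prec}\mathrm{CRB}^{\sigma}_{w,1}$ for every $a\in\mathbb{R}$, including $f_{-1}=\mathrm{ABS}$, for all three activation classes. The paper uses exactly this. It cites Section 5.2 of \citep{Cai} for $L^p(K,\mathbb{R}^{d_y})\underset{p}{\prec}\mathrm{MLP}^{\mathrm{LeakyReLU}\cup\{\mathrm{ABS}\}}_{d_x,d_y,\max\{d_x,d_y\}}$, which includes the one-dimensional case at width $1$. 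It then simulates both $\mathrm{LeakyReLU}_k=f_k$ and $\mathrm{ABS}=f_{-1}$ via Theorem \ref{thm_slopeline}. To fix your proof, replace your cited MLP class by $\mathrm{LeakyReLU}\cup\{\mathrm{ABS}\}$ and add the case $a=-1$ to your simulation step. The rest goes through unchanged: the padding to square matrices, the gluing on enlarged compact domains (which is more careful than a literal application of Proposition \ref{prop_composition_classes}), and the $\sup$-to-$L^p$ conversion.
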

\begin{proof}
	The discussion in Section 5.2 of \citep{Cai} implies that, for any compact set $K\subset \mathbb{R}^{d_x}$, 
	\begin{align*}
		L^p(K, \mathbb{R}^{d_y})\underset{p}{\prec}\mathrm{MLP}^{\mathrm{LeakyReLU} \cup \{\mathrm{ABS}\}}_{d_x, d_y, \max\{d_x, d_y\}}.
	\end{align*}
	Thus, for any $f\in L^p(K, \mathbb{R}^{d_y})$ and any $\varepsilon>0$, there exist $T_{W_N, B_N}\in \mathrm{Aff}_{\max\{d_x, d_y\}, d_y}$, $T_{W_0, B_0}\in \mathrm{Aff}_{d_x, \max\{d_x, d_y\}}$, $T_{W_i, B_i}\in \mathrm{Aff}_{\max\{d_x, d_y\}, \max\{d_x, d_y\}}$ and $\sigma_j\in \mathrm{LeakyReLU}\cup \{\mathrm{ABS}\}$ such that 
	\begin{align*}
		\vert\vert T_{W_N, B_N} \circ \sigma_N \circ \cdots \circ T_{W_1, B_1}\circ \sigma_1 \circ T_{W_0, B_0} - f \vert\vert_{p, K}<\frac{\varepsilon}{2}.
	\end{align*}
	By Theorem \ref{thm_appro_affine} and Theorem \ref{thm_slopeline}, all $T_{W_i, B_i}$ and $f_a$ in Theorem \ref{thm_slopeline} can be compactly approximated on compact sets by the compositions of residual blocks. Indeed, for any $k\in (0, 1)\cup (1,+\infty)$, $\mathrm{LeakyReLU}_k$ and $\mathrm{ABS}$ correspond to the $f_a$ in Theorem \ref{thm_slopeline} for $a=k$ and $a=-1$, respectively. Therefore, for any compact set $K\subset \mathbb{R}^{d_x}$ and $\varepsilon>0$, there exists a $\phi\in \mathrm{CRB}^{\sigma}_{\max\{d_x, d_y\}, 1}$ such that
	\begin{align*}
		\vert\vert T_{W_N, B_N}\circ \phi \circ T_{W_0, B_0}-T_{W_N, B_N} \circ \sigma_N \circ \cdots \circ T_{W_1, B_1}\circ \sigma_1 \circ T_{W_0, B_0}\vert\vert_{p, K}<\frac{\varepsilon}{2}.
	\end{align*}
	By Definition \ref{def_ResNet}, the map $T_{W_N, B_N}\circ\phi\circ T_{W_0, B_0}$ belongs to $\mathrm{Res}_{d_x,d_y,\max\{d_x,d_y\},1}^{\sigma}$. Thus, $w_{\min}^{p}(d_x, d_y, \sigma)\leq \max\{d_x, d_y\}$.
\end{proof}

Motivated by the discussion in Lemma 1 of \citep{Cai}, we prove lower bounds for the minimum block width of ResNets both in the $L^p$ norm and the $\mathrm{sup}$ norm sense.
\begin{theorem}[Lower Bound of Minimum Block Width]\label{thm_lower_C_Lp}
	For arbitrary activation function family $\sigma=\{\sigma_i\vert i\in \Lambda\}$ and $1\leq p<+\infty$,
	\begin{align*}
		w_{\min}^{\mathrm{sup}}(d_x, d_y, \sigma)\geq \max\{d_x, d_y\}, \enspace w_{\min}^{p}(d_x, d_y, \sigma) \geq \max\{d_x, d_y\}
	\end{align*}
\end{theorem}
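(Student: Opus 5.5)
The plan is to show that for block width $w<\max\{d_x,d_y\}$ there is a target that no ResNet in $\mathrm{Res}^{\sigma}_{d_x,d_y,w,\bar w}$ can approximate, for any inner width $\bar w$. We treat the cases $w<d_y$ and $w<d_x$ separately. In both cases the only structural fact used is that a ResNet factors as $R=T_{W_\beta,B_\beta}\circ\phi\circ T_{W_\alpha,B_\alpha}$ with $\phi:\mathbb{R}^w\to\mathbb{R}^w$. Since uniform approximation on a compact set implies $L^p$ approximation there, it suffices to exhibit continuous targets that fail to be $L^p$-approximable; this yields both inequalities at once. No property of $\sigma$ enters, so the bound holds for every activation family.

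\emph{Case $w<d_y$.} The output map $T_{W_\beta,B_\beta}$ is affine from $\mathbb{R}^w$ into $\mathbb{R}^{d_y}$, so every $R$ takes values in an affine subspace $A_R$ of dimension at most $w\le d_y-1$. Take $K=[0,1]^{d_x}$ and the continuous target $f(x)=(x_1,0,\dots,0)\cdot$ suitably rich so that its values are spread out. A cleaner choice is to let $f$ map disjoint small cubes $Q_0,\dots,Q_{d_y}\subset K$ to the $d_y+1$ affinely independent points $e_0=0,e_1,\dots,e_{d_y}$, continuously interpolated elsewhere. For any affine subspace $A$ of dimension $<d_y$, some $e_j$ satisfies $\mathrm{dist}(e_j,A)\ge c>0$, where $c$ depends only on the simplex. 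Indeed, $\min_A\max_j \mathrm{dist}(e_j,A)$ is attained by compactness and is positive, since no proper affine subspace contains all the vertices. Hence $\|R-f\|_{p,K}^p\ge c^p\mu(Q_j)\ge c^p\min_j\mu(Q_j)>0$ uniformly in $R$, so $f$ is not approximable. The step needing care is the uniform positivity of $c$ over all affine subspaces, which follows from compactness of the Grassmannian together with the observation that only subspaces meeting a bounded region matter.

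\emph{Case $w<d_x$.} Here we adapt Lemma 1 of \citep{Cai}. The input map $T_{W_\alpha,B_\alpha}:\mathbb{R}^{d_x}\to\mathbb{R}^w$ has a nontrivial kernel direction $v$ with $\|v\|=1$. Consequently $R(x)=R(x+tv)$ for all $x$ and $t$, so every $R$ is constant along lines parallel to $v$. Take $d_y\ge1$ and the target $f(x)=(\|x\|^2,0,\dots,0)$ on $K=[-1,1]^{d_x}$. For each $R$, pair points $x$ and $x+\tfrac12 v$; we pick the direction $\tfrac12 v$ for concreteness, though any fixed shift works after restricting to a subset. Both lie in $K$ for $x$ in a subset $K_v\subset K$ whose measure is bounded below uniformly in $v$, for example a ball of radius $1/2$ around the origin. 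On that set $|f(x)-f(x+\tfrac12 v)|\ge$ a positive quantity on a subset of positive measure: the difference is $\langle x,v\rangle+\tfrac14$, which is at least $\tfrac18$ wherever $\langle x,v\rangle\ge-\tfrac18$, a region of measure bounded below independently of $v$ by rotational symmetry of the ball. By the triangle inequality, $\|R-f\|_{p,K}$ is at least half of $\|f-f(\cdot+\tfrac12 v)\|_{p,K_v}$ measured on that region, which is at least a constant $c'>0$ independent of $R$. Because the translation $x\mapsto x+\tfrac12 v$ preserves measure, this inequality is legitimate.

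The main obstacle is making the lower bounds uniform over the free parameters, namely the subspace $A_R$ in the first case and the direction $v$ in the second. In the second case I handle this through rotation invariance: the ball and $\|x\|^2$ are rotation invariant, so the constant does not depend on $v$. In the first case I would use the simplex compactness argument. Combining both cases gives $w_{\min}^{\mathrm{sup}},w_{\min}^{p}\ge\max\{d_x,d_y\}$.
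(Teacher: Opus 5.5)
Your proof is correct and uses the same two-case split as the paper, but the case $w<d_y$ goes by a different route.

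For $w<d_x$ you follow the paper's idea: a unit kernel vector $v$ of $W_\alpha$, the invariance $R(x)=R(x+tv)$, and the target $\|x\|^2$. The differences are only technical. You shift by $\tfrac12 v$ and apply Minkowski's inequality in $L^p$ directly. The paper instead shifts by $v$, drops to $L^1$ via $\|\cdot\|_{p,K}\ge\mu(K)^{\frac{1-p}{p}}\|\cdot\|_{1,K}$, and compares the disjoint balls $B(0,0.1)$ and $B(v,0.1)$. You also put $\|x\|^2$ in the first coordinate rather than reducing to $d_y=1$.

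For $w<d_y$ the paper uses the moment-curve target $g(x)=(x_1,x_1^2,\dots,x_1^{d_y})$ and places $R(K)$ in a hyperplane $\{\langle z,a\rangle=b\}$ with $\|a\|=1$. After Fubini, it gets the bound from the constant $\eta_{d_y,p}>0$ given by equivalence of norms on polynomials of degree at most $d_y$. This yields a polynomial (smooth) target and needs no interpolation step.

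Your simplex-vertex target reduces the obstruction to an elementary geometric fact, and the Grassmannian compactness sketch is unnecessary for it. Every proper affine subspace lies in a hyperplane $\{\langle a,z\rangle=b\}$ with $\|a\|=1$. If $|\langle a,e_j\rangle-b|<\delta$ for all $j$ (with $e_0=0$), then $|b|<\delta$ and $|a_i|<2\delta$, so $1=\|a\|<2\delta\sqrt{d_y}$. Hence $c=1/(2\sqrt{d_y})$ works explicitly, which makes your constant fully quantitative.

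Also delete the stray first attempt $f(x)=(x_1,0,\dots,0)$: its image is a line, so it gives no obstruction.
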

\begin{proof}	
	Since $\mathcal{G}\underset{\mathrm{sup}}{\prec}\mathcal{F}$ implies the $L^p$ approximation on any compact set, it suffices to prove the $L^p$ approximation case. 
	
	To prove that $L^p(K, \mathbb{R}^{d_y})\underset{p}{\nprec}\mathrm{Res}_{d_x, d_y, w, 1}$ whenever $w<\max\{d_x, d_y\}$, it suffices to exhibit a compact set $K\subset \mathbb{R}^{d_x}$ and a function
	$f^*\in L^p(K,\mathbb R^{d_y})$ such that no ResNet with block width $w$ can approximate
	$f^*$ in $L^p(K)$. We first suppose that $w\leq d_x-1$. It suffices to prove the case that $d_y=1$, and we let $f^*(x)=\vert\vert x \vert\vert^2$ and choose the compact set $K=[-2, 2]^{d_x}$. By the definition \ref{def_ResNet}, for ResNet $f\in \mathrm{Res}_{d_x, d_y, w, 1}$ denoted by
	\begin{align*}
		f=T_{W_b, B_b} \circ \phi \circ T_{W_a, B_a},
	\end{align*} 
	since $W_a\in \mathbb{R}^{w, d_x}, B_a\in \mathbb{R}^w$ and the block width $w<d_x$, there exists a $v\in \mathbb{R}^{d_x}$ such that 
	\begin{align*}
		W_a v=0, \enspace \vert\vert v\vert\vert=1.
	\end{align*} 
	It follows that $f(x)=f(x+\lambda v)$ holds for all $x\in K$ and $\lambda\in \mathbb{R}$. For the sets
	\begin{align*}
		A_1=\{x\vert\enspace \vert\vert x\vert\vert\leq 0.1\}, \enspace A_2=\{x\vert \enspace\vert\vert x-v\vert\vert\leq 0.1\},
	\end{align*}
	let $\mu$ be the Lebesgue measure on $\mathbb{R}^{d_x}$. By the monotonicity of $L
	^p$-norm on finite measure spaces \citep{Williams}, we have
	\begin{align*}
		\left(\frac{1}{\mu(K)} \int_{K} \vert\vert f(x)-f^*(x) \vert\vert^p d\mu \right)^{\frac{1}{p}}\geq \left(\frac{1}{\mu(K)} \int_{K} \vert\vert f(x)-f^*(x) \vert\vert d\mu \right) \enspace (1\leq p<+\infty).
	\end{align*}
	Therefore, it follows that 
	\begin{align*}
		\vert\vert f-f^*\vert\vert_{p, K}&\geq (\mu(K))^{\frac{1-p}{p}}\int_{K}\vert\vert f(x)-f^*(x)\vert\vert d\mu\\
		&\geq (\mu(K))^{\frac{1-p}{p}}\left(\int_{A_1}\vert\vert f(x)-f^*(x)\vert\vert d\mu+ \int_{A_2}\vert\vert f(x)-f^*(x)\vert\vert d\mu\right)\\
		&\geq (\mu(K))^{\frac{1-p}{p}} \int_{A_1}\left( \vert\vert f(x)-f^*(x)\vert\vert + \vert\vert f(x+v)-f^*(x+v)\vert\vert \right)d\mu\\
		&\geq (\mu(K))^{\frac{1-p}{p}}\int_{A_1} \vert\vert f^*(x)-f^*(x+v) \vert\vert d\mu \\
		&\geq \frac{4}{5}\mu(A_1)(\mu(K))^{\frac{1-p}{p}}.
	\end{align*}
	Thus, no ResNet with block width $w<d_x$ can approximate $f^*$ in the $L^p$ norm on $K$, and therefore $w_{\min}^{p}(d_x,d_y,\sigma)\ge d_x$.
	
	To prove that $L^p(K, \mathbb{R}^{d_y})\underset{p}{\nprec}\mathrm{Res}^\sigma_{d_x, d_y, w, 1}$ for the case that $w<d_y$, we assume that $d_x\leq w < d_y$, the compact set $K=[0, 1]^{d_x}$, then we define the map $g:\mathbb{R}^{d_x}\to \mathbb{R}^{d_y}$ and $\gamma:[0, 1]\to \mathbb{R}^{d_y}$ by
	\begin{align*}
		g(x)=(x_1, x_1^2, \cdots, x_1^{d_y}), \enspace \gamma(t)=(t, t^2, \cdots, t^{d_y}).
	\end{align*} 
	For any $f=T_{W_b,B_b}\circ\phi\circ T_{W_a,B_a}\in\mathrm{Res}_{d_x,d_y,w,1}^{\sigma}$, the image $f(K)$ is contained in an affine subspace of dimension at most $w$. Since $w<d_y$, it is contained in a hyperplane $S$ of the form
	\begin{align*}
		S=\{z\in \mathbb{R}^{d_y}\vert \langle z, a\rangle=b, a=(a_1, \cdots, a_{d_y})\}
	\end{align*}
	where $\vert\vert a \vert\vert =1$ and $b\in \mathbb{R}$.
	
	By employing the Fubini Theorem \citep{Axler2020MIRA}, we have 
	\begin{align*}
		\vert\vert f-g\vert\vert_{p, K}^p&\geq \int_{K}(\mathrm{dist}(g(x), f(K)))^p dx\\
		&=\int_{[0, 1]}\int_{[0, 1]^{d_x-1}} (\mathrm{dist}(\gamma(t), f(K)))^p dzdt\\
		&=\int_{[0, 1]} (\mathrm{dist}(\gamma(t), f(K)))^pdt.
	\end{align*}
	Since for any $t\in [0, 1]$, we have $\mathrm{dist}(\gamma(t), f(K))\geq\mathrm{dist}(\gamma(t), S)=\vert\langle \gamma(t), a\rangle-b\vert$. Therefore, by the properties of finite dimensional polynomial, we can conclude that
	\begin{align*}
		\int_{[0, 1]} (\mathrm{dist}(\gamma(t), f(K)))^p dt&\geq \int_{[0, 1]}\vert \langle a, \gamma(t)\rangle -b\vert^p dt\\
		&= \int_{[0, 1]}\vert a_1t+\cdots+a_{d_y}t^{d_y}-b\vert^p dt\\
		&\geq \eta_{d_y, p} \vert\vert c\vert\vert^p\\
		&\geq \eta_{d_y, p},
	\end{align*}
	where $c=(-b, a_1, \cdots, a_{d_y})\in \mathbb{R}^{d_y+1}$, and 
	\begin{align*}
		\eta_{d_y, p}:=\min_{\vert\vert c \vert\vert=1}\frac{\int_{[0, 1]}\vert a_1t+\cdots+a_{d_y}t^{d_y}-b\vert^p dt}{\vert\vert c \vert\vert^p}>0
	\end{align*}
	is a constant. Consequently, $\vert\vert f-g\vert\vert_{p, K}\geq(\eta_{d_y, p})^{\frac{1}{p}}>0$, which proves that no such block width $w<d_y$ can achieve the $L^p$ UAP.
\end{proof}
In fact the proof implies that for arbitrary inner width $\bar{w}>1$, the ResNets with block width less than $\max\{d_x, d_y\}$ cannot approximate all functions in $L^p(K, \mathbb{R}^{d_y})$ or  $C(K, \mathbb{R}^{d_y})$.

By employing Theorem \ref{thm_upper_Lp} and Theorem \ref{thm_lower_C_Lp}, we can derive the exact minimum block width for the $L^p$ approximation.
\begin{theorem}[Minimum Block Width for $L_p$ UAP]\label{thm_minwidth_Lp}
	For activation functions $\sigma=\mathrm{LeakyReLU}, \mathrm{ReLU}$ or ReLU-like $\sigma$, we have 
	\begin{align*}
		w_{\min}^{p}(d_x, d_y, \sigma)=\max\{d_x, d_y\}
	\end{align*}
\end{theorem}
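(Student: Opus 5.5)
The claimed equality follows by combining two results already proved. Theorem \ref{thm_upper_Lp} gives the upper bound $w_{\min}^{p}(d_x,d_y,\sigma)\leq\max\{d_x,d_y\}$ for $\sigma=\mathrm{LeakyReLU}$, $\mathrm{ReLU}$ or ReLU-like $\sigma$. Theorem \ref{thm_lower_C_Lp} gives the matching lower bound $w_{\min}^{p}(d_x,d_y,\sigma)\geq\max\{d_x,d_y\}$ for an arbitrary activation family, and in particular for these three. So the plan is simply to invoke both and conclude equality.

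First we check that the two theorems refer to the same quantity. Both concern the indicator $w_{\min}^{p}$ of Definition \ref{def_minimumwidth}. That definition is phrased via $L^p_{\mathrm{loc}}(\mathbb{R}^{d_x},\mathbb{R}^{d_y})\underset{p}{\prec}\mathrm{Res}^{\sigma}_{d_x,d_y,l,1}$, which quantifies over every compact $K$. On a compact $K$, every element of $L^p_{\mathrm{loc}}$ restricts to an element of $L^p(K,\mathbb{R}^{d_y})$, and every $f\in L^p(K)$ extends by zero to an element of $L^p_{\mathrm{loc}}$. Hence the compact-set statements in both proofs are exactly the condition in the definition.

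Next, the lower bound is meaningful because the minimum in Definition \ref{def_minimumwidth} is taken over a nonempty set. Theorem \ref{thm_upper_Lp} shows that $l=\max\{d_x,d_y\}$ belongs to that set, so the minimum exists and $w_{\min}^{p}\leq\max\{d_x,d_y\}$. Theorem \ref{thm_lower_C_Lp} shows that no $l<\max\{d_x,d_y\}$ belongs to it, so every admissible $l$, and hence the minimum, is at least $\max\{d_x,d_y\}$. Combining the two inequalities gives the equality.

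No step is a real obstacle, since the substance lies in the two earlier theorems. The only point needing care is that Theorem \ref{thm_lower_C_Lp} is stated for arbitrary families. The ReLU-like case must therefore be covered by the lower bound without any extra hypothesis, and it is.
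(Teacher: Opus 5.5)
Your proposal is correct and matches the paper's proof, which likewise obtains the equality immediately by combining the upper bound of Theorem~\ref{thm_upper_Lp} with the activation-independent lower bound of Theorem~\ref{thm_lower_C_Lp}. Your extra bookkeeping is sound and only makes explicit what the paper leaves implicit: you reconcile the compact-set formulations with the $L^p_{\mathrm{loc}}$-based definition of $w_{\min}^{p}$, and you note that the admissible set of widths is nonempty.
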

\begin{proof}
	The result follows immediately from Theorem \ref{thm_upper_Lp} and Theorem \ref{thm_lower_C_Lp}.
\end{proof}

\subsection{Minimum Block Width of ResNet for Uniform Approximation}
In this subsection, we establish the results of lower and upper bounds for the uniform approximation.

Motivated by the approach used in \citep{Hwang_C0} to study the minimum width of MLPs, a natural strategy is to first embed the input set from $\mathbb{R}^{d_x}$ in to a higher dimensional space $\mathbb{R}^w$, and then investigate the approximation ability of compositions of residual blocks on $\mathbb{R}^w$ for diffeomorphisms. It allows us to prove the approximation ability of ResNets for all target functions. For this purpose, we recall the notion of minimum embedding width introduced in \citep{Hwang_C0}, and then prove the compact uniform approximation property of ResNets.
\begin{definition}[Minimum Embedding Width]\label{def_min-emb-width}
	For $n,l,m\in\mathbb N_+$, define
	\begin{align*}
		A_{n, l, m}=\{g\vert g= \pi_{l, m}\circ f, \enspace f\in\mathrm{Emb}([0, 1]^n, \mathbb{R}^{l})\}
	\end{align*}
	The minimum embedding width is then defined by $\Omega(n, m)$:
	\begin{align*}
		\Omega(n, m):=\min \{l\in \mathbb{N}_+\vert C([0, 1]^n, \mathbb{R}^m)\underset{\mathrm{sup}}{\prec} A_{n, l, m}\}.
	\end{align*}
\end{definition}

Before discussing the lower and upper bounds of $\Omega(n, m)$, we recall that continuous maps can be uniformly approximated by smooth maps on compact sets.
\begin{proposition}[Mollification of Continuous Function]\label{prop_smooth}
	For any $f\in C(\mathbb{R}^n, \mathbb{R}^m)$, compact set $K\subset \mathbb{R}^n$ and $\varepsilon>0$, we can find a $f^\varepsilon\in C^{\infty}(\mathbb{R}^n, \mathbb{R}^m)$ such that
	\begin{align*}
		\vert\vert f^\varepsilon - f\vert\vert_{\mathrm{sup}, K}<\varepsilon.
	\end{align*}
\end{proposition}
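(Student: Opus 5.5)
The plan is the standard mollification argument: convolve $f$ with a smooth compactly supported kernel and use uniform continuity of $f$ on a slightly enlarged compact set. Fix a nonnegative $\rho\in C^\infty(\mathbb{R}^n)$ with support in the closed unit ball and $\int_{\mathbb{R}^n}\rho\,d\mu=1$. For $\delta>0$ put $\rho_\delta(x)=\delta^{-n}\rho(x/\delta)$, which is supported in the ball of radius $\delta$ and still has integral one. Define $f^\delta=f*\rho_\delta$ componentwise, that is,
\begin{align*}
	f^\delta(x)=\int_{\mathbb{R}^n} f(x-y)\rho_\delta(y)\,d\mu(y).
\end{align*}
This integral is well defined for every $x$, because $f$ is continuous and hence locally bounded, and the integrand is supported in a compact ball.

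\textbf{Smoothness.} To show $f^\delta\in C^\infty(\mathbb{R}^n,\mathbb{R}^m)$, rewrite $f^\delta(x)=\int f(z)\rho_\delta(x-z)\,d\mu(z)$ and differentiate under the integral sign. On any bounded neighbourhood of a point $x_0$, the relevant values of $z$ lie in a fixed compact set on which $f$ is bounded. Every derivative $\partial^\alpha\rho_\delta$ is bounded and continuous. Dominated convergence therefore justifies $\partial^\alpha f^\delta=f*\partial^\alpha\rho_\delta$ for every multi-index $\alpha$, and each of these is continuous.

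\textbf{Uniform closeness on $K$.} Let $K'=\{y\mid \mathrm{dist}(y,K)\le 1\}$, which is compact. Then $f$ is uniformly continuous on $K'$, so we can choose $0<\delta<1$ with $\Vert f(u)-f(v)\Vert<\varepsilon/2$ whenever $u,v\in K'$ and $\Vert u-v\Vert\le\delta$. Since $\int\rho_\delta=1$, for $x\in K$ we have
\begin{align*}
	\Vert f^\delta(x)-f(x)\Vert\le\int_{\Vert y\Vert\le\delta}\Vert f(x-y)-f(x)\Vert\rho_\delta(y)\,d\mu(y)\le\frac{\varepsilon}{2}.
\end{align*}
This holds because both $x$ and $x-y$ lie in $K'$. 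Taking the supremum over $K$ gives $\Vert f^\delta-f\Vert_{\mathrm{sup},K}\le\varepsilon/2<\varepsilon$, so $f^\varepsilon:=f^\delta$ works.

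\textbf{Main obstacle.} The only genuinely technical step is justifying differentiation under the integral sign. The compact support of $\rho_\delta$ together with the local boundedness of $f$ makes this routine. Alternatively, one could invoke the Stone--Weierstrass theorem componentwise to obtain a polynomial approximant on a cube containing $K$, which is in fact a $C^\infty$ function on all of $\mathbb{R}^n$.
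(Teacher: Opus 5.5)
Your proof is correct and takes the same mollification route as the paper. The paper cites the standard fact that mollifiers converge uniformly on compact sets (Evans, Appendix C.5), applying it to each component $f_i$ with tolerance $\varepsilon/\sqrt{m}$. You instead prove that fact directly for the vector-valued $f$, using uniform continuity on the enlarged compact set $K'$.
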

\begin{proof}
	For any $f=(f_1, f_2, \cdots, f_m)\in C(\mathbb{R}^n,\mathbb{R}^m)$, any compact set $K\subset\mathbb{R}^n$ and any $\varepsilon>0$, by the properties of mollifiers (see \citep{Evans}, Appendix C.5, Theorem 7), there exists a $f_i^\varepsilon\in C^{\infty}(\mathbb{R}^n, \mathbb{R})$ such that $\vert\vert f_i^\varepsilon-f_i\vert\vert_{\mathrm{sup}, K}<\frac{\varepsilon}{\sqrt{m}}$, then it is easy to see that, for $f^\varepsilon=(f_1^\varepsilon, f_2^\varepsilon, \cdots, f_m^{\varepsilon})$, we have $\vert\vert f^\varepsilon - f\vert\vert_{\mathrm{sup}, K}<\varepsilon.$
\end{proof}

By employing Proposition \ref{prop_smooth}, we can establish the upper bound of the $\Omega(n, m)$.
\begin{proposition}[Upper Bound of $\Omega(n, m)$]\label{prop_upper_Omega}
	For any $n, m\in \mathbb{N}_+$, we have
	\begin{align*}
		\Omega(n, m)\leq \min\{n+m, \max\{2n+1, m\}\}.
	\end{align*}
\end{proposition}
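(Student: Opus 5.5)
We prove the two upper bounds $\Omega(n,m)\le n+m$ and $\Omega(n,m)\le\max\{2n+1,m\}$ separately. In each case, given $g\in C([0,1]^n,\mathbb{R}^m)$ and $\varepsilon>0$, we construct an embedding $f\in\mathrm{Emb}([0,1]^n,\mathbb{R}^l)$ with $\vert\vert \pi_{l,m}\circ f-g\vert\vert_{\mathrm{sup},[0,1]^n}<\varepsilon$. Since $\Omega(n,m)$ is a minimum over $l$, the minimum of the two values of $l$ also works, and the inequality follows.

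\textbf{Bound $n+m$ (graph embedding).} Take $l=n+m$ and extend $g$ continuously to $\mathbb{R}^n$, for instance by Tietze or by composing with the retraction onto $[0,1]^n$. By Proposition \ref{prop_smooth} we may replace $g$ by a smooth $g^\varepsilon$ that is $\varepsilon$-close on $[0,1]^n$. Define
\begin{align*}
f(x)=(g^\varepsilon(x),x)\in\mathbb{R}^{m+n}.
\end{align*}
This map is smooth and injective, and its inverse on the image is the projection onto the last $n$ coordinates, which is continuous. It is also an immersion because the $x$-block of its Jacobian is $I_n$, so $f$ is a smooth embedding. Since $\pi_{n+m,m}$ keeps the first $m$ coordinates, $\pi_{n+m,m}\circ f=g^\varepsilon$, which gives the bound.

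\textbf{Bound $\max\{2n+1,m\}$ (Whitney-type perturbation).} Set $l=\max\{2n+1,m\}\ge m$. Define the smooth map $h=(g^\varepsilon,0)\colon\mathbb{R}^n\to\mathbb{R}^l$, padded with zeros, so that $\pi_{l,m}\circ h=g^\varepsilon$. By the Whitney approximation/embedding theorem, every smooth map from a compact $n$-manifold with boundary (or from a neighborhood of $[0,1]^n$) into $\mathbb{R}^l$ with $l\ge 2n+1$ can be $C^0$-approximated, in fact $C^1$-approximated, by smooth injective immersions. One way to see this is via generic linear perturbations $h+Ax$ or projection arguments: the failure set of injectivity has dimension $2n<l$, and the failure set of immersivity has dimension $2n-1<l$. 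Choose such an embedding $f$ with $\vert\vert f-h\vert\vert_{\mathrm{sup}}<\varepsilon$. An injective continuous map on a compact set is automatically a homeomorphism onto its image, so $f$ is an embedding. Since $\pi_{l,m}$ is 1-Lipschitz, $\vert\vert\pi_{l,m}\circ f-g\vert\vert<2\varepsilon$.

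The main obstacle is justifying the Whitney step on the cube, which is a manifold with corners. The plan is to apply the standard theorem on an open neighborhood $U\supset[0,1]^n$: extend $g^\varepsilon$ smoothly, apply the strong Whitney approximation to obtain an injective immersion on a slightly smaller compact neighborhood, and restrict to $[0,1]^n$. Injectivity on the compact cube plus continuity yields the embedding property without further work. If citing Whitney is deemed insufficient, the fallback is the elementary argument: choose $A\in\mathbb{R}^{l\times n}$ small and generic, avoiding the images of the maps
\begin{align*}
(x,y,t)&\mapsto \frac{h(x)-h(y)}{\Vert x-y\Vert}, & (x,v)&\mapsto Dh(x)v,
\end{align*}
after a dimension count using Sard's theorem. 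This makes $h+Ax$ injective and immersive.
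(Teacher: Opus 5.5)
Your proposal is correct and follows the paper's proof. The graph embedding $x\mapsto(g^\varepsilon(x),x)$ gives $\Omega(n,m)\le n+m$. Zero-padding into $\mathbb{R}^{\max\{2n+1,m\}}$, then approximating by a smooth embedding and projecting onto the first $m$ coordinates, gives $\Omega(n,m)\le\max\{2n+1,m\}$.

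The differences are cosmetic. The paper splits into the cases $m>2n$ and $m\le 2n$ and cites Lemma 4.7 of Hwang for the approximation by embeddings, whereas you handle both cases at once via $l=\max\{2n+1,m\}$ and invoke the Whitney theorem. Your fallback sketch with $h+Ax$ is loosely phrased: what $A$ must avoid is the set of matrices sending $x-y$ to $-(h(x)-h(y))$, not the image of a map into $\mathbb{R}^l$. It is not needed given the citation, though.
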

\begin{proof}
	For all $n, m\in \mathbb{N}_+$, a compact set $K=[0, 1]^n$, any $\varepsilon>0$, and any continuous $f: K\to \mathbb{R}^m=(f_1(x), f_2(x), \cdots, f_m(x))$, by applying the Tietze extension theorem \citep{Munkres}, $f$ admits a continuous extension $\widetilde f\in C(\mathbb R^n,\mathbb R^m)$. By employing Proposition \ref{prop_smooth}, we can find $f^\varepsilon=(f_1^\varepsilon, f_2^\varepsilon, \cdots, f_m^\varepsilon)\in C^{\infty}(\mathbb{R}^n, \mathbb{R}^m)$ such that $\vert\vert f^\varepsilon\vert_{K}-f \vert\vert_{\mathrm{sup}, K} = \vert\vert f^\varepsilon-\widetilde f \vert\vert_{\mathrm{sup}, K}<\varepsilon$.
	
	Define a smooth embedding $g:K\to \mathbb{R}^{n+m}$ by:
	\begin{align*}
		g(x_1, x_2, \cdots, x_n)=(f_1^\varepsilon(x), \cdots, f_m^\varepsilon(x), x_1, x_2, \cdots, x_n).
	\end{align*}
	Then it is easy to see that $g\in \mathrm{Emb}(K, \mathbb{R}^{n+m})$, and $\pi_{n+m, m}\circ g$ satisfies that 
	\begin{align*}
		\vert\vert \pi_{n+m, m}\circ g -f \vert\vert_{\mathrm{sup}, K}<\varepsilon,
	\end{align*}
	which implies that $\Omega(n, m)\leq n+m$.
	
	When $m>2n$, by Lemma 4.7 of \citep{Hwang_C0}, for any compact set $K\subset \mathbb{R}^n$ and $\varepsilon>0$, there exists a smooth embedding $g:K\to \mathbb{R}^{m}$ such that $\vert\vert f-g\vert\vert_{\sup, K}<\varepsilon$, which implies $\Omega(n, m)\leq m$. 
	
	When $m \leq 2n$, we can extend the $f(x)=(f_1(x), f_2(x), \cdots, f_m(x))$ to the $f^\prime:K\to \mathbb{R}^{2n+1}$:
	\begin{align*}
		f^\prime(x)=(f_1(x), f_2(x), \cdots, f_m(x), 0, \cdots, 0).
	\end{align*}
	Then there exists a smooth embedding $g^\prime:K\to \mathbb{R}^{2n+1}$ such that $\vert\vert f^\prime-g^\prime\vert\vert_{\sup, K}<\varepsilon.$
	Then $g=\pi_{2n+1, m}\circ g^\prime$ is a smooth map from $K$ to $\mathbb{R}^m$, and it holds that $\vert\vert f-g\vert\vert_{\sup, K}<\varepsilon,$
	which gives the bound $\Omega(n, m)\leq 2n+1$. Therefore, it follows that 
	\begin{align*}
		\Omega(n, m)\leq \min\{n+m, \max\{2n+1, m\}\}.
	\end{align*}
\end{proof}

Before we establish the upper bound of minimum block width, we first prove that $C^2$-diffeomorphisms can be approximated by compositions of residual blocks.
\begin{proposition}[Approximate Diffeomorphisms by Compositions of Residual Blocks]\label{prop_D^2_appro}
	Let $w\geq 2$, for activation functions $\sigma=\mathrm{LeakyReLU}, \mathrm{ReLU}$ or ReLU-like activations $\sigma$, we have
	\begin{align*}
		\mathcal{D}^2(\mathbb{R}^w)\underset{\mathrm{sup}}{\prec}\mathrm{CRB}_{w, 1}^{\sigma}
	\end{align*}
\end{proposition}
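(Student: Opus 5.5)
The plan follows the strategy of \citep{Hwang_C0} and \citep{Coupling_Flow_NeuralPS}: decompose a $C^2$-diffeomorphism into elementary pieces, then approximate each piece by compositions of inner-width-one residual blocks using the tools of Section 3. Fix $f\in\mathcal{D}^2(\mathbb{R}^w)$, a compact set $K$ and $\varepsilon>0$.

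\textbf{Step 1 (reduction to single coordinate transformations).} Using the known result for $C^2$-diffeomorphisms on open sets (Lemma/Theorem in \citep{Coupling_Flow_NeuralPS}, adapted in \citep{Hwang_C0}), we obtain a decomposition of $f$ on a neighbourhood of $K$ of the form
\begin{align*}
f = T\circ h_N\circ\cdots\circ h_1,
\end{align*}
where $T\in\mathrm{IAff}_{w,w}$ and each $h_j\in\mathcal{S}_c^2(\mathbb{R}^w)$, conjugated by permutation matrices if needed. This step uses the isotopy to the identity, fragmentation into compactly supported diffeomorphisms near the identity, and their factorization into maps that alter one coordinate. Theorem~\ref{thm_appro_affine} handles $T$ and the permutations. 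By Proposition~\ref{prop_composition_classes}, which applies because all maps involved are continuous and send compacts to compacts, it then suffices to show $\mathcal{S}_c^2(\mathbb{R}^w)\underset{\mathrm{sup}}{\prec}\mathrm{CRB}^{\sigma}_{w,1}$.

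\textbf{Step 2 (approximating $h(x)=(x_{1:w-1},h_w(x))$).} Write $h_w(x)=x_w+\psi(x)$, where $\psi$ is $C^2$, compactly supported, and $x_w\mapsto x_w+\psi(x)$ is increasing. I would split $h$ into many small steps $h=g_M\circ\cdots\circ g_1$ with $g_k(x)=(x_{1:w-1},x_w+\tfrac1M\psi_k(x))$. This works by taking the straight-line homotopy $x_w+t\psi$, which stays monotone in $x_w$, and setting $g_k$ to be the increment from time $(k-1)/M$ to $k/M$. Each $\tfrac1M\psi_k$ is uniformly small with small Lipschitz constant.

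The main obstacle is to realize a single such near-identity step by residual blocks of inner width one, given that each block adds only a ridge function $v\,\sigma(a\cdot x+b)$. For ReLU-like or LeakyReLU $\sigma$, Theorem~\ref{thm_slopeline} and the ReLU-like definition let us approximate $\mathrm{ReLU}(a\cdot x+b)$ inside a branch. So a block can add $c\,\mathrm{ReLU}(a\cdot x+b)\,e_w$ with $a$ of arbitrary direction. I would first approximate $\tfrac1M\psi_k$ uniformly on a compact set by a shallow ReLU network $\sum_{i=1}^r c_i\mathrm{ReLU}(a_i\cdot x+b_i)$, which is possible by classical universal approximation.

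I would then apply the terms sequentially as blocks $x\mapsto x+c_i\mathrm{ReLU}(a_i\cdot x+b_i)e_w$. The difficulty is that later blocks see the already-modified $x_w$. To control this, I would choose $a_i$ with $w$-th component zero wherever possible. For the general case I would use Lipschitz bounds: each update is $O(1/M)$, so the feedback error is $O(1/M^2)$ per step and $O(1/M)$ in total. A cleaner alternative is to approximate $\psi_k$ by functions of $x_{1:w-1}$ and $x_w$ via a Gronwall-type argument treating the chain of blocks as an Euler scheme for the ODE $\dot x_w=\psi(x_{1:w-1},\cdot)$. If the feedback issue proves delicate, I would fall back on the Euler-discretization viewpoint, which makes the errors telescope.

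\textbf{Step 3.} Collect the errors with Proposition~\ref{prop_composition_classes}, choosing $M$ and the inner approximations successively so that the total sup error on $K$ is below $\varepsilon$. The hypothesis $w\ge2$ is used in Step 1, since the decomposition into single-coordinate maps needs at least two coordinates.
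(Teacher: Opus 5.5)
Your Step 1 matches the paper's first step. The paper also invokes Theorem 1 of \citep{Coupling_Flow_NeuralPS}, in the approximate form $\mathcal{D}^2(\mathbb{R}^w)\underset{\mathrm{sup}}{\prec}\mathrm{INN}_{\mathcal{S}_c^\infty(\mathbb{R}^w)}$, so no exact factorization is needed. The interleaved invertible affine maps are handled by Theorem~\ref{thm_appro_affine}.

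Step 2 is a genuinely different route. The paper never builds a single-coordinate map out of blocks directly. It quotes Lemma B.2 of \citep{Hwang_C0}, $\mathcal{S}_c^\infty(\mathbb{R}^w)\underset{\mathrm{sup}}{\prec}\mathrm{MLP}^{\mathrm{LeakyReLU}}_{w,w,w}$. It then replaces each MLP layer separately: affine maps via Theorem~\ref{thm_appro_affine}, and the coordinatewise $\mathrm{LeakyReLU}_a$ via $f_a$ from Theorem~\ref{thm_slopeline}. Proposition~\ref{prop_composition_classes} glues the pieces, so no interaction between consecutive blocks is ever estimated. Your Euler-type splitting avoids Hwang's lemma, and once completed it would work for any locally Lipschitz non-polynomial $\sigma$. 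The reason is that one block adds exactly $e_w\bigl(c\,\sigma(a\cdot x+b)+d\bigr)$, and such ridge sums are dense. The price is a quantitative feedback estimate, and as written that estimate is not justified.

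The claim ``each update is $O(1/M)$, so the feedback error is $O(1/M^2)$ per step'' does not follow from approximating $\tfrac1M\psi_k$ by some $\sum_{i=1}^r c_i\,\mathrm{ReLU}(a_i\cdot x+b_i)$. The sum is small, but its individual terms and partial sums need not be, since uniform approximants often cancel large terms. Meanwhile the feedback error of the $j$-th block is of order $|c_j|\,|a_{j,w}|\,\bigl|\sum_{i<j}c_i\,\mathrm{ReLU}(a_i\cdot x+b_i)\bigr|$. In addition, $r$ and the sizes of $c_i,a_i,b_i$ depend on $k$ and on the accuracy demanded. You also need accuracy $o(1/M)$ per step, so if $M$ is fixed first, nothing prevents these constants from growing with $M$.

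The missing ingredient is uniformity. With $H_t(x)=(x_{1:w-1},x_w+t\psi(x))$ you have $\psi_k=\psi\circ H^{-1}_{(k-1)/M}$. The map $(t,x)\mapsto\psi(H_t^{-1}(x))$ is continuous on $[0,1]\times K'$, so $\{\psi\circ H_t^{-1}\}_{t\in[0,1]}$ is compact in $C(K')$. Fix $\delta>0$ first, and pick finitely many ridge expansions that approximate every member of this family within $\delta$. This bounds $r$ and all $|c_i|,|a_i|,|b_i|$ by some $C(\delta)$. Only then scale by $1/M$ and let $M\to\infty$. Each realized step then differs from $g_k$ by at most $\delta/M+C(\delta)/M^2$. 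Since $g_M\circ\cdots\circ g_{k+1}=H_1\circ H_{k/M}^{-1}$ is Lipschitz uniformly in $k$ and $M$, telescoping gives total error $O(\delta+C(\delta)/M)$. The order, $\delta$ before $M$, is the reverse of how Step 2 is phrased.

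Your fallback is also off: the autonomous ODE $\dot x_w=\psi(x)$ does not have $h$ as its time-one map. The generator of your splitting is the time-dependent field $\psi\circ H_t^{-1}$.

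A smaller point concerns $\sigma=\mathrm{LeakyReLU}_k$. A single inner-width-one branch outputs $v\,\mathrm{LeakyReLU}_k(a\cdot x+b)+B_b$, which cannot approximate $c\,\mathrm{ReLU}(a\cdot x+b)e_w$. Theorem~\ref{thm_slopeline} is about the coordinatewise map $f_a$ realized by a composition of blocks, not about one branch. This is easy to repair inside your scheme: $\mathrm{ReLU}(z)=\bigl(\mathrm{LeakyReLU}_k(z)+k\,\mathrm{LeakyReLU}_k(-z)\bigr)/(1-k^2)$. So each ReLU term costs two blocks, controlled by the same feedback bound.
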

\begin{proof}
	By Theorem 1 of \citep{Coupling_Flow_NeuralPS}, we can know that
	\begin{align*}
		\mathcal{D}^2(\mathbb{R}^w)\underset{\sup}{\prec} \mathrm{INN}_{\mathcal{S}_c^\infty(\mathbb{R}^w)}.
	\end{align*}
	Lemma B.2 in \citep{Hwang_C0} implies that 
	\begin{align*}
		\mathcal{S}_{c}^\infty(\mathbb{R}^w)\underset{\mathrm{sup}}{\prec}\mathrm{MLP}^{\mathrm{LeakyReLU}}_{w, w, w}.
	\end{align*}
	Consequently, for any $F\in \mathcal{D}^2(\mathbb{R}^w)$, any compact set $K\subset\mathbb{R}^w$ and any $\varepsilon>0$, there exist $T_{W_i, B_i}\in \mathrm{Aff}_{w, w} \enspace(i=0, 1, 2, \cdots, N)$ and $\mathrm{LeakyReLU}_j \enspace(j=1, 2, \cdots, N)$ such that 
	\begin{align*}
		\vert\vert T_{W_N, B_N} \circ \mathrm{LeakyReLU}_{a_N}\circ \cdots \circ T_{W_1, B_1} \circ \mathrm{LeakyReLU}_{a_1} \circ T_{W_0, B_0} - F \vert\vert_{\mathrm{sup}, K}<\frac{\varepsilon}{2}.
	\end{align*}
	For any $a\in (0, 1)\cup (1, +\infty)$, the $\mathrm{LeakyReLU}_a$ corresponds to the $f_a$ in Theorem \ref{thm_slopeline}. By Theorem \ref{thm_appro_affine} and Theorem \ref{thm_slopeline}, for any compact set $K\subset \mathbb{R}^{w}$ and $\varepsilon>0$, when activation functions $\sigma=\mathrm{LeakyReLU}, \mathrm{ReLU}$ or $\sigma$ is ReLU-like, we can show that each $T_{W_i, B_i}$ and $\mathrm{LeakyReLU}_{a_j} \enspace (i, j\in\{1, 2, \cdots, N\})$ can be approximated by the compositions of residual blocks. 
	
	Therefore, we can show that any $C^2$-diffeomorphism can be approximated by the set of compositions of residual blocks, i.e., $\mathcal{D}^2(\mathbb{R}^w)\underset{\mathrm{sup}}{\prec}\mathrm{CRB}_{w, 1}^{\sigma}$
\end{proof}

By Proposition \ref{prop_D^2_appro}, we now derive the upper bound of the block width of the ResNet for the universal approximation property on $C([0, 1]^n, \mathbb{R}^m)$ in the $\mathrm{sup}$ norm sense.
\begin{theorem}[Upper Bound of Minimum Block Width]\label{thm_upper_C}
	For activation functions $\sigma=\mathrm{LeakyReLU}, \mathrm{ReLU}$ or ReLU-like $\sigma$, we have 
	\begin{align*}
		w_{\min}^{\mathrm{sup}}(d_x, d_y, \sigma)\leq \min\{ d_x+d_y, \max\{2d_x+1, d_y\}\}.
	\end{align*}
\end{theorem}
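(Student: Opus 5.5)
The plan is to combine the embedding bound of Proposition \ref{prop_upper_Omega} with the diffeomorphism approximation of Proposition \ref{prop_D^2_appro}. Set $l=\min\{d_x+d_y,\max\{2d_x+1,d_y\}\}$; note $l\ge 2$ always, since $2d_x+1\ge 3$ and $d_x+d_y\ge 2$, so Proposition \ref{prop_D^2_appro} applies in $\mathbb{R}^l$. Fix $f\in C(\mathbb{R}^{d_x},\mathbb{R}^{d_y})$, a compact $K$ and $\varepsilon>0$. By translating and scaling (an affine map absorbable into the first layer of the ResNet), we may assume $K\subset[0,1]^{d_x}$. By the proof of Proposition \ref{prop_upper_Omega}, there is a smooth embedding $g:[0,1]^{d_x}\to\mathbb{R}^l$ with $\|\pi_{l,d_y}\circ g-f\|_{\mathrm{sup},K}<\varepsilon/2$. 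Concretely, $g$ is either $x\mapsto(f^\varepsilon(x),x)$, or the Whitney-type approximation from Lemma 4.7 of \citep{Hwang_C0}, padded with zeros if needed. In each case $g$ is a smooth embedding of a neighbourhood of the cube, not merely of the cube itself.

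The key step is to realize $g$ as $F\circ\iota$, where $\iota=\pi_{d_x,l}$ is the linear inclusion $x\mapsto(x,0)$ and $F\in\mathcal{D}^2(\mathbb{R}^l)$ is a diffeomorphism defined on an open set containing $\iota([0,1]^{d_x})$. In the graph case $l=d_x+d_y$ this is explicit. After reordering coordinates by a permutation, which Theorem \ref{thm_appro_affine} handles, take the shear
\begin{align*}
F(x,y)=(x,\,y+f^\varepsilon(x)).
\end{align*}
This is a smooth diffeomorphism of all of $\mathbb{R}^l$, and $F\circ\iota=g$ up to that permutation. In the case $l=\max\{2d_x+1,d_y\}$, I would extend the embedding $g$ of the cube to a diffeomorphism near $\iota([0,1]^{d_x})$. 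The route is a tubular neighbourhood argument: pick a smooth normal frame along $g$, which is possible because the cube is contractible and so the normal bundle is trivial, and set $F(x,v)=g(x)+N(x)v$. The inverse function theorem then gives a diffeomorphism on a neighbourhood $U$ of $[0,1]^{d_x}\times\{0\}$.

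This extension is the step I expect to be the main obstacle. The difficulty is checking that the paper's class $\mathcal{D}^2(U)$ and the hypotheses of Proposition \ref{prop_D^2_appro} (via \citep{Coupling_Flow_NeuralPS}) are satisfied by a diffeomorphism defined only on an open set. In particular, $U$ may need to be chosen so that the result of \citep{Coupling_Flow_NeuralPS} applies, e.g.\ $U$ a small tubular neighbourhood diffeomorphic to a ball. If that fails, I would try instead to extend $F$ to a compactly supported diffeomorphism of $\mathbb{R}^l$ by an isotopy-extension argument.

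With $F$ in hand, Proposition \ref{prop_D^2_appro} gives $\phi\in\mathrm{CRB}^{\sigma}_{l,1}$ such that $\|\phi-F\|_{\mathrm{sup},\iota(K)}$ is as small as desired. Since $\pi_{l,d_y}$ is $1$-Lipschitz, we get
\begin{align*}
\|\pi_{l,d_y}\circ\phi\circ\iota-f\|_{\mathrm{sup},K}\le\|\phi-F\|_{\mathrm{sup},\iota(K)}+\varepsilon/2<\varepsilon.
\end{align*}
The map $\pi_{l,d_y}\circ\phi\circ\iota$ is of the form in Definition \ref{def_ResNet}, with $\iota\in\mathrm{Aff}_{d_x,l}$ (composed with the normalizing affine map) and $\pi_{l,d_y}\in\mathrm{Aff}_{l,d_y}$. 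Hence it lies in $\mathrm{Res}^{\sigma}_{d_x,d_y,l,1}$, which proves $w_{\min}^{\mathrm{sup}}(d_x,d_y,\sigma)\le l$.
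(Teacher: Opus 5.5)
Your proof is correct. Its skeleton matches the paper's: the embedding from Proposition \ref{prop_upper_Omega}, then an ambient diffeomorphism, then Proposition \ref{prop_D^2_appro}, with the linear inclusion and projection as the outer affine layers. The difference is in how you produce the ambient diffeomorphism.

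The paper does this in one step by citing Theorem C of \citep{Palais}. Since the width exceeds $d_x$, any smooth embedding $g$ of $[0,1]^{d_x}$ equals $F\circ\pi_{d_x,l}$ for a smooth diffeomorphism $F$ of all of $\mathbb{R}^l$. So $F\in\mathcal{D}^2(\mathbb{R}^l)$ and Proposition \ref{prop_D^2_appro} applies exactly as stated. This is precisely your isotopy-extension fallback.

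Your graph-case shear $(x,y)\mapsto(x,y+f^\varepsilon(x))$ is more elementary and fully explicit, and needs no extension theorem. It also needs no permutation step, since the output affine map may be any coordinate projection.

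In the Whitney case your tubular-neighbourhood map only lies in $\mathcal{D}^2(U)$, but the obstacle you anticipate is not real. Take $U=(-\delta,1+\delta)^{d_x}\times B_\delta(0)$. For small $\delta$, $F$ is injective on $U$ by the usual compactness argument: $F$ is injective on the compact set $[0,1]^{d_x}\times\{0\}$ and is a local diffeomorphism near it. This $U$ is convex, hence $C^\infty$-diffeomorphic to $\mathbb{R}^l$. The class $\mathcal{D}^2$ in \citep{Coupling_Flow_NeuralPS} already allows diffeomorphisms defined on such open sets, with approximation on compact subsets. So the proof of Proposition \ref{prop_D^2_appro} goes through verbatim for $F|_U$ on $\iota(K)\subset U$.

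What each route buys: yours is more self-contained, replacing Palais's theorem with standard differential topology plus a mild reading of the cited result. The paper's gives a global diffeomorphism that fits its stated proposition without further comment. You also work directly with $l$ rather than through $\Omega(d_x,d_y)$, which is slightly cleaner.
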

\begin{proof}
	For any compact set $K\subset\mathbb{R}^{d_x}$, we can find a cube $K_1=[a, b]^{d_x}$ such that $K\subset K_1$. By employing the Tietze Extension Theorem \citep{Munkres}, we can extend any continuous map to $K_1$ and then rescale the set $K_1$ to $[0, 1]^{d_x}$, therefore it suffices to consider the case that $K=[0, 1]^{d_x}$.
	
	Theorem C of \citep{Palais} states that for $n\leq m$ and a smooth embedding $f:[0, 1]^n\to \mathbb{R}^m$, then there exists a smooth diffeomorphism $F:\mathbb{R}^m\to \mathbb{R}^m$ such that $F\circ \pi_{n, m} =f.$
	Therefore, for any $f\in C(K, \mathbb{R}^{d_y})$ and any $\varepsilon>0$, by Proposition \ref{prop_upper_Omega} and Theorem C of \citep{Palais}, for $\Omega(d_x, d_y)\leq \min\{ d_x+d_y, \max\{2d_x+1, d_y\}\}$, we can find a smooth diffeomorphism $F:\mathbb{R}^{\Omega(d_x, d_y)} \to \mathbb{R}^{\Omega(d_x, d_y)}$ such that
	\begin{align*}
		\vert\vert \pi_{{\Omega(d_x, d_y)}, d_y} \circ F \circ \pi_{d_x, {\Omega(d_x, d_y)}}-f\vert\vert_{\mathrm{sup}, K}<\frac{\varepsilon}{2}.
	\end{align*}
	It is evident that $\Omega(d_x, d_y)\geq 2$, and by Proposition \ref{prop_D^2_appro}, we can find a composition of residual blocks with block width $w={\Omega(d_x, d_y)}$ such that $F\underset{\mathrm{sup}}{\prec}\mathrm{CRB}_{{\Omega(d_x, d_y)}, 1}^{\sigma}.$
	Then we can find $\phi\in \mathrm{CRB}_{{\Omega(d_x, d_y)}, 1}^{\sigma}$ satisfies that
	\begin{align*}
		\vert\vert \pi_{{\Omega(d_x, d_y)}, d_y} \circ F \circ \pi_{d_x, {\Omega(d_x, d_y)}}-\pi_{{\Omega(d_x, d_y)}, d_y} \circ \phi \circ \pi_{d_x, {\Omega(d_x, d_y)}}\vert\vert_{\mathrm{sup}, K}<\frac{\varepsilon}{2}.
	\end{align*}
	The coordinate maps $\pi_{d_x,\Omega}$ and $\pi_{\Omega,d_y}$ are affine transformations. Indeed, they are represented by
	\begin{align*}
		&W_a=\mathrm{Diag}_{{\Omega(d_x, d_y)}, d_x}, \enspace B_a=\mathbf{0}_{{\Omega(d_x, d_y)}\times 1},\\
		&W_b=\mathrm{Diag}_{d_y, {\Omega(d_x, d_y)}}, \enspace B_b=\mathbf{0}_{d_y\times 1},\\
		&T_{W_a, B_a}=\pi_{d_x, {\Omega(d_x, d_y)}}, \enspace T_{W_b, B_b}=\pi_{{\Omega(d_x, d_y)}, d_y}.
	\end{align*}
	Thus ResNets in $\mathrm{Res}_{d_x, d_y, \Omega(d_x, d_y), 1}^{\sigma}$ can approximate $f$ uniformly on $K$, which completes the proof.
\end{proof}

By employing Theorem \ref{thm_lower_C_Lp} and Theorem \ref{thm_upper_C}, we can derive the lower and upper bounds of the minimum block width for the uniform approximation.
\begin{theorem}[Bounds on the Minimum Block Width for Uniform UAP]\label{thm_minwidth_C}
	For activation functions $\sigma=\mathrm{LeakyReLU}, \mathrm{ReLU}$ or ReLU-like $\sigma$, we have 
	\begin{align*}
		\max\{d_x, d_y\}\leq w_{\min}^{\mathrm{sup}}(d_x, d_y, \sigma)\leq \min\{d_x+d_y, \max\{2d_x+1, d_y\}\}.
	\end{align*}
Especially, for activation functions $\sigma=\mathrm{LeakyReLU}, \mathrm{ReLU}$ or ReLU-like $\sigma$, when $d_y\geq 2d_x+1$, the exact minimum block width for uniform UAP is
\begin{align*}
	w_{\min}^{\mathrm{sup}}(d_x, d_y, \sigma)=d_y.
\end{align*}
\end{theorem}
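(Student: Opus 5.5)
The statement combines two results already proved above, so the plan is to assemble them and then read off the special case.

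For the lower inequality, we invoke Theorem \ref{thm_lower_C_Lp}. It gives $w_{\min}^{\mathrm{sup}}(d_x,d_y,\sigma)\geq\max\{d_x,d_y\}$ for an arbitrary activation family, so in particular for $\mathrm{LeakyReLU}$, $\mathrm{ReLU}$ and ReLU-like $\sigma$. One point should be recorded for consistency. Definition \ref{def_minimumwidth} requires approximation of all of $C(\mathbb{R}^n,\mathbb{R}^m)$ on every compact set. The obstruction functions used in Theorem \ref{thm_lower_C_Lp}, namely $\Vert x\Vert^2$ and $(x_1,x_1^2,\dots,x_1^{d_y})$, are continuous on all of $\mathbb{R}^{d_x}$. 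Moreover, uniform approximation on $K$ implies $L^p$ approximation on $K$, since $\Vert h\Vert_{p,K}\le \mu(K)^{1/p}\Vert h\Vert_{\mathrm{sup},K}$. Hence a block width below $\max\{d_x,d_y\}$ cannot achieve uniform approximation either.

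For the upper inequality, we invoke Theorem \ref{thm_upper_C} directly. That theorem was obtained from three ingredients: the bound on $\Omega(d_x,d_y)$ in Proposition \ref{prop_upper_Omega}, Palais' extension of embeddings to diffeomorphisms of $\mathbb{R}^{\Omega}$, and the approximation of $C^2$-diffeomorphisms by $\mathrm{CRB}^{\sigma}_{w,1}$ in Proposition \ref{prop_D^2_appro}. It yields $w_{\min}^{\mathrm{sup}}\le\min\{d_x+d_y,\max\{2d_x+1,d_y\}\}$. Combining the two inequalities gives the general sandwich.

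For the special case $d_y\ge 2d_x+1$, we evaluate both bounds. Here $\max\{2d_x+1,d_y\}=d_y$, and since $d_x\ge1$ we have $d_x+d_y>d_y$, so the upper bound equals $d_y$. On the other side, $d_y\ge 2d_x+1>d_x$, so $\max\{d_x,d_y\}=d_y$. The lower and upper bounds therefore coincide, which gives $w_{\min}^{\mathrm{sup}}(d_x,d_y,\sigma)=d_y$.

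No step is a genuine obstacle. The only point needing care is the one already noted: Theorem \ref{thm_lower_C_Lp} was phrased via $L^p$ targets, and we must check that its witnesses are continuous, which they are, so that the bound transfers to the $C(\mathbb{R}^{d_x},\mathbb{R}^{d_y})$ setting of Definition \ref{def_minimumwidth}.
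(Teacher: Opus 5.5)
Your proposal is correct and follows the paper's proof essentially verbatim. You obtain the two-sided bound by citing Theorem \ref{thm_lower_C_Lp} and Theorem \ref{thm_upper_C}, and you settle the case $d_y\geq 2d_x+1$ by checking that $\max\{d_x,d_y\}=\min\{d_x+d_y,\max\{2d_x+1,d_y\}\}=d_y$. Your added check that the witnesses $\Vert x\Vert^2$ and $(x_1,x_1^2,\dots,x_1^{d_y})$ are continuous, so the $L^p$ obstruction transfers to the uniform setting of Definition \ref{def_minimumwidth}, is a valid detail that the paper leaves implicit.
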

\begin{proof}
	The first result $\max\{d_x, d_y\}\leq w_{\min}^{\mathrm{sup}}(d_x, d_y, \sigma)\leq \min\{d_x+d_y, \max\{2d_x+1, d_y\}\}$ follows immediately from Theorem \ref{thm_lower_C_Lp} and Theorem \ref{thm_upper_C}. For the second result, for activation functions $\sigma=\mathrm{LeakyReLU}, \mathrm{ReLU}$ or ReLU-like $\sigma$, since when $d_y\geq 2d_x+1$, $\max\{d_x, d_y\}=\min\{d_x+d_y, \max\{2d_x+1, d_y\}\}=d_y$, therefore, we can conclude the exact minimum block width.
\end{proof}

\subsection{Difference between ResNets and MLPs for Uniform Approximation}
In this subsection, we investigate the difference between ResNets and MLPs while approximating some given maps, which indicates the benefit of ResNets for uniform approximation tasks.

For the parameter matrices of the MLPs, by the density of full rank matrices in the matrix space $\mathbb{R}^{m\times n}$, it is shown that when the activation functions are injective or can be approximated by a sequence of injective functions (e.g., LeakyReLU, ReLU), the width-$2n$ MLPs with input-output dimensions $n$ and $2n$ cannot approximate some functions whose image has self-intersections \citep{Self_ELUNN}. For example, for the following function $f:\mathbb{R}\to \mathbb{R}^2$:
\begin{align}\label{eq_self-intersection}
	f(x)=
	\left\{
	\begin{aligned}
		& \left(10x-1, 0\right)
		& 0\leq x\leq \frac{3}{10},\\
		& \left(2, 10x-3\right)
		& \frac{3}{10}\leq x\leq \frac{1}{2},\\
		&\left(7-10x, 2\right)& \frac{1}{2}\leq x \leq \frac{7}{10},\\
		& \left(0, 9-10x\right)& \frac{7}{10}\leq x\leq 1.
	\end{aligned}
	\right.	
	,
\end{align}
\citet{Self_ELUNN} proved that width-2 MLPs with input-output dimensions 1 and 2 cannot approximate this function, regardless of the depth. However, we can construct a ReLU ResNet $R:[0, 1]\to \mathbb{R}^2$ with block width 2 and inner width 1 that $R(x)=f(x)$ holds for any $x\in [0, 1]$. The ReLU ResNet $R:[0, 1]\to \mathbb{R}^2$ is given by
\begin{align*}
   R=T_{W_\beta, B_\beta}\circ \mathcal{B}_{3}\circ \mathcal{B}_{2} \circ \mathcal{B}_{1} \circ T_{W_\alpha, B_\alpha}, \enspace \mathcal{B}_i=\mathrm{Id}_2 + T_{W_{ib}, B_{ib}}\circ \mathrm{ReLU}\circ T_{W_{ia}, B_{ia}},
\end{align*}
and all the parameters are presented in Table \ref{table_appendix_polyline_input_output_parameters} and Table \ref{table_appendix_polyline_block_parameters} (Appendix \ref{appendix_ResNet_parameter}). For the input set $K=[0, 1]$, Figure \ref{fig_Polyline} illustrates how the successive residual blocks transform the embedded set $A_0=T_{W_\alpha, B_\alpha}(K)$ into the target image $f(K)$.
\begin{figure}[htbp]
	\centering
	\includegraphics[height=0.6\linewidth, width=0.6\linewidth]{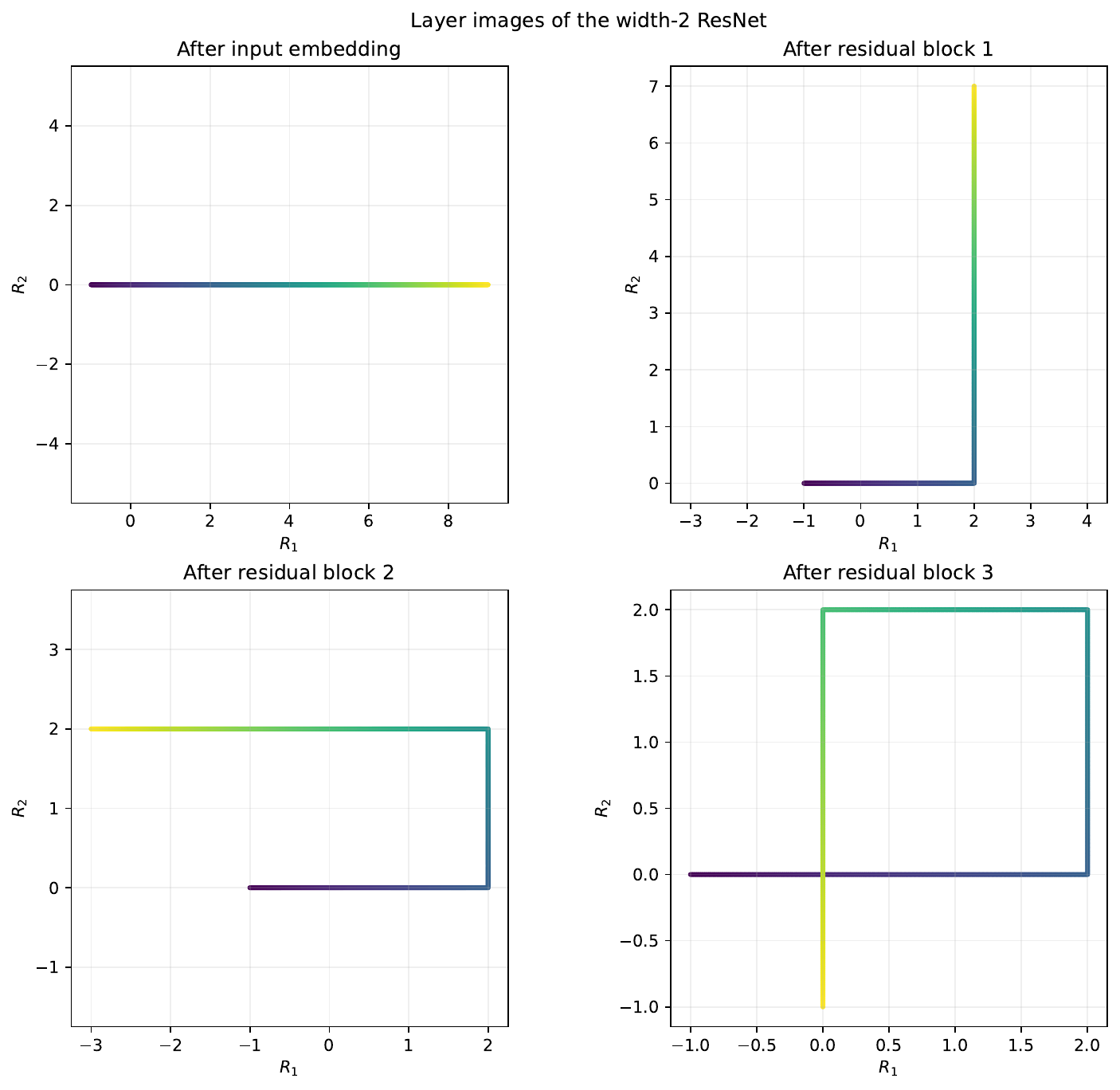}
	\caption{
		Layer-wise evolution of the embedded interval $K=[0,1]$ through the
		width-two ReLU ResNet with inner width one constructed for the target
		function in Equation~(7). The panels, ordered row-wise, show
		$A_0=T_{W_\alpha,B_\alpha}(K)$ and the successive residual-block images
		$A_i=\mathcal{B}_i\circ\cdots\circ\mathcal{B}_1(A_0)$ for
		$i=1,2,3$. The color of each point indicates its original input value
		$x\in[0,1]$ and is kept consistent across all panels. The residual blocks
		progressively fold the embedded interval, and the final image $A_3$
		coincides with the self-intersecting target curve $f(K)$.
	}
	\label{fig_Polyline}
\end{figure}

The main reason for this phenomenon lies in the different roles played by the intermediate layers in these two types of networks. For an activation function that is injective or can be approximated by a sequence of injective functions, if the width of an MLP $w$ and its input and output dimensions $n$ and $m$ satisfy $n\leq w\leq m$, the full-rank perturbation argument in \citep{Self_ELUNN} implies that the MLP with this dimensional setting is an embedding or can be approximated by a sequence of embeddings. Thus, MLPs cannot approximate lots of functions whose images exhibit self-intersections in this case. By contrast, residual blocks are not subject to this restriction and therefore avoid the corresponding topological obstruction. This essential difference indicates the approximation benefits of narrow ResNets when compared with narrow MLPs, which motivates the research for UAP of narrow ResNets.

Besides, the lower bound for the minimum block width indicates a fundamental distinction between ResNets and MLPs. In fact, when both the input and output dimensions are two, \citet{Hwang_C0} proved that for the following continuous function $f:\mathbb{R}^2\to \mathbb{R}^2$:
\begin{align}\label{eq_Hwang}
	f(x_1, x_2)=
	\left\{
	\begin{aligned}
		&\begin{pmatrix}
			1 & -1\\
			0 & 2
		\end{pmatrix}
		\begin{pmatrix}
			x_1\\x_2
		\end{pmatrix} 
		& 0\leq x_2\leq x_1,\\
		&\begin{pmatrix}
			1 & -1\\
			2 & 0
		\end{pmatrix}
		\begin{pmatrix}
			x_1\\x_2
		\end{pmatrix}  & 0\leq x_1\leq x_2,\\
		&-f(x_2, -x_1)& x_1\leq 0 \leq x_2,\\
		&f(-x_1, -x_2)& x_2\leq 0.
	\end{aligned}
	\right.
	,
\end{align}
there exists $\varepsilon_0>0$ that for any $C^2$-diffeomorphism $F:\mathbb{R}^3\to \mathbb{R}^3$, $\vert\vert\pi_{3, 2}\circ F \circ\pi_{2, 3}-f\vert\vert_{\mathrm{sup}, [-2, 2]^2}\geq \varepsilon_0$. This result shows that MLPs of width three cannot uniformly approximate this continuous function (Equation \eqref{eq_Hwang}) on $K=[-2, 2]^2$, hence, they do not possess the uniform UAP over all compact subsets of $\mathbb{R}^2$. 

However, the composition of residual blocks $G=\mathcal{B}_N\circ\cdots\circ \mathcal{B}_1$ is not necessarily a diffeomorphism. Hence a ResNet of block width three does not have to belong to the class $\left\{\pi_{3,2}\circ F\circ \pi_{2,3}\vert F\in \mathcal{D}^2(\mathbb{R}^3)\right\}$. Consequently, the obstruction in \citep{Hwang_C0} for width-three MLPs cannot be directly transferred to ResNets. This reveals the property that lower bounds for MLPs and ResNets are essentially different: the obstruction for narrow MLPs relies on a global diffeomorphic structure, while ResNets are not constrained by such a structure. For $\varepsilon=10^{-3}$, by sampling training points uniformly from $K=[-2, 2]^2$ and using the Adam optimizer \citep{Adam}, we trained a ResNet by minimizing the empirical mean squared error loss. This trained ResNet $R:\mathbb{R}^2\to \mathbb{R}^2$ is of the form
\begin{align}\label{eq_Hwang_ResNet}
	R=T_{W_\beta, B_\beta}\circ \mathcal{B}_{5}\circ \cdots \circ \mathcal{B}_{1} \circ T_{W_\alpha, B_\alpha}, \enspace \mathcal{B}_i=\mathrm{Id}_3 + T_{W_{ib}, B_{ib}}\circ \mathrm{ReLU}\circ T_{W_{ia}, B_{ia}},
\end{align}
and all the parameters are presented in Table \ref{table_appendix_disk_input_output_parameters} and Table \ref{table_appendix_disk_block_parameters} (Appendix \ref{appendix_ResNet_parameter}). The approximation error was evaluated on an independent $801\times801$ uniform Cartesian grid $G\subset[-2, 2]^2$, and the resulting grid-based maximum error 
\begin{align*}
	E_{\infty}^{\mathrm{grid}}:=\underset{x\in G}{\max} \vert\vert R(x)-f(x)\vert\vert_2
\end{align*}
was less than $\varepsilon$.

For the input set $K=[-2, 2]^2$, the images
\begin{align}\label{eq_image_Hwang}
	&\mathrm{INPUT}:= [-2, 2]^2, \enspace \mathrm{OUTPUT}:= R([-2, 2]^2), \nonumber\\
	&A_0 := T_{W_\alpha, B_\alpha} ([-2, 2]^2), \enspace A_i := \mathcal{B}_i\circ\cdots\circ \mathcal{B}_1\circ T_{W_\alpha, B_\alpha}([-2, 2]^2)
\end{align} 
are illustrated in Figure \ref{fig_Hwang}. It can be observed that $A_2, A_3, A_4, A_5$ have a self-intersection structure. Thus, for 
\begin{align*}
	G_i:=B_i\circ\cdots\circ B_1\circ T_{W_\alpha,B_\alpha},
\end{align*}
the corresponding maps $G_i|_K:K\to\mathbb{R}^3$, $i=2,3,4,5$, are numerically observed to be non-injective and hence are not embeddings. In other words, although the local dimension is preserved, global injectivity may fail. In contrast with width-three MLPs, this difference suggests the stronger approximation ability of narrow ResNets than narrow MLPs in some approximation tasks.
\begin{figure}[htbp]
	\centering
	\includegraphics[height=0.45\linewidth, width=0.9\linewidth]{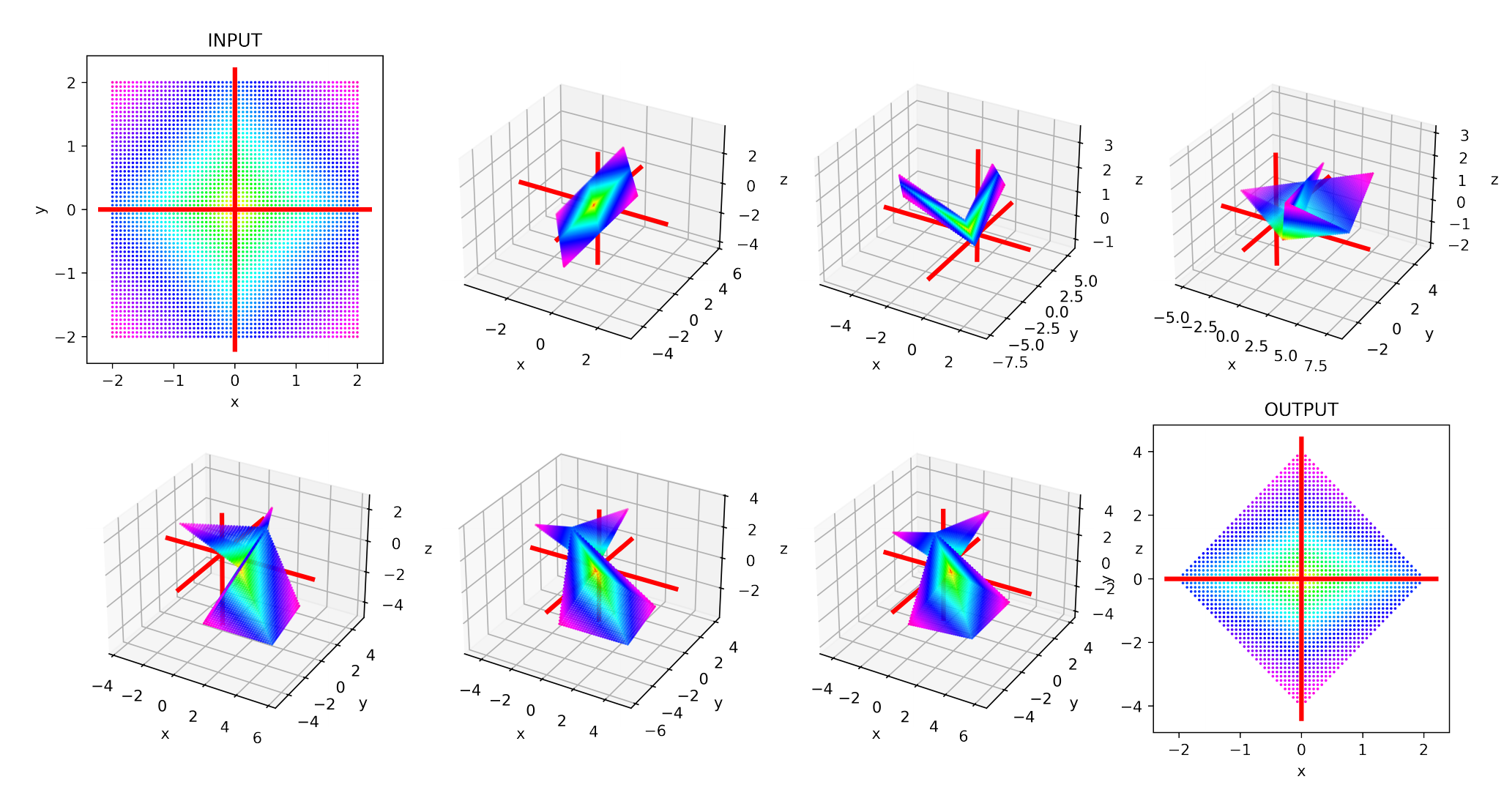}
	\caption{Visualization of the images generated by the ResNet $R:\mathbb{R}^2\to \mathbb{R}^2$ defined in Equation \eqref{eq_Hwang_ResNet} on 
		$K=[-2,2]^2$. The panels are ordered row-wise, from left to right in the top row 
		and then from left to right in the bottom row. They show the input set, the embedded set $A_0$, the successive residual-block 
		images $A_i$, and the final output $R(K)$ defined in Equation \eqref{eq_image_Hwang}.}
	\label{fig_Hwang}
\end{figure}

\section{Discussion}
This paper shows that the ResNets can retain universal approximation power even when each residual branch is restricted to inner width one. Since the inner-width-one structure is the most restrictive structure, all upper bounds established in this paper for the inner-width-one case remain valid for ResNets with larger inner widths. Our results indicate that for ResNets, the block width and the inner width play different roles: the former provides the ambient space for residual transformations, while the latter controls the size of each local nonlinear update.

Besides the block width and the inner width, the depth of ResNets is also a central aspect of their approximation power. Recent work characterizes the role of depth in universal approximation capability of ResNets \citep{Depth_of_ResNet}, which suggests that depth determines how efficiently a target function can be approximated. From this perspective, a natural future direction is to estimate the number of residual blocks required to achieve a prescribed accuracy under the inner-width-one constraint. In addition, increasing the inner width may enhance the approximation power of ResNets. Thus,  whether the required depth can be reduced as the inner width increases is also an interesting question for future research.


\section{Appendices}
\subsection{Activation Function Formulae}\label{appendix_activationfunc}
	The common activation functions are defined in Table \ref{table_activation_functions} \citep{petersen_DL, Kim}.
	\begin{table}[!htbp]
		\clearpage
		\caption{Common activation functions used or mentioned in this paper.}
		\label{table_activation_functions}
		\centering
		\scriptsize
		\setlength{\tabcolsep}{3pt}
		\renewcommand{\arraystretch}{1.75}
		\begin{tabular}{
				|C{0.18\textwidth}
				C{0.18\textwidth}
				C{0.56\textwidth}|}
			\hline
			\textbf{Activation function}
			&
			\textbf{Parameter range}
			&
			\textbf{Formula}
			\\
			\hline
			
			ReLU
			&
			---
			&
			\(\displaystyle
			\mathrm{ReLU}(x)=
			\begin{cases}
				x, & x\geq 0,\\
				0, & x<0.
			\end{cases}
			\)
			\\
			\hline
			
			LeakyReLU
			&
			\(\beta\in(0,1)\cup(1,+\infty)\)
			&
			\(\displaystyle
			\mathrm{LeakyReLU}_{\beta}(x)=
			\begin{cases}
				x, & x\geq 0,\\
				\beta x, & x<0.
			\end{cases}
			\)
			\\
			\hline
			
			ABS
			&
			---
			&
			\(\displaystyle
			\mathrm{ABS}(x)=
			\begin{cases}
				x, & x\geq 0,\\
				-x, & x<0.
			\end{cases}
			\)
			\\
			\hline
			
			Sigmoid
			&
			---
			&
			\(\displaystyle
			\mathrm{Sigmoid}(x)=\frac{1}{1+e^{-x}}
			\)
			\\
			\hline
			
			Tanh
			&
			---
			&
			\(\displaystyle
			\mathrm{Tanh}(x)=\frac{e^x-e^{-x}}{e^x+e^{-x}}
			\)
			\\
			\hline
			
			ELU
			&
			\(\beta\in(0,+\infty)\)
			&
			\(\displaystyle
			\mathrm{ELU}_{\beta}(x)=
			\begin{cases}
				x, & x\geq 0,\\
				\beta(e^x-1), & x<0.
			\end{cases}
			\)
			\\
			\hline
			
			CELU
			&
			\(\beta\in(0,+\infty)\)
			&
			\(\displaystyle
			\mathrm{CELU}_{\beta}(x)=
			\begin{cases}
				x, & x\geq 0,\\
				\beta(e^{x/\beta}-1), & x<0.
			\end{cases}
			\)
			\\
			\hline
			
			SELU
			&
			\(\lambda,\beta\in(0,+\infty)\)
			&
			\(\displaystyle
			\mathrm{SELU}_{\lambda,\beta}(x)=
			\lambda
			\begin{cases}
				x, & x\geq 0,\\
				\beta(e^x-1), & x<0.
			\end{cases}
			\)
			\\
			\hline
			
			Softplus
			&
			\(\beta\in(0,+\infty)\)
			&
			\(\displaystyle
			\mathrm{Softplus}_{\beta}(x)
			=
			\frac{1}{\beta}\log(1+e^{\beta x})
			\)
			\\
			\hline
			
			GELU
			&
			---
			&
			\(\displaystyle
			\mathrm{GELU}(x)=x\Psi(x),\quad
			\Psi(x)=\frac{1}{\sqrt{2\pi}}
			\int_{-\infty}^{x}e^{-t^2/2}\,dt
			\)
			\\
			\hline
			
			SiLU
			&
			\(\beta\in(0,+\infty)\)
			&
			\(\displaystyle
			\mathrm{SiLU}_{\beta}(x)
			=
			x\,\mathrm{Sigmoid}(\beta x)
			\)
			\\
			\hline
			
			Mish
			&
			---
			&
			\(\displaystyle
			\mathrm{Mish}(x)
			=
			x\,\mathrm{Tanh}\!\left(\mathrm{Softplus}_{1}(x)\right)
			\)
			\\
			\hline
			
			ReLU6
			&
			---
			&
			\(\displaystyle
			\mathrm{ReLU6}(x)=
			\begin{cases}
				6, & x>6,\\
				x, & 0\leq x\leq 6,\\
				0, & x<0.
			\end{cases}
			\)
			\\
			\hline
			
			Softshrink
			&
			\(\beta\in(0,+\infty)\)
			&
			\(\displaystyle
			\mathrm{Softshrink}_{\beta}(x)=
			\begin{cases}
				x-\beta, & x>\beta,\\
				0, & -\beta\leq x\leq \beta,\\
				x+\beta, & x<-\beta.
			\end{cases}
			\)
			\\
			\hline
			
			HardSigmoid
			&
			---
			&
			\(\displaystyle
			\mathrm{HardSigmoid}(x)=
			\begin{cases}
				1, & x>3,\\
				\dfrac{x+3}{6}, & -3\leq x\leq 3,\\
				0, & x<-3.
			\end{cases}
			\)
			\\
			\hline
			
			HardTanh
			&
			---
			&
			\(\displaystyle
			\mathrm{HardTanh}(x)=
			\begin{cases}
				1, & x>1,\\
				x, & -1\leq x\leq 1,\\
				-1, & x<-1.
			\end{cases}
			\)
			\\
			\hline
			
			HardSwish
			&
			---
			&
			\(\displaystyle
			\mathrm{HardSwish}(x)=
			\begin{cases}
				x, & x>3,\\
				\dfrac{x(x+3)}{6}, & -3\leq x\leq 3,\\
				0, & x<-3.
			\end{cases}
			\)
			\\
			\hline
		\end{tabular}
	\end{table}

	\subsection{Proofs}
	\subsubsection{Proof of Proposition \ref{prop_appro_affine}}\label{appendix_proof_appro_affine}
	Since $\mathcal{G}\underset{\mathrm{sup}}{\prec}\mathcal{F}$ implies that $\mathcal{G}\underset{p}{\prec}\mathcal{F}$, it suffices to prove the $\mathrm{sup}$ norm case. We prove this proposition by decomposing map, and then utilize Proposition \ref{prop_composition_classes}.
	
	By Definition \ref{def_activationfunc}, $\sigma_0$ is a piecewise $C^1$ function and there exists $\alpha\in \mathbb{R}$ that $\sigma^\prime_0(\alpha)\neq 0$. For any $\varepsilon>0$, we can choose an interval $(q_1, q_2)$ such that
	\begin{align*}
		\left\vert \frac{\sigma_0(x)- \sigma_0(\alpha)}{x-\alpha}- a\right\vert <\varepsilon, \enspace \forall x\in (q_1, q_2).
	\end{align*}
	It follows that
	\begin{align*}
		\vert\vert \sigma_0(x)-(ax+c) \vert\vert_{\mathrm{sup}, (q_1, q_2)} < (q_2-q_1)\varepsilon, \enspace a = \sigma_0^\prime(\alpha), \enspace c = \sigma_0(\alpha)-\alpha * \sigma_0^\prime(\alpha).
	\end{align*}
	Let $K\subset\mathbb{R}^w$ be a compact set, we can choose $k_1< k_2$ such that $K\subset [k_1, k_2]^w$. Choose $r_1\in\mathbb{R}_+, s_1\in\mathbb{R}$ such that 
	\begin{align}\label{eq_subset}
		(\frac{3}{4}q_1+ \frac{1}{4}q_2, \frac{1}{4}q_1+ \frac{3}{4}q_2) \subset[r_1k_1+s_1, r_1k_2+s_1] \subset (q_1, q_2).
	\end{align}
	We first prove a basic approximation step. Fix $d\in\mathbb R$ and $i,j\in{1,\ldots,w}$, and consider the affine transformation $T_{W, B}$ with
	\begin{align*}
		W = I_w + dE_{ij}, \enspace B=
		\begin{pmatrix}
			b_1 \\ b_2\\ \vdots \\ b_w
		\end{pmatrix}.
	\end{align*}
	Define the affine maps in the residual branch by
	\begin{align*}
		&W_1 = r_1 * E_{1\times w, 1, j}\in \mathbb{R}^{1\times w}, \enspace
		B_1 = s_1,\\
		&W_2 = \frac{d}{ar_1} * E_{w\times 1, i, 1}\in \mathbb{R}^{w\times 1}, \enspace
		B_2 = \begin{pmatrix}
			b_1 \\
			\vdots\\
			b_i - \frac{d(as_1 + c)}{ar_1}\\
			\vdots\\
			b_w\\
		\end{pmatrix},\\
		& \mathcal{B} = \mathrm{Id}_w + T_{W_2, B_2} \circ \sigma_0 \circ T_{W_1, B_1},
	\end{align*}
	then $\vert\vert \mathcal{B} - T_{W, B}\vert\vert_{\mathrm{sup}, [k_1, k_2]^w}< \frac{\vert d\vert(q_2-q_1)}{\vert a\vert r_1}\varepsilon$. By the Equation \eqref{eq_subset}, we have $r_1(k_2-k_1)\geq\frac{1}{2}(q_2-q_1)$, so we can conclude that $\vert\vert \mathcal{B} - T_{W, B}\vert\vert_{\mathrm{sup}, [k_1, k_2]^w}< \frac{2(k_2-k_1)\vert d\vert}{\vert a\vert}\varepsilon$, i.e., $T_{W, B} \underset{\mathrm{sup}}{\prec}\mathrm{CRB}_{w, 1}^{\sigma_0}$.
	
	For the case (a), since 
	\begin{align*}
		W&= 
		\scriptsize
		\begin{pmatrix}
			W_{1, 1} & 0 & \vdots & 0\\
			0 & W_{2, 2} & \vdots & 0\\
			\cdots & \cdots & \vdots & \cdots\\
			0 & 0 & \cdots & W_{w, w}
		\end{pmatrix}\\
		&=
		(I_w + (W_{1, 1}-1)E_{w\times w, 1, 1}) * \cdots * (I_w + (W_{w, w}-1)E_{w\times w, w, w}),
	\end{align*}
	thus for any $B\in \mathbb{R}^{w}$, $T_{W, B}=T_{I_w, B}\circ T_{W, \mathbf{0}_{w}}$ can be approximated by $\mathrm{CRB}_{w, 1}^{\sigma_0}$.
	
	For the case (b), we assume that $W$ is an upper triangular matrix whose diagonal entries are nonzero, then the proof for the case that $W$ is a lower triangular matrix is similar. Since we have
	\begin{align*}
		W&\scriptsize
		=\begin{pmatrix}
			W_{1, 1} & W_{1, 2} & W_{1, 3}& \cdots & W_{1, w-1}& W_{1, w}\\
			0 & W_{2, 2} & W_{2, 3}& \cdots & W_{2, w-1} & W_{2, w}\\
			\vdots & \vdots & \vdots &\vdots & \vdots &\vdots \\
			\cdots & \cdots & \cdots & W_{w-2, w-2} & W_{w-2, w-1}& W_{w-2, w}\\
			0 & 0 & 0 &\cdots & W_{w-1, w-1}& W_{w-1, w}\\
			0 & 0 & 0 &\cdots & 0 & W_{w, w}
		\end{pmatrix}\\
		&\scriptsize
		=\begin{pmatrix}
			W_{1, 1} & 0 & 0&\cdots & 0 & 0\\
			0 & W_{2, 2} & 0 &\cdots & 0 & 0\\
			0 & 0 & W_{3, 3} & \cdots &\cdots & 0 \\
			0 & 0 & 0 & \vdots &\cdots & \cdots\\
			0 & 0 & 0 & \cdots & W_{w-1, w-1}& 0\\
			0 & 0 & 0 & \cdots & 0 & W_{w, w}
		\end{pmatrix}
		*\begin{pmatrix}
			1 & 0 & 0 & \cdots & 0 & 0\\
			0 & 1 & 0 &\cdots & 0 & 0\\
			\vdots & \vdots &\vdots&\vdots & \vdots & \vdots\\
			\vdots & \vdots &\vdots&\vdots & \vdots & \vdots\\
			0 & 0 & \cdots&\cdots & 1 & \frac{W_{w-1, w}}{W_{w-1, w-1}}\\
			0 & 0 & \cdots&\cdots & 0 & 1
		\end{pmatrix}\\
		&\scriptsize*\cdots 
		*
		\begin{pmatrix}
			1 & 0 & 0& \cdots & 0& 0\\
			0 & 1 & \frac{W_{2, 3}}{W_{2, 2}}& \cdots & \frac{W_{2, w-1}}{W_{2, 2}}& \frac{W_{2, w}}{W_{2, 2}}\\
			\vdots & \vdots & \vdots & \vdots &\vdots & \vdots\\
			\vdots & \vdots & \vdots & \vdots &\vdots & \vdots\\
			0 & 0 & \cdots&\cdots & 1 & 0\\
			0 & 0 & \cdots&\cdots & 0 & 1
		\end{pmatrix}
		*
		\begin{pmatrix}
			1 & \frac{W_{1, 2}}{W_{1, 1}} & \frac{W_{1, 3}}{W_{1, 1}}& \cdots & \frac{W_{1, w-1}}{W_{1, 1}}& \frac{W_{1, w}}{W_{1, 1}}\\
			0 & 1 & 0 &\cdots & 0 & 0\\
			\vdots & \vdots & \vdots & \vdots &\vdots & \vdots\\
			\vdots & \vdots & \vdots & \vdots &\vdots & \vdots\\
			0 & 0 & \cdots&\cdots & 1 & 0\\
			0 & 0 & \cdots&\cdots & 0 & 1
		\end{pmatrix},
	\end{align*}
	it is evident that $T_{W, B}=T_{I_w, B}\circ T_{W, \mathbf{0}_{w}}$ can be approximated by $\mathrm{CRB}_{w, 1}^{\sigma_0}$.
	
	For the case (c), since the row-swap permutation matrix swapping rows $i$ and $j$ $(i<j)$ satisfies
	\begin{align*}
		W=(I_w+E_{w\times w, j, i}) * (I_w-E_{w\times w, i, j}) * (I_w - 2E_{w\times w, j, j}) * (I_w - E_{w\times w, j,i}),
	\end{align*}
	and every permutation matrix can be written as a product of such matrices, we can show that for every permutation matrix $W$ and $B\in \mathbb{R}^w$, $T_{W, B}=T_{I_w, B}\circ T_{W, \mathbf{0}_w}$ can be approximated by $\mathrm{CRB}_{w, 1}^{\sigma_0}$.
	\subsubsection{Proof of Proposition \ref{prop_ReLU-like}}\label{appendix_proof_ReLU-like}
	\begin{itemize}
		\item $\sigma=\mathrm{ELU}$, $\mathrm{CELU}$, $\mathrm{SELU}$:\\
		Fix a compact interval $K=[k_1, k_2]$. If $x\in K$ and $x>0$, for any $k, n\in \mathbb{R}_+$, we have $\frac{\mathrm{ELU}_k (nx)}{n}=x$. If $x\in K$ and $x\leq 0$, it is easy to show that 
		\begin{align*}
			\lim_{n\to +\infty} \frac{\mathrm{ELU}_k (nx)}{n} &= \lim_{n\to +\infty} \frac{ke^{nx}-k}{n}=0,
		\end{align*} 
		so for any $\varepsilon>0$, we can choose a sufficiently large $n\in\mathbb{N}_+$ such that
		\begin{align*}
			\vert\vert T_{(\frac{1}{n}), (0)}\circ \sigma_i \circ T_{(n), (0)} - \mathrm{ReLU}\vert\vert_{\mathrm{sup}, K}<\varepsilon,
		\end{align*}
		then we can conclude that $\mathrm{ELU}$ is ReLU-like. The proofs of the cases that $\sigma=\mathrm{CELU}, \mathrm{SELU}$ are handled similarly.
		\item $\sigma=\mathrm{Softplus}, \mathrm{GELU}, \mathrm{SiLU}, \mathrm{Mish}$:\\
		These cases follow from the proof of the Lemma 23 of \citep{Kim}.
		
		\item $\sigma=\mathrm{ReLU6}, \mathrm{Softshrink}, \mathrm{HardSigmoid}, \mathrm{HardTanh}$:\\
		For the fixed compact set $K=[k_1, k_2]$, we first prove the case that $\sigma=\mathrm{ReLU6}$. If $k_1\geq0$, choose $a\in\mathbb{R}_+, b\in \mathbb{R}$ that $[ak_1+b, ak_2+b]\subset [0, 6]$, then
		\begin{align*}
			T_{(\frac{1}{a}), (-\frac{b}{a})} \circ \sigma \circ T_{(a), (b)}(x) -\mathrm{ReLU}(x)=0, \enspace x\in K.
		\end{align*}
		If $k_2\leq0$, then 
		\begin{align*}
			T_{(1), (0)} \circ \sigma \circ T_{(1), (0)}(x) -\mathrm{ReLU}(x)=0, \enspace x\in K.
		\end{align*}
		If $k_1<0< k_2$, there exists $a\in \mathbb{R}_+$ such that $a<\frac{6}{k_2}$, then 
		\begin{align*}
			T_{(\frac{1}{a}), (0)} \circ \sigma \circ T_{(a), (0)}(x) - \mathrm{ReLU}(x) = 0, \enspace x\in K,
		\end{align*}
		which completes the proof for the case of $\sigma=\mathrm{ReLU6}$. The proofs of the cases that $\sigma=\mathrm{Softshrink}, \mathrm{HardSigmoid}, \mathrm{HardTanh}$ are similar to the case of $\sigma=\mathrm{ReLU6}$.
		\item $\sigma=\mathrm{HardSwish}$:\\
		For any compact set $K$, we can find $[-k_1, k_1]$ such that $K\subset[-k_1, k_1], k_1>0$. Since for $n>\frac{3}{k_1}$, it is easy to verify that 
		\begin{align*}
			\vert\vert T_{(\frac{1}{n}), (0)}\circ \sigma \circ T_{(n), (0)} - \mathrm{ReLU}\vert\vert_{\mathrm{sup}, [-k_1, k_1]}<\frac{3}{n},
		\end{align*} 
		so for any $\varepsilon>0$, we can choose a sufficiently large $n\in \mathbb{N}_+$ such that 
		\begin{align*}
			\vert\vert T_{(\frac{1}{n}), (0)}\circ \sigma \circ T_{(n), (0)} - \mathrm{ReLU}\vert\vert_{\mathrm{sup}, [-k_1, k_1]}<\varepsilon,
		\end{align*} 
		which completes the proof.
	\end{itemize}	
	
	\subsubsection{Proof of Theorem \ref{thm_slopeline}}\label{appendix_proof_slopeline}
	We first explain how a scalar construction can be lifted coordinate-wise to $\mathbb{R}^w$. Let $s:\mathbb{R}\to\mathbb{R}$ that acts only on the $i$-th coordinate $x\to (x_1,\cdots, s(x_i), \cdots,x_w)$, then the one dimensional map can be lifted to the coordinate-wise map $x\to (s(x_1),\ldots,s(x_w))$.
	
	Thus, in the following proof, it is enough to construct the desired scalar map in one dimension, then we apply the same construction successively to each coordinate gives the corresponding map on $\mathbb{R}^w$.
	
	We first suppose that $\sigma=\mathrm{LeakyReLU}_k$ for some fixed $k\in (0, 1)\cup (1, +\infty)$, then
	$$
	x+b\mathrm{LeakyReLU}_k(x)=
	\left\{
	\begin{aligned}
		(1+b)x,& \enspace x> 0,\\
		(1+kb)x, & \enspace x\leq0,
	\end{aligned}
	\right.
	$$	
	For $b\neq -1$, the ratio $\frac{1+kb}{1+b}=k+\frac{1-k}{1+b}$ ranges over $(-\infty, k)\cup(k, +\infty)$. 
	
	Therefore, for $a\neq k$, we can show that
	\begin{align*}
		f_a &=\frac{1}{1+b}\mathrm{Id}_w\circ(\mathrm{Id}_w + T_{b*E_{w\times 1, 1, 1}, \mathbf{0}_{w\times 1}}\circ \mathrm{LeakyReLU}_k \circ T_{E_{1\times w, 1, 1}, 0})\\
		&\circ (\mathrm{Id}_w + T_{b*E_{w\times 1, 2, 1}, \mathbf{0}_{w\times 1}}\circ \mathrm{LeakyReLU}_k \circ T_{E_{1\times w, 1, 2}, 0})
		\circ \cdots \\
		&\circ (\mathrm{Id}_w + T_{b*E_{w\times 1, w, 1}, \mathbf{0}_{w\times 1}}\circ \mathrm{LeakyReLU}_k \circ T_{E_{1\times w, 1, w}, 0}).
	\end{align*}
	By employing Theorem \ref{thm_appro_affine}, $\frac{1}{1+b}\mathrm{Id}_w \underset{\mathrm{sup}}{\prec} \mathrm{CRB}_{w, 1}^{\mathrm{LeakyReLU}_k},$
	thus we have $f_a\underset{\mathrm{sup}}{\prec} \mathrm{CRB}^{\mathrm{LeakyReLU}_k}_{w, 1}$.
	
	For $a=k$, by Proposition \ref{prop_LRk_to_LR}, it follows that $f_a = \mathrm{LeakyReLU}_a \underset{\mathrm{sup}}{\prec} \mathrm{CRB}_{w, 1}^{\mathrm{LeakyReLU}_k}.$
	
	Next, we suppose that $\sigma={\mathrm{ReLU}}$, since for any $b\in \mathbb{R}$,
	\begin{align*}
		x+b\mathrm{ReLU}(x)=\left\{
		\begin{aligned}
			(1+b)x&, \enspace x>0,\\
			x&, \enspace x\leq 0,
		\end{aligned}
		\right. ,
	\end{align*}
	then factor $\frac{1}{1+b}\in (-\infty, 0)\cup (0, +\infty)$. For $a\neq 0$, choose $b=\frac{1}{a}-1$. Then
	\begin{align*}
		f_a =\frac{1}{1+b}\mathrm{Id}_w \circ (\mathrm{Id}_w + T_{b*E_{w\times 1, 1, 1}, \mathbf{0}_{w\times 1}}\circ \mathrm{ReLU} \circ T_{E_{1\times w, 1, 1}, 0})
		\circ\\ (\mathrm{Id}_w + T_{b*E_{w\times 1, 2, 1}, \mathbf{0}_{w\times 1}}\circ \mathrm{ReLU} \circ T_{E_{1\times w, 1, 2}, 0})
		\circ\\
		\cdots
		\circ (\mathrm{Id}_w + T_{b*E_{w\times 1, w, 1}, \mathbf{0}_{w\times 1}}\circ \mathrm{ReLU} \circ T_{E_{1\times w, 1, w}, 0}),
	\end{align*}
	so we can conclude that $f_a\underset{\mathrm{sup}}{\prec} \mathrm{CRB}^{\mathrm{ReLU}}_{w, 1}$
	by employing Theorem \ref{thm_appro_affine}. For $a=0$, since for any compact set $K$ and $\varepsilon>0$, we can find $a>0$ such that $\vert\vert f_a-f_0\vert\vert_{\mathrm{sup}, K}<\varepsilon,$
	so we can conclude that $f_0\underset{\mathrm{sup}}{\prec} \{f_a\vert a\in \mathbb{R}_+\} \underset{\mathrm{sup}}{\prec} \mathrm{CRB}^{\mathrm{ReLU}}_{w, 1}.$
	
	Finally, suppose that $\sigma$ is ReLU-like. Let $K\subset\mathbb{R}^w$ be compact and choose $k_1<k_2$ such that $K\subset [k_1, k_2]^w$. By Definition \ref{def_ReLU-like}, for any $\varepsilon>0$ and any fixed parameter $b\neq-1$, there exist $a_1, a_2, b_1, b_2\in\mathbb{R}$ such that 
	\begin{align*}
		&\vert\vert \frac{1}{1+b}\mathrm{Id}_w \circ (\mathrm{Id}_w+T_{a_2E_{w\times 1, 1, 1}, b_2E_{w\times 1,1 , 1}} \circ \sigma \circ T_{a_1E_{1\times w, 1, 1}, b_1})\circ \cdots \circ \\
		&(\mathrm{Id}_w+T_{a_2E_{w\times 1, w, 1}, b_2E_{w\times 1, w, 1}} \circ \sigma \circ T_{a_1E_{1\times w, 1, w}, b_1})  - f_{\frac{1}{1+b}}\vert\vert_{\mathrm{sup}, K}< \varepsilon,
	\end{align*}
	Thus the ReLU case carries over by replacing each ReLU block with its ReLU-like approximation.
	
	\subsection{Parameter Settings of Networks}\label{appendix_ResNet_parameter}
	The parameters for the network in Figure \ref{fig_Polyline} and Figure \ref{fig_Hwang} are presented. For Figure \ref{fig_Hwang}, elements in matrices are retained to six decimal places when their absolute values exceed $10^{-6}$. All the parameters are listed in the following tables.
		\begin{table}[!htbp] 
		\caption{Parameters of the initial and final affine transformations.} 
		\centering 
		\scriptsize
		\setlength{\tabcolsep}{4pt} 
		\renewcommand{\arraystretch}{1.65} 
		\begin{tabular}
			{ @{} C{0.20\textwidth} C{0.34\textwidth} C{0.34\textwidth} @{}} 
			\toprule Layer & Weight matrix & Bias vector\\ 
			\midrule 
			Input affine map & $ W_\alpha= \begin{pmatrix}
				10 \\ 0  
			\end{pmatrix} $ & $ B_\alpha= \begin{pmatrix} 
				-1\\ 0 
			\end{pmatrix} $ \\
			\midrule 
			Output affine map & $ W_\beta= \begin{pmatrix}
				1 & 0 \\
				0 & 1
			\end{pmatrix} $ & $ B_\beta= \begin{pmatrix}
				0\\ 0
			\end{pmatrix} $ \\
			\bottomrule
		\end{tabular} 
		\label{table_appendix_polyline_input_output_parameters} 
	\end{table}
	\begin{table}[!htbp] 
		\caption{Parameters of the residual blocks.} 
		\centering 
		\scriptsize
		\setlength{\tabcolsep}{3pt} 
		\renewcommand{\arraystretch}{1.75} 
		\begin{tabular}
			{ @{} C{0.08\textwidth} C{0.36\textwidth} C{0.12\textwidth} C{0.20\textwidth} C{0.20\textwidth} @{}}
			\toprule 
			Block & $W_{ia}$ & $B_{ia}$ & $W_{ib}$ & $B_{ib}$\\
			\midrule
			$i=1$ & $ \begin{pmatrix}
				1 & 0 
			\end{pmatrix} $ & $ \begin{pmatrix} 
				-2 
			\end{pmatrix} $ & $ \begin{pmatrix}
				-1\\ 1 
			\end{pmatrix} $ & $ \begin{pmatrix}
				0\\ 0 
			\end{pmatrix} $ \\
			\midrule
			$i=2$ & $
			\begin{pmatrix}
				0 & 1  
			\end{pmatrix} $ & $ \begin{pmatrix}
				-2 
			\end{pmatrix} $ & $ \begin{pmatrix}
				-1\\ -1
			\end{pmatrix} $ & $ \begin{pmatrix}
				0\\ 0
			\end{pmatrix} $ \\
			\midrule
			$i=3$ & $ \begin{pmatrix}
				-1 & 1
			\end{pmatrix} $ & $ \begin{pmatrix}
				-2
			\end{pmatrix} $ & $ \begin{pmatrix}
				1\\ -1
			\end{pmatrix} $ & $ \begin{pmatrix}
				0\\ 0
			\end{pmatrix} $ \\
			\bottomrule
		\end{tabular}
		\label{table_appendix_polyline_block_parameters} 
	\end{table}

\begin{table}[!htbp]
	\caption{Parameters of the initial and final affine transformations.}
	\centering
	\scriptsize
	\setlength{\tabcolsep}{4pt}
	\renewcommand{\arraystretch}{1.65}
	\begin{tabular}
		{@{} C{0.20\textwidth} C{0.34\textwidth} C{0.34\textwidth} @{}}
		\toprule
		Layer & Weight matrix & Bias vector\\
		\midrule
		Input affine map
		&
		$W_\alpha=
		\begin{pmatrix}
			-0.779449 & -0.624719\\
			1.786707 & -0.402260\\
			-0.353954 & -1.217714
		\end{pmatrix}$
		&
		$B_\alpha=
		\begin{pmatrix}
			-0.270580\\
			0.309483\\
			-0.331742
		\end{pmatrix}$
		\\
		\midrule
		Output affine map
		&
		$W_\beta=
		\begin{pmatrix}
			-0.078566 &  0.284202 & -0.575851\\
			-1.070493 & -0.740750 & -0.383484
		\end{pmatrix}$
		&
		$B_\beta=
		\begin{pmatrix}
			0.758574\\
			0.799097
		\end{pmatrix}$
		\\
		\bottomrule
	\end{tabular}
	\label{table_appendix_disk_input_output_parameters}
\end{table}

\begin{table}[!htbp]
	\caption{Parameters of the residual blocks.}
	\centering
	\scriptsize
	\setlength{\tabcolsep}{3pt}
	\renewcommand{\arraystretch}{1.75}
	\begin{tabular}
		{@{} C{0.08\textwidth} C{0.36\textwidth} C{0.12\textwidth}
			C{0.20\textwidth} C{0.20\textwidth} @{}}
		\toprule
		Block & $W_{ia}$ & $B_{ia}$ & $W_{ib}$ & $B_{ib}$\\
		\midrule
		$i=1$
		&
		$\begin{pmatrix}
			-0.723092 & -0.534896 & -1.107739
		\end{pmatrix}$
		&
		$\begin{pmatrix}
			-0.397597
		\end{pmatrix}$
		&
		$\begin{pmatrix}
			-0.354179\\
			-0.538340\\
			1.114135
		\end{pmatrix}$
		&
		$\begin{pmatrix}
			-0.234547\\
			-0.105348\\
			0.255320
		\end{pmatrix}$
		\\
		\midrule
		$i=2$
		&
		$\begin{pmatrix}
			-0.337070 & -1.026687 & -0.603239
		\end{pmatrix}$
		&
		$\begin{pmatrix}
			-0.006781
		\end{pmatrix}$
		&
		$\begin{pmatrix}
			1.445647\\
			1.391393\\
			0.139565
		\end{pmatrix}$
		&
		$\begin{pmatrix}
			0.793295\\
			0.111727\\
			-0.807420
		\end{pmatrix}$
		\\
		\midrule
		$i=3$
		&
		$\begin{pmatrix}
			0.892487 & -0.911565 & -0.156742
		\end{pmatrix}$
		&
		$\begin{pmatrix}
			-0.107793
		\end{pmatrix}$
		&
		$\begin{pmatrix}
			-0.615240\\
			-0.283921\\
			-1.029730
		\end{pmatrix}$
		&
		$\begin{pmatrix}
			0.926004\\
			-0.445451\\
			0.386327
		\end{pmatrix}$
		\\
		\midrule
		$i=4$
		&
		$\begin{pmatrix}
			0.218183 & -0.309212 & 0.822697
		\end{pmatrix}$
		&
		$\begin{pmatrix}
			0.104323
		\end{pmatrix}$
		&
		$\begin{pmatrix}
			-1.179464\\
			-0.700994\\
			0.049330
		\end{pmatrix}$
		&
		$\begin{pmatrix}
			-0.841900\\
			-0.999599\\
			1.042684
		\end{pmatrix}$
		\\
		\midrule
		$i=5$
		&
		$\begin{pmatrix}
			0.352745 & -0.053745 & 0.608389
		\end{pmatrix}$
		&
		$\begin{pmatrix}
			0.649316
		\end{pmatrix}$
		&
		$\begin{pmatrix}
			0.563700\\
			-0.246493\\
			0.474222
		\end{pmatrix}$
		&
		$\begin{pmatrix}
			0.020805\\
			0.580137\\
			-0.341635
		\end{pmatrix}$
		\\
		\bottomrule
	\end{tabular}
	\label{table_appendix_disk_block_parameters}
\end{table}
	
\clearpage
\section*{Declarations}
\textbf{Acknowledgments}: We thank the School of Mathematics and Statistics, Huazhong University of Science and Technology (Wuhan, 430074, China), and the Hubei Key Laboratory of Engineering Modeling and Scientific Computing (Wuhan, 430074, China) for their support.

\textbf{Funding}: This work was supported by the National Natural Science Foundation of China (Grant No. 12531017). The funder had no role in the study design, analysis or interpretation of the results, preparation of the manuscript, or the decision to submit the work for publication.

\textbf{CRediT authorship contribution statement}:

Qi Zhou: Conceptualization, methodology, formal analysis,
investigation, software, visualization, writing -- original draft.

Xuan Zhou: Conceptualization, methodology, formal analysis,
validation, writing -- review \& editing.

Xiao-Song Yang: Conceptualization, supervision, project
administration, funding acquisition, writing -- review \& editing.

\textbf{Declaration of competing interest}:
The authors declare that they have no known competing financial
interests or personal relationships that could have appeared to
influence the work reported in this paper.

\textbf{Data and code availability}:
No external datasets were used in this study. All network parameters
used in the numerical case studies are provided in the manuscript and
its appendices. The source code used to generate the numerical results
and figures is available from the corresponding author upon reasonable
request.

\textbf{Declaration of generative AI and AI-assisted technologies in
	the manuscript preparation process}:
During the preparation of this work, the authors used ChatGPT and Codex
(OpenAI) to assist with language editing and manuscript organization. After using these
tools, the authors independently reviewed, verified, and revised all
mathematical arguments, numerical results, references, and text as
needed and take full responsibility for the content of the published
article.
\bibliographystyle{elsarticle-harv} 
\bibliography{ReferenceFile_ResNet_Neural_Networks}





\end{document}